\def\preprint{}
\preprint
\def\eqref#1{equation~\ref{#1}}
\def\1{\bm{1}}
\DeclareMathAlphabet{\mathsfit}{\encodingdefault}{\sfdefault}{m}{sl}
\SetMathAlphabet{\mathsfit}{bold}{\encodingdefault}{\sfdefault}{bx}{n}
\newcommand{\softmax}{\mathrm{softmax}}
\newcommand{\KL}{D_{\mathrm{KL}}}
\DeclareMathOperator*{\argmax}{arg\,max}
\DeclareMathOperator*{\argmin}{arg\,min}
\theoremstyle{plain}
\newtheorem{theorem}{Theorem}[section]
\newtheorem{proposition}[theorem]{Proposition}
\theoremstyle{definition}
\theoremstyle{remark}
\newcommand{\rmref}{\mathrm{ref}}
\newcommand{\rmpt}{\mathrm{pt}}
\newcommand{\crossentropy}{\mathrm{CE}}
\newcommand{\WIP}[1]{\textcolor[HTML]{00a000}{{#1}}}
\newcommand{\COMMENTOUT}[1]{}
\newcommand{\textbad}[1]{\textcolor{red}{#1}}
\newcommand{\textgood}[1]{\textcolor{green!60!black}{#1}}
\newcommand{\wt}{\widetilde}
\definecolor{ptcolor}{HTML}{ff7f0e}
\definecolor{prtcolor}{HTML}{1f77b4}
\definecolor{eftcolor}{HTML}{2ca02c}
\definecolor{ftcolor}{HTML}{9467bd}
\date{}
\newcommand{\llama}{\textsc{Llama}\xspace}
\icmltitlerunning{Portable Reward Tuning}
\begin{document}

\twocolumn[
\icmltitle{Portable Reward Tuning: \\ Towards Reusable Fine-Tuning across Different Pretrained Models}

\icmlsetsymbol{equal}{*}

\begin{icmlauthorlist}
\icmlauthor{Daiki Chijiwa}{equal,cdlab}
\icmlauthor{Taku Hasegawa}{equal,hulab}
\icmlauthor{Kyosuke Nishida}{hulab}
\icmlauthor{Kuniko Saito}{hulab}
\icmlauthor{Susumu Takeuchi}{cdlab}
\end{icmlauthorlist}

\icmlaffiliation{cdlab}{NTT Computer and Data Science Laboratories, NTT Corporation}
\icmlaffiliation{hulab}{NTT Human Informatics Laboratories, NTT Corporation}

\icmlcorrespondingauthor{Daiki Chijiwa}{daiki.chijiwa@ntt.com}
\icmlcorrespondingauthor{Taku Hasegawa}{taku.hasegawa@ntt.com}

\vskip 0.3in
]

\printAffiliationsAndNotice{\icmlEqualContribution} %

\begin{abstract}
While foundation models have been exploited for various expert tasks through fine-tuning, any foundation model will become outdated due to its old knowledge or limited capability.
Thus the underlying foundation model should be eventually replaced by new ones, which leads to repeated cost of fine-tuning these new models.
Existing work addresses this problem by inference-time tuning, i.e., modifying the output probabilities from the new foundation model with the outputs from the old foundation model and its fine-tuned model, which involves an additional overhead in inference by the latter two models.
In this paper, we propose a new fine-tuning principle, Portable Reward Tuning (PRT), that reduces the inference overhead by its nature, based on the reformulation of fine-tuning as the reward maximization.
Specifically, instead of fine-tuning parameters of the foundation models, PRT trains the reward model explicitly through the same loss function as in fine-tuning.
During inference, the reward model can be used with any foundation model (with the same set of vocabularies or labels) through the formulation of reward maximization.
Experimental results, covering both vision and language models, demonstrate that the PRT-trained model can achieve comparable accuracy to the existing work of inference-time tuning, with less inference cost.
\end{abstract}

\section{Introduction}

\begin{figure*}[t]
    \centering
    \includegraphics[width=0.78\linewidth,height=4cm]{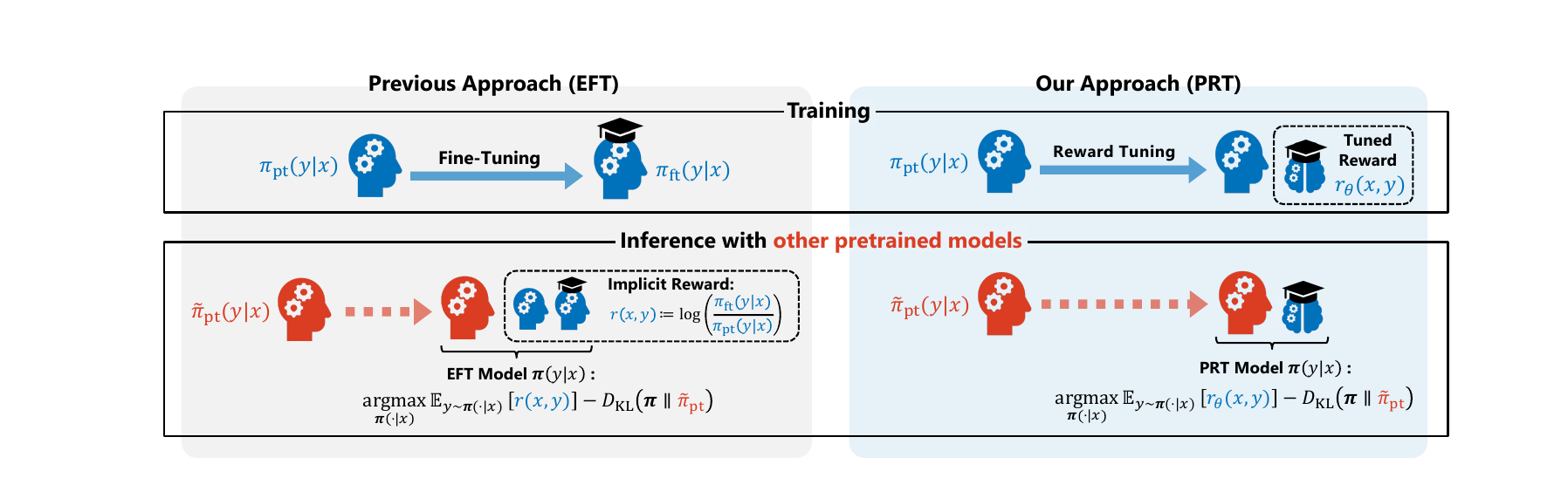}
    \caption{An overview of our approach of portable reward tuning (PRT) compared with the previous work of inference-time tuning, emulated fine-tuning (EFT). In training phase, we tune the reward model $r_\theta(x,y)$ instead of tuning a given pretrained model, through the same loss and dataset, which leads to the reduced cost in inference with another pretrained model.}
\end{figure*}

Foundation models, or simply pretrained models, play a central role in the recent development of artificial intelligence~\citep{bommasani2021opportunities}.
They are typically large-scale neural networks pretrained on massive amounts of data from the Internet, which makes them generalizable to various downstream tasks, such as CLIP~\citep{radford2021learning} for visual recognition and \llama-series~\citep{touvron2023llama,touvron2023llama2} for language generation.
Even though foundation models are somewhat already capable of handling downstream tasks, we can further bring out their potential ability by {\it fine-tuning} for each task, i.e., additional optimization of the foundation models on supervised data.
Furthermore, fine-tuning can also be used for aligning the behavior of foundation models to follow human instructions or preferences, by instruction tuning or reinforcement learning from human feedback (RLHF;~\citet{christiano2017deep}).

Although fine-tuning has been the standard principle for tuning foundation models, there still remains an overlooked problem in the long term: the underlying foundation models should be eventually replaced by other (often newer) foundation models for various reasons, such as their outdated knowledge or limited capability, which requires the new foundation models to be fine-tuned on every task again.
For example, (i) in the case of open-source CLIP models~\citep{schuhmann2021laion,schuhmann2022laion}, an early model was initially trained on a dataset with 400 million image-text pairs, but later it was replaced by new versions trained on 2 billion and 5 billion pairs; (ii) in the case of the \llama series, both major updates (e.g., \llama-2 to \llama-3) and minor updates (e.g., \llama-3.1 to \llama-3.2) have occurred.
Every time such an update occurs, one may have to fine-tune the new foundation model again, which poses an important issue of the repeated training cost.

One promising approach for this challenge is {\it inference-time tuning}~\citep{mitchell2024emulator,liu2024tuning}, which emulates fine-tuning of the new foundation model even with a possibly different architecture, by mixing three output probabilities from the old foundation model, its fine-tuned model, and the new foundation model.
More specifically, the approach first defines an {\it implicit reward} for each task as the logarithmic density ratio of the output probabilities from the (old) foundation and its fine-tuned models.
Then, it reinterprets fine-tuning as maximization of the implicit reward with penalizing the deviation from the foundation model in Kullback-Leibler (KL) divergence, whose closed-form solution~\citep{ziebart2010modeling} can be computed efficiently and coincides with the fine-tuned model itself in this formulation.
Based on this viewpoint, inference-time tuning is performed by solving the reward maximization, with the foundation model in the KL penalty replaced by the new foundation model.
This approach is promising since it can avoid the repeated fine-tuning associated with replacement of foundation models,
but instead it introduces another issue of inference cost, i.e., requiring three models to run at each inference step.

In this paper, we propose a new principle as an alternative to fine-tuning, called {\it Portable Reward Tuning} (PRT),  that is more suitable for inference-time tuning.
The proposed principle is straightforward from the above viewpoint of fine-tuning as reward maximization: we introduce an auxiliary model as an {\it explicit reward}, and redefine a fine-tuned model, which we call a PRT model, as the closed-form solution that maximizes the explicit reward with KL regularization to a given foundation model.
For training, instead of directly optimizing the foundation model itself, the explicit reward model is trained so that the corresponding PRT model minimizes the same cross-entropy loss as in usual fine-tuning.
For inference, the PRT model is (re-)constructed from a pair of the explicit reward model and a given (possibly different) foundation model to serve as the corresponding fine-tuned model.
By its nature, PRT enables us to freely update the underlying foundation model only with a single model overhead, while the existing methods based on standard fine-tuning still require additional two models to compute the implicit reward for inference-time tuning.

Our contributions are as follows:
\begin{itemize}
    \item We derive portable reward tuning (PRT) as an alternative to conventional fine-tuning, and established its basic theoretical properties including (i) a natural interpretation of its training objective as reward learning, (ii) evaluation of how the PRT model changes its behavior by replacement of the underlying foundation model in terms of KL divergence, (iii) generalization analysis from the PAC-Bayesian perspective.
    \item Using both vision and language models, we conduct experiments on inference-time tuning in realistic scenarios like updating the pretrained knowledge and up-scaling the size of foundation models. We confirmed that PRT models can achieve comparable accuracy to the baseline of emulated fine-tuning,
    even with less inference cost in terms of both speed and memory usage.
\end{itemize}

\section{Background}

In this section, we summarize the background of this work.
In Sec.~\ref{sec:kl-regularized reward maximization}, we review the basics of KL-regularized Reward Maximization from the literature of Decision-Making Theory and Reinforcement Learning.
In Sec.~\ref{sec:emulated fine tuning}, we briefly explain previous work on inference-time tuning with its formulation based on reward maximization.

\subsection{KL-Regularized Reward Maximization}\label{sec:kl-regularized reward maximization}

Let $\mathcal{S}$ be a state space, $\mathcal{A}$ be an action space, and a policy model $\pi(a|s)$ be the probability that the action $a\in\mathcal{A}$ is chosen by some probabilistic mapping $\mathcal{S}\to\mathcal{A}$ given the state $s\in\mathcal{S}$.
Furthermore, we consider a reward function $r(s,a):\mathcal{S}\times\mathcal{A}\to \mathbb{R}$ and a reference model $\pi_\mathrm{ref}(a|s)$.
Then the goal of KL-regularized reward maximization is to maximize the expected reward $\mathbb{E}_{a\sim \pi(a|s)}[r(s,a)]$ with a soft constraint that keeps $\pi(a|s)$ close to the reference model $\pi_\rmref(a|s)$, in the sense of KL divergence, as follows:
\begin{align}
\max_{\pi(\text{-}|s)} \  &\mathbb{E}_{a\sim\pi(a|s)}\left[r\left(s,a\right)\right] \nonumber\\
&- \lambda \KL\left(\pi(\text{-}|s) \parallel \pi_\rmref(\text{-}|s)\right),\label{eq:objective in KL-regularized reward maximization}
\end{align}
where $\KL(\pi(\text{-}|s) \parallel \pi_\rmref(\text{-}|s)) := \sum_a \pi(a|s)\log(\pi(a|s)/\allowbreak\pi_\rmref(a|s))$.
The closed-form solution of this problem is well-known~\citep{ziebart2010modeling,korbak2022rl} as an extension of maximum entropy principle~\citep{ziebart2008maximum}, given by
\begin{equation}\label{eq:closed solution in kl-regularized reward maximization}
    \pi(a|s) = \frac{1}{Z(s)} \pi_\rmref(a|s) \exp\left( \frac{1}{\lambda} r(s,a) \right),
\end{equation}
where $Z(s) := \sum_{a\in\mathcal{A}} \pi_\rmref(a|s) \exp\left( \frac{1}{\lambda} r(s,a) \right)$ is the normalization factor.

\subsection{Inference-Time Tuning}\label{sec:emulated fine tuning}

Let $\pi_\theta(y|x)$ be a classification model defined as $\pi_\theta(y|x) := \softmax(f(x;\theta))$, where $x\in\mathcal{X}$ is an input, $y\in\mathcal{Y}=\{c
_1,\cdots,c_L\}$ a classification label,  $f(x;\theta) \in \mathbb{R}^L$ a neural network parameterized by $\theta\in\mathbb{R}^N$, and the $\softmax$ function\footnote{$\softmax(y_1, \cdots, y_L) := (e^{y_1} / \sum_i e^{y_i}, \cdots, e^{y_L} / \sum_i e^{y_i})$}.
Let $\pi_\mathrm{pt}(y|x) := \pi_{\theta^\mathrm{pt}}(y|x)$ be a pretrained model, and $\pi_\mathrm{ft}(y|x) := \pi_{\theta^\mathrm{ft}}(y|x)$ its fine-tuned model on some specific task.
Typical examples  include (i) in image classification, $x$ is an image, $y$ is its corresponding label and $f(x;\theta)$ is a CNN~\cite{bengio2017deep} or Vision Transformer~\cite{dosovitskiy2021an};
and (ii) in language generation, $x$ is a sequence of tokens $t_1\cdots t_{k}$, $y$ is the next token $t_{k+1}$ and $f_\theta(x)$ is a decoder-only Transformer~\cite{radford2018gpt1}.

Based on the theory of KL-regularized reward maximization, \citet{mitchell2024emulator} proposed an emulated fine-tuning (EFT) which views the fine-tuned model $\pi^\mathrm{ft}(y|x)$ as the solution of the following problem:
\begin{align}
    \max_{\pi(\text{-}|x)} \  & \mathbb{E}_{y\sim\pi(y|x)} \left[  \log\left( \frac{\pi_\mathrm{ft}(y|x)}{\pi_\mathrm{pt}(y|x)} \right)  \right] \nonumber\\
    &- \KL \big( \pi(\text{-}|x) \parallel \pi_\mathrm{pt}(\text{-}|x) \big),\label{eq:emulated fine-tuning}
\end{align}
where $\log\left( \pi_\mathrm{ft}(y|x) / \pi_\mathrm{pt}(y|x) \right)$ is called an {\it implicit reward}, expected to be the reward function that reflects the task-specific preference for $y\in\mathcal{Y}$.
Indeed,  by applying (\ref{eq:closed solution in kl-regularized reward maximization}), the closed-form solution $\pi(y|x)$ of (\ref{eq:emulated fine-tuning}) is given by 
\begin{align}
    \pi(y|x) &= \frac{1}{Z(x)} \pi_\mathrm{pt}(y|x)\exp\left( \log\left( \frac{\pi_\mathrm{ft}(y|x)}{\pi_\mathrm{pt}(y|x)} \right) \right) \nonumber\\
    &= \pi_\mathrm{ft}(y|x) \nonumber
\end{align}
Building on this fact, \citet{mitchell2024emulator} also proposed  what they call scale decoupling, which replaces the pretrained model in the KL constraint by different pretrained model from the one appeared in the implicit reward. In other words, they consider the following problem:
\begin{align}
    \max_{\pi(\text{-}|x)} \  & \mathbb{E}_{y\sim\pi(y|x)}  \left[ \log\left( \frac{\pi_\mathrm{ft}(y|x)}{\pi_\mathrm{pt}(y|x)} \right)  \right] \nonumber\\
    &- \KL \big( \pi(\text{-}|x) \parallel \widetilde{\pi}_\mathrm{pt}(\text{-}|x) \big),\label{eq:scale decoupling}
\end{align}
where $\widetilde{\pi}_\mathrm{pt}(\text{-}|x)$ is another pretrained model that is different from $\pi_\mathrm{pt}(\text{-}|x)$, possibly with different network architecture.
The closed-form solution of  (\ref{eq:scale decoupling}) can be considered as the emulation result of fine-tuning the new pretrained model $\widetilde{\pi}_\mathrm{pt}(y|x)$ through the implicit reward on the specific task.
Also, \citet{liu2024tuning} proposed almost the same approach called proxy-tuning.

\section{Portable Reward Tuning}

In this section, we develop a new fine-tuning framework, called portable reward tuning (PRT), with both training and inference algorithms based on KL-regularized reward maximization.
Throughout this section, we follow the same classification setting as in Sec.~\ref{sec:emulated fine tuning}, which includes both image classification and language generation tasks.

\subsection{Training of PRT}\label{section:training of PRT}

\paragraph{Setup.}
Let $r(x;\theta) = (r_1(x; \theta), \cdots, r_L(x; \theta)) \in \mathbb{R}^L$ be a neural network with $L$-dimensional outputs. 
We refer to the $i$-th component of $r(x;\theta)$ as the reward value, denoting $r_\theta(x, c_i) := r_i(x;\theta)$ for an input $x$ and the $i$-th label $c_i\in\mathcal{Y}$.
We also assume that a pretrained model $\pi_\mathrm{pt}$ and a dataset of input-label pairs $S = \{(x_1, y_1),\cdots,(x_{|S|}, y_{|S|}) \}$ for some specific task are given.

\paragraph{Formulation.}
In our PRT framework, we optimize the reward model $r_\theta(x,y)$, instead of directly optimizing the given pretrained model $\pi_\mathrm{pt}(y|x)$.
For a little while, assume that we already have the learned reward model $r_\theta(x,y)$ for the given specific task.
Then, the desired classification model $\pi_\theta(y|x)$ is defined as the solution of reward maximization with KL-constraint to the pretrained model:
\begin{align}\label{eq:reward maximization in portable-tuning}
    \max_{\pi(\text{-}|x)} \ &\mathbb{E}_{a\sim\pi(y|x)}\left[r_\theta(x,y)\right] \nonumber\\
    &- \lambda \KL\left(\pi(\text{-}|x) \parallel \pi_\mathrm{pt}(\text{-}|x)\right).
\end{align}
As we already discussed in Sec.~\ref{sec:kl-regularized reward maximization},  the closed-form solution for this maximization problem is provided by
\begin{align}\label{eq:closed solution in portable-tuning}
    \pi_\theta(y|x) = \frac{1}{Z_\theta(x)} \pi_\mathrm{pt}(y|x) \exp \left( \frac{1}{\lambda}r_\theta(x, y) \right),
\end{align}
where $Z_\theta(x) := \sum_y \pi_\mathrm{pt}(y|x) \exp(r_\theta(x,y) / \lambda)$ is the normalization factor.
We call this $\pi_\theta(y|x)$ a PRT model (for training) with the reward $r_\theta(x,y)$ and the pretrained model $\pi_\mathrm{pt}(y|x)$.

Although it is somewhat obvious, the following proposition guarantees that the above PRT models are equivalent to standard fine-tuned models in terms of their expressiveness:
\begin{proposition}
    There is a one-to-one correspondence between fine-tuned models and rewards, which preserves their accuracy:
    \begin{align}
        & \  \{ \pi_{\mathrm{ft}}(y|x): \text{ fine-tuned models} \} \nonumber\\
        &\to \{ r(y|x): \text{rewards satisfying } \mathbb{E}_{y\sim\pi_\mathrm{pt}(y|x)}[r(x,y)]=1  \} \nonumber
    \end{align}
    where $\pi_{\mathrm{ft}}(y|x)$ is mapped to the implicit reward $\log(\pi_{\mathrm{ft}}(y|x) / \pi_{\mathrm{pt}}(y|x))$.
\end{proposition}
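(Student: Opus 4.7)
The plan is to exhibit an explicit inverse and verify the bijection by direct unwinding of definitions. The forward map $\Phi$ sends a fine-tuned model $\pi_{\mathrm{ft}}$ to the implicit reward $r_{\mathrm{ft}}(x,y) := \log(\pi_{\mathrm{ft}}(y|x)/\pi_{\mathrm{pt}}(y|x))$, and the backward map $\Psi$ sends a reward $r$ to $\Psi(r)(y|x) := \pi_{\mathrm{pt}}(y|x)\exp(r(x,y))$, which is exactly the closed-form solution in \eqref{eq:closed solution in portable-tuning} for $\lambda=1$ under the normalization that forces the partition function $Z_\theta(x)\equiv 1$.

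First, I would verify that both maps are well-defined. For $\Phi$, a one-line calculation gives
\begin{equation*}
\sum_y \pi_{\mathrm{pt}}(y|x)\,\exp\bigl(r_{\mathrm{ft}}(x,y)\bigr) \;=\; \sum_y \pi_{\mathrm{ft}}(y|x) \;=\; 1,
\end{equation*}
so $r_{\mathrm{ft}}$ lies in the target set (reading the stated condition as the natural normalization requirement, namely $\sum_y \pi_{\mathrm{pt}}(y|x)\exp(r(x,y))=1$ for every $x$). For $\Psi$, nonnegativity is immediate and normalization is exactly the hypothesis on $r$, so $\Psi(r)$ is indeed a valid probability distribution over $\mathcal{Y}$.

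Next, I would check both compositions are the identity. Direct substitution yields
\begin{equation*}
\Phi\!\circ\!\Psi(r)(x,y) \;=\; \log\!\frac{\pi_{\mathrm{pt}}(y|x)\exp(r(x,y))}{\pi_{\mathrm{pt}}(y|x)} \;=\; r(x,y),
\end{equation*}
and
\begin{equation*}
\Psi\!\circ\!\Phi(\pi_{\mathrm{ft}})(y|x) \;=\; \pi_{\mathrm{pt}}(y|x)\exp\!\log\!\frac{\pi_{\mathrm{ft}}(y|x)}{\pi_{\mathrm{pt}}(y|x)} \;=\; \pi_{\mathrm{ft}}(y|x),
\end{equation*}
establishing the bijection. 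Accuracy preservation then follows immediately: the PRT model $\pi_\theta$ defined by $r=\Phi(\pi_{\mathrm{ft}})$ via \eqref{eq:closed solution in portable-tuning} equals $\pi_{\mathrm{ft}}$ itself, so $\argmax_y\pi_\theta(y|x)=\argmax_y\pi_{\mathrm{ft}}(y|x)$ pointwise, and hence the top-1 accuracy on any labeled dataset coincides.

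The argument carries almost no technical content; the only real hurdle is interpreting the normalization condition. Taken literally, $\mathbb{E}_{y\sim\pi_{\mathrm{pt}}}[r(x,y)]=1$ does not match the image of $\Phi$ (which would give $-\KL(\pi_{\mathrm{pt}}\|\pi_{\mathrm{ft}})$, a quantity not generally equal to $1$). I would therefore proceed under the interpretation above and flag this as a presumable typo, since the identity $\Psi\!\circ\!\Phi=\mathrm{id}$ requires exactly $\sum_y \pi_{\mathrm{pt}}(y|x)\exp(r(x,y))=1$ to pin down the additive gauge freedom $r\mapsto r+c(x)$ that otherwise makes the map non-injective.
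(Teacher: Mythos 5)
Your proof is correct and follows essentially the same route as the paper's, which merely asserts in two sentences that the two compositions recover the original objects; you simply spell out the inverse map $\Psi(r)(y|x)=\pi_{\mathrm{pt}}(y|x)\exp(r(x,y))$ and verify both identities explicitly. Your flag on the normalization condition is also well taken: as literally written, $\mathbb{E}_{y\sim\pi_\mathrm{pt}(y|x)}[r(x,y)]$ equals $-\KL(\pi_\mathrm{pt}(\text{-}|x)\parallel\pi_\mathrm{ft}(\text{-}|x))\le 0$ on the image of the forward map and can never equal $1$, so the intended condition must be $\mathbb{E}_{y\sim\pi_\mathrm{pt}(y|x)}[\exp(r(x,y))]=1$, exactly as you assume.
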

\begin{proof}
The mapping preserves accuracy since the PRT model (\ref{eq:closed solution in portable-tuning}) with the implicit reward recovers the given model $\pi_\mathrm{ft}(y|x)$.
The invertibility also holds since the implicit reward of the PRT model (\ref{eq:closed solution in portable-tuning}) recovers the reward itself.
\end{proof}
In other words, if there is a fine-tuned model that achieves some accuracy, there is also the corresponding reward whose PRT model achieves the same accuracy.
Thus, the reparameterization in PRT (\ref{eq:closed solution in portable-tuning}) does not restrict its expressiveness, even compared to standard fine-tuning.

\paragraph{Training objective.}

The reward model $r_\theta(x,y)$ is trained by simply optimizing the same loss function $\mathcal{L}(\mathbf{p}, y^*)$ as in standard fine-tuning, with the true label $y^*$ for the input $x$ and the output distribution $\mathbf{p}:=\pi_\theta(\text{-}|x)$ of the PRT model.
In particular, throughout this paper, we minimize the cross-entropy loss $\mathcal{L}(\mathbf{p}, y^*) := \crossentropy(\mathbf{p}, y^*) := -\log\pi_\theta(y^*|x)$ over a given dataset $S \subset \mathcal{X}\times\mathcal{Y}$ to train the reward model:
\begin{align}
    &\argmin_\theta \frac{1}{|S|} \sum_{(x,y^*)\in S} \mathcal{L}(\mathbf{p}, y^*) \label{eq:loss for reward model}\\
    &= \argmax_\theta \sum_{(x,y^*)\in S} \log \pi_\theta(y^*|x) \nonumber\\
    &= \argmax_\theta \sum_{(x,y^*)\in S} r_\theta(x,y^*) -  V_\theta(x), \label{eq:objective for reward model}
\end{align}
where $V_\theta(x) := \lambda\log Z_\theta(x)$.
Notably, this maximization can be reinterpreted in terms of reward training.
Indeed, by applying Jensen's inequality, we obtain
\begin{align}
    &r_\theta(x,y^*) -  V_\theta(x) \nonumber\\
    &= r_\theta(x,y^*) - \lambda \log \mathbb{E}_{y\sim\pi_\mathrm{pt}(y|x)} \exp(r_\theta(y|x)/\lambda) \nonumber\\
    &\leq r_\theta(x,y^*) - \lambda  \mathbb{E}_{y\sim\pi_\mathrm{pt}(y|x)} \log \exp(r_\theta(y|x)/\lambda) \nonumber\\
    &= r_\theta(x,y^*) - \mathbb{E}_{y\sim\pi_\mathrm{pt}(y|x)} r_\theta(y|x).
\end{align}
Therefore, {\it maximization of (\ref{eq:objective for reward model}), i.e., training the reward model using the cross-entropy loss, leads to an increase in the reward for the ground-truth  $y^*$ while decreasing rewards for the average outcomes $y$  from the pretrained model $\pi_\mathrm{pt}(y|x)$.}
This interpretation can be seen as analogous to the training of the Bradley-Terry reward model in RLHF~\citep{christiano2017deep} with pairs of preferred-dispreferred sentences, where the reward model learns to evaluate the preferred one higher than the dispreferred one.

\begin{algorithm}[tb]
    \caption{Pseudocode for Training of PRT}
    \label{alg:Training of PRT}
 \begin{algorithmic}[1]
    \STATE {\bfseries Given:} training data $S=\{(x_1, y_1), \cdots, (x_m, y_m)\}$,\\
    \STATE \quad a reward model $r(x;\theta)$,
    \STATE \quad a pretrained model $\pi_\rmpt(\text{-}|x)=\softmax(f(x;\theta_\rmpt))$.
    \STATE Initialize $\theta$.
    \FOR{$i=1,...,m$}
    \STATE $\mathbf{v}_\theta \leftarrow \log\softmax(f(x_i;\theta_\rmpt)) + r(x_i; \theta)$.
    \STATE $\mathbf{p}_\theta \leftarrow \softmax(\mathbf{v}_\theta)$
    \STATE $\mathcal{L}_\theta \leftarrow \crossentropy(\mathbf{p}_\theta, y_i)$
    \STATE Update $\theta$ with the gradient of $\mathcal{L}_\theta$.
    \ENDFOR
    \STATE {\bfseries return} $\theta$.
 \end{algorithmic}
 \end{algorithm}
 \begin{algorithm}[tb]
    \caption{Pseudocode for Inference of PRT}
    \label{alg:Inference of PRT}
 \begin{algorithmic}[1]
    \STATE {\bfseries Given:} an input $x$, the trained reward $r(x;\theta)$,
    \STATE \quad a pretrained model $\wt{\pi}_\rmpt(y|x)=\softmax(\wt{f}(x;\wt{\theta}_\rmpt))$,
    \STATE $\wt{\mathbf{v}} \leftarrow \log\softmax(\wt{f}(x;\wt{\theta}_\rmpt)) + r(x; \theta)$.
    \STATE $\wt{\mathbf{p}} \leftarrow \softmax(\wt{\mathbf{v}})$
    \STATE {\bfseries return} $\wt{\mathbf{p}}$ as the output probability conditioned by $x$.
 \end{algorithmic}
 \end{algorithm}

\paragraph{Implementation Details}
Algorithm~\ref{alg:Training of PRT} presents the pseudocode for training PRT models.
The training data is a set of input-label pairs just like in standard training.
For the reward model $r(x;\theta)$, although it can be an arbitrary neural network model, we assume that it is modeled using the same network architecture as the pretrained model $f(x;\theta_\rmpt)$ and that $\theta$ initialized with $\theta_\rmpt$ throughout this paper.
Lines 6-7 compute the PRT model (\ref{eq:closed solution in portable-tuning}) in the logit space to avoid numerical instability.
Here, we set the coefficient $\lambda=1$ in (\ref{eq:closed solution in portable-tuning}) since the reward model can  automatically learn the scaling factor.

\subsection{Inference of PRT}

Let $r_\theta(x,y)$ and  $\pi_\rmpt(y|x)$ be the reward and pretrained model introduced in Section~\ref{section:training of PRT}.
Let $\wt{\pi}_\rmpt(y|x)$ be another pretrained model whose label space $\mathcal{Y}$ (or vocabularies for language models) is the same as the one for $\pi_\rmpt(y|x)$.
Examples of $\wt{\pi}_\rmpt(y|x)$ include a model pretrained on a larger or more recent dataset than that of $\pi_\rmpt(y|x)$, and a pretrained model with more parameters.

The inference model $\wt{\pi}_\theta(y|x)$ for the reward $r_\theta(x,y)$ and the specified pretrained model $\wt{\pi}_\rmpt(y|x)$ can be derived by replacing $\pi_\rmpt(y|x)$ in (\ref{eq:closed solution in portable-tuning}) with $\wt{\pi}_\rmpt(y|x)$.
Specifically, given an input $x \in \mathcal{X}$, the prediction for its label $y$ is performed by the following model that maximizes the reward $r_\theta(x,y)$ while minimizing the deviation from the specified pretrained model $\wt{\pi}_\rmpt(y|x)$:
\begin{align}\label{eq:PRT model for inference}
    \wt{\pi}_\theta(y|x) &:= \argmax_{\pi(\text{-}|x)} \ \mathbb{E}_{a\sim\pi(y|x)}\left[r_\theta(x,y)\right] \nonumber\\
    & \qquad\qquad\qquad - \lambda \KL\left(\pi(\text{-}|x) \parallel \wt{\pi}_\mathrm{pt}(\text{-}|x)\right) \nonumber\\
    &=\frac{1}{\wt{Z}_\theta(x)}\wt{\pi}_\mathrm{pt}(y|x)  \exp \left( \frac{1}{\lambda} r_\theta(x, y) \right), 
\end{align}
where $\wt{Z}_\theta(x) := \sum_y \wt{\pi}_\mathrm{pt}(y|x) \exp(r_\theta(x,y) / \lambda)$.
The implementation of inference by this PRT model (\ref{eq:PRT model for inference}) is straightforward as described in Algorithm~\ref{alg:Inference of PRT}, with $\lambda=1$ as in training.

Now the following question naturally arises: How does the choice of $\wt{\pi}_\rmpt(y|x)$ affect the behavior of the inference model $\wt{\pi}(y|x)$?
Intuitively, if $\wt{\pi}_\rmpt(y|x)$ does not deviate from the original $\pi_\rmpt(y|x)$, the inference model $\wt{\pi}_\rmpt(y|x)$ also keeps to behave similarly to the training time.
This intuition can be formalized as follows:
\begin{proposition}\label{prop:evaluation of KL divergence}
    Suppose that $\wt{\pi}_\rmpt(y|x)$ is close to $\pi_\rmpt(y|x)$, i.e.,  $\KL\left(\pi_\rmpt(\text{-}|x) \parallel \wt{\pi}_\rmpt(\text{-}|x) \right) \leq \varepsilon$.
    Additionally, we assume that the maximum and mean value ratio of the exponential reward, i.e., $\max_y \exp r_\theta(x,y)/\mathbb{E}_y \exp r_\theta(x,y)$, is bounded by some constant $C$.
    Then, the PRT models $\pi_\theta(y|x)$  and  $\wt{\pi}_\theta(y|x)$ are also close as distributions:
    \begin{align*}
        \KL\left( \pi_\theta(y|x) \parallel \wt{\pi}_\theta(y|x) \right) \leq O(\sqrt{\varepsilon}).    
    \end{align*}
\end{proposition}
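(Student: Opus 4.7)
The plan is to start from the identity
\begin{align*}
\KL(\pi_\theta\|\wt{\pi}_\theta) = \mathbb{E}_{\pi_\theta}\!\left[\log\tfrac{\pi_\rmpt(y|x)}{\wt{\pi}_\rmpt(y|x)}\right] + \log\tfrac{\wt{Z}_\theta(x)}{Z_\theta(x)},
\end{align*}
which is immediate from (\ref{eq:closed solution in portable-tuning}) and (\ref{eq:PRT model for inference}) since the common factor $\exp(r_\theta(x,y)/\lambda)$ cancels inside the log-ratio $\log(\pi_\theta/\wt{\pi}_\theta)$. The goal is then to bound each of the two right-hand terms by $O(\sqrt{\varepsilon})$.

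For the partition-function term, write $g(y) := \exp(r_\theta(x,y)/\lambda)$, so that $Z_\theta(x) = \mathbb{E}_{\pi_\rmpt}[g]$ and $\wt{Z}_\theta(x) = \mathbb{E}_{\wt{\pi}_\rmpt}[g]$. Reading the hypothesis with the mean taken against $\pi_\rmpt$ gives $\|g\|_\infty / Z_\theta(x) \leq C$, and Pinsker yields $\TV(\pi_\rmpt,\wt{\pi}_\rmpt) \leq \sqrt{\varepsilon/2}$. Hence $|\wt{Z}_\theta - Z_\theta| \leq \|g\|_\infty \TV(\pi_\rmpt,\wt{\pi}_\rmpt) \leq C Z_\theta \sqrt{\varepsilon/2}$, which gives $|\log(\wt{Z}_\theta/Z_\theta)| = O(\sqrt{\varepsilon})$.

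For the first term, set $w(y) := g(y)/Z_\theta(x) = \pi_\theta(y|x)/\pi_\rmpt(y|x) \in [0,C]$ and $\phi(y) := \log(\pi_\rmpt(y|x)/\wt{\pi}_\rmpt(y|x))$. A change of measure yields
\begin{align*}
\mathbb{E}_{\pi_\theta}[\phi] = \mathbb{E}_{\pi_\rmpt}[w\phi] = \KL(\pi_\rmpt\|\wt{\pi}_\rmpt) + \mathbb{E}_{\pi_\rmpt}[(w-1)\phi] \leq \varepsilon + C\,\mathbb{E}_{\pi_\rmpt}[|\phi|].
\end{align*}
The main obstacle is bounding $\mathbb{E}_{\pi_\rmpt}[|\phi|]$ by $O(\sqrt{\varepsilon})$ from the KL hypothesis alone, since $\phi$ is not uniformly bounded and a direct Pinsker application does not apply to $\phi$ itself.

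I would overcome this by splitting on $B := \{y : \pi_\rmpt(y|x) \geq \wt{\pi}_\rmpt(y|x)\}$. Combining $\mathbb{E}_{\pi_\rmpt}[\phi] = \KL(\pi_\rmpt\|\wt{\pi}_\rmpt)$ with $\mathbb{E}_{\pi_\rmpt}[|\phi|] = \mathbb{E}_{\pi_\rmpt}[\phi\mathbbm{1}_B] + \mathbb{E}_{\pi_\rmpt}[-\phi\mathbbm{1}_{B^c}]$ yields
\begin{align*}
\mathbb{E}_{\pi_\rmpt}[|\phi|] = \KL(\pi_\rmpt\|\wt{\pi}_\rmpt) + 2\,\mathbb{E}_{\pi_\rmpt}\!\left[\log(\wt{\pi}_\rmpt/\pi_\rmpt)\,\mathbbm{1}_{B^c}\right].
\end{align*}
Applying $\log x \leq x - 1$ on $B^c$ gives $\mathbb{E}_{\pi_\rmpt}[\log(\wt{\pi}_\rmpt/\pi_\rmpt)\mathbbm{1}_{B^c}] \leq \wt{\pi}_\rmpt(B^c) - \pi_\rmpt(B^c) \leq \TV(\pi_\rmpt,\wt{\pi}_\rmpt) \leq \sqrt{\varepsilon/2}$, so $\mathbb{E}_{\pi_\rmpt}[|\phi|] \leq \varepsilon + \sqrt{2\varepsilon} = O(\sqrt{\varepsilon})$. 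Combining with the partition-function bound proves $\KL(\pi_\theta\|\wt{\pi}_\theta) = O(\sqrt{\varepsilon})$.
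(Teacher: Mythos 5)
Your proof is correct, and its skeleton coincides with the paper's: the same decomposition $\KL(\pi_\theta\Vert\wt{\pi}_\theta)=\mathbb{E}_{\pi_\theta}[\log(\pi_\rmpt/\wt{\pi}_\rmpt)]+\log(\wt{Z}_\theta/Z_\theta)$, and the same Pinsker-based control of the partition-function ratio via the boundedness of $\exp(r_\theta)/Z_\theta$ by $C$. Where you genuinely diverge is the first term. The paper bounds it by pulling $\max_y \exp r_\theta(x,y)/\mathbb{E}_{y\sim\pi_\rmpt}\exp r_\theta(x,y)\le C$ out of the sum $\tfrac{1}{Z_\theta}\sum_y \exp(r_\theta)\,\pi_\rmpt\log(\pi_\rmpt/\wt{\pi}_\rmpt)$ to get $C\,\KL(\pi_\rmpt\Vert\wt{\pi}_\rmpt)\le C\varepsilon$; that step tacitly treats the pointwise summands $\pi_\rmpt\log(\pi_\rmpt/\wt{\pi}_\rmpt)$ as nonnegative, which they need not be, so the paper's $O(\varepsilon)$ bound on this term is not fully justified as written. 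Your change of measure $\mathbb{E}_{\pi_\theta}[\phi]=\KL(\pi_\rmpt\Vert\wt{\pi}_\rmpt)+\mathbb{E}_{\pi_\rmpt}[(w-1)\phi]$ combined with the estimate $\mathbb{E}_{\pi_\rmpt}[|\phi|]\le \KL(\pi_\rmpt\Vert\wt{\pi}_\rmpt)+2\,\TV(\pi_\rmpt,\wt{\pi}_\rmpt)\le\varepsilon+\sqrt{2\varepsilon}$ (your split on $B$ plus $\log x\le x-1$) handles the sign issue rigorously; the price is that this term becomes $O(\sqrt{\varepsilon})$ rather than $O(\varepsilon)$, which is immaterial since the partition-function term is already $O(\sqrt{\varepsilon})$. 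So your route is slightly longer but strictly more careful, and it reaches the same conclusion under the same hypotheses. Two cosmetic points: your inequality $|\wt{Z}_\theta-Z_\theta|\le\|g\|_\infty\,\TV$ is fine for nonnegative $g$ with the convention $\TV=\tfrac12\|\cdot\|_1$, and you correctly use Pinsker in the direction matching the hypothesis $\KL(\pi_\rmpt\Vert\wt{\pi}_\rmpt)\le\varepsilon$ (the paper's appendix momentarily writes the reversed KL there).
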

\begin{proof}
    See Appendix~\ref{app:section:proof of Proposition 3.2}.
\end{proof}

\subsection{A PAC-Bayesian Perspective}\label{sec:pac-bayesian perspective}

We can suppose that the pretrained models for both training and inference, i.e., $\pi_\rmpt(y|x)$ and $\wt{\pi}_\rmpt(y|x)$, are chosen from some distribution $\mathcal{P}$ over the set of pretrained models.
Then the PRT models for inference, $\wt{\pi}_\theta(y|x)$ combined with the sampled pretrained model $\wt{\pi}_\rmpt(y|x)\sim \mathcal{P}$, form a new distribution $\mathcal{Q}_\theta$.

This formulation of PRT models naturally fits into the PAC-Bayes framework established in~\citet{mcallester1999pac}, which enables us to analyze the generalization error of posterior distributions over  classifiers, in comparison to a fixed prior distribution.
Specifically in our setting, the pretrained distribution $\mathcal{P}$ can be seen as a prior distribution, and the PRT distribution $\mathcal{Q}_\theta$ as a posterior distribution.

Let $l(x, \pi)$ be a finitely bounded loss function, e.g., one that returns the error rate, for a given input $x$ and classifier $\pi$.
Assume that the input $x$ follows some distribution $\mathcal{D}$.
The following generalization bound can be obtained as a direct consequence of Theorem 1 in \citet{mcallester1999pac}:
\begin{proposition}
    Let $S=(x_1, \cdots, x_m)\sim \mathcal{D}^m$ be i.i.d. $m$ training samples from the data distribution $\mathcal{D}$.
    Then, with probability at least $1-\delta$, we have
    \begin{align}
        &\mathbb{E}_{\pi\sim\mathcal{Q}_\theta} \mathbb{E}_{x\sim\mathcal{D}}\big[l(x, \pi)\big] \leq \mathbb{E}_{\pi\sim\mathcal{Q}_\theta} \left[\frac{1}{m}\sum_i^m l(x_i, \pi)\right] \nonumber\\
        & \qquad + \sqrt{\frac{\KL(\mathcal{Q}_\theta\parallel \mathcal{P}) + \log(\frac{1}{\delta}) + \frac{5}{2}\log m + 8}{2m-1}},
    \end{align}
    for the posterior distribution $\mathcal{Q}_\theta$ with the reward $r_\theta$ trained on $S$.
\end{proposition}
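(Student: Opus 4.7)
The plan is to recognize the statement as an immediate corollary of McAllester's classical PAC-Bayes bound and then merely verify that the PRT setup supplies the ingredients that bound requires. First I would recall the form of Theorem 1 in \citet{mcallester1999pac}: for any data-independent prior $\mathcal{P}$ over a hypothesis class of classifiers, any posterior $\mathcal{Q}$ (possibly depending on the sample $S$), and any $[0,1]$-bounded loss, with probability at least $1-\delta$ over $S\sim\mathcal{D}^m$,
\begin{align*}
\mathbb{E}_{\pi\sim\mathcal{Q}}\mathbb{E}_{x\sim\mathcal{D}}[l(x,\pi)] \leq \mathbb{E}_{\pi\sim\mathcal{Q}}\left[\frac{1}{m}\sum_{i=1}^m l(x_i,\pi)\right] + \sqrt{\frac{\KL(\mathcal{Q}\parallel \mathcal{P}) + \log(1/\delta) + \tfrac{5}{2}\log m + 8}{2m-1}}.
\end{align*}
The form on the right matches the desired inequality exactly; the work is therefore entirely in the identification of $(\mathcal{P}, \mathcal{Q})$.

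Next I would set the prior to be the distribution $\mathcal{P}$ over pretrained models stipulated by the paper; since this distribution is specified independently of the training sample $S$, it is a valid PAC-Bayes prior. Then I would define the posterior $\mathcal{Q}_\theta$ as the push-forward of $\mathcal{P}$ under the map $\wt{\pi}_\rmpt \mapsto \wt{\pi}_\theta$, where $\wt{\pi}_\theta$ is the PRT model of equation~(\ref{eq:PRT model for inference}) built from the reward $r_\theta$ (learned from $S$) and the sampled $\wt{\pi}_\rmpt$. Because $r_\theta$ is fixed once $S$ is drawn, $\mathcal{Q}_\theta$ depends on $S$ only through $\theta$, which is the standard data-dependent posterior regime covered by McAllester's bound. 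I would remark that the loss $l(x,\pi)$ is assumed finitely bounded, so up to a harmless rescaling we may take it in $[0,1]$ without affecting the conclusion.

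Finally I would plug $(\mathcal{P}, \mathcal{Q}_\theta)$ into the cited theorem. The only subtlety is measurability of the posterior and well-definedness of $\KL(\mathcal{Q}_\theta\parallel \mathcal{P})$; I would note that since $\wt{\pi}_\theta$ is a deterministic (and smooth) function of $\wt{\pi}_\rmpt$ for each fixed $\theta$, $\mathcal{Q}_\theta$ is simply the image measure and the KL term is well-defined (possibly infinite, in which case the bound holds vacuously). The expected main obstacle, if any, is purely expository: making clear that the randomness in $\mathcal{Q}_\theta$ is exactly the randomness inherited from the prior $\mathcal{P}$ over pretrained models, so that the bound genuinely controls the generalization of PRT at inference time over the choice of the replacement backbone $\wt{\pi}_\rmpt$. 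No new concentration inequality or technical lemma is required.
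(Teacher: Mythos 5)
Your proposal is correct and matches the paper's treatment: the paper itself presents this proposition as a direct consequence of Theorem 1 of McAllester (1999), with the prior taken to be the distribution $\mathcal{P}$ over pretrained models and the posterior $\mathcal{Q}_\theta$ the induced distribution over PRT models $\wt{\pi}_\theta$, exactly as you set it up. If anything, your write-up is more careful than the paper's, which omits the remarks on rescaling the bounded loss to $[0,1]$ and on the well-definedness of $\KL(\mathcal{Q}_\theta\parallel\mathcal{P})$ as a push-forward measure.
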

The KL divergence $\KL(\mathcal{Q}_\theta\parallel \mathcal{P})$ is not computationally tractable because the pretrained distribution $\mathcal{P}$ itself is not tractable and also the underlying space of pretrained models is too vast.
Nevertheless, the result implies that the generalization capability of PRT models can be captured by the closeness of the PRT model $\wt{\pi}_\theta(y|x)$ compared to the underlying pretrained model $\wt{\pi}_\rmpt(y|x)$.

In particular, we can easily see that the KL divergence term vanishes if the reward $r_\theta(x, y)$ is a constant value given each $x$ and then the equality $\wt{\pi}_\theta(y|x)=\wt{\pi}_\rmpt(y|x)$ holds.
This can be seen as the case that the exponential distribution $\rho_\theta(y|x) := \exp(r_\theta(x,y)) / \sum_y \exp(r_\theta(x,y))$ maximizes its entropy.
Thus, the generalization capability can be further improved by regularizing the exponential distribution of the reward $r_\theta(x,y)$ during training of $r_\theta(x,y)$, while it may hurt the optimization quality instead.

The above analysis motivates us to consider the following regularization, which we call Entropy Maximization (EM),  to optionally enhance the generalization capability across various pretrained models:
\begin{align}
\wt{\mathcal{L}}(\theta) := \mathcal{L}(\theta) - \alpha \frac{1}{|S|} \sum_{(x,y)\in S} H(\rho_\theta(\text{-}|x)),
\end{align}
where $\mathcal{L}(\theta)$ is the original loss (\ref{eq:loss for reward model}) for training the reward model,  $H(\rho_\theta(\text{-}|x)) := -\sum_{y\in\mathcal{Y}} \rho_\theta(y|x) \log\rho_\theta(y|x)$ is the entropy of the exponential distribution $\rho_\theta(y|x)$, and $\alpha\in\mathbb{R}_{\geq 0}$ is the hyperparameter to control the regularization.
As we can see larger $\alpha$ enforces the reward $r_\theta(x,y)$ to be closer to some constant for each input $x\in\mathcal{X}$.

\section{Experiments}\label{sec:experiments}

In this section, we evaluate the performance of portable reward tuning (PRT) for inference-time tuning, with various pretrained models including both vision and language models.
The main comparison is between PRT and its corresponding baseline, emulated fine-tuning (EFT; \citet{mitchell2024emulator}), which differ only in whether they maximize an explicit or implicit reward during inference-time tuning.

\begin{figure*}[!t]
    \centering
    \begin{subfigure}[t]{0.19\textwidth}
        \begin{minipage}{\textwidth}
            \centering
            \includegraphics[width=\linewidth]{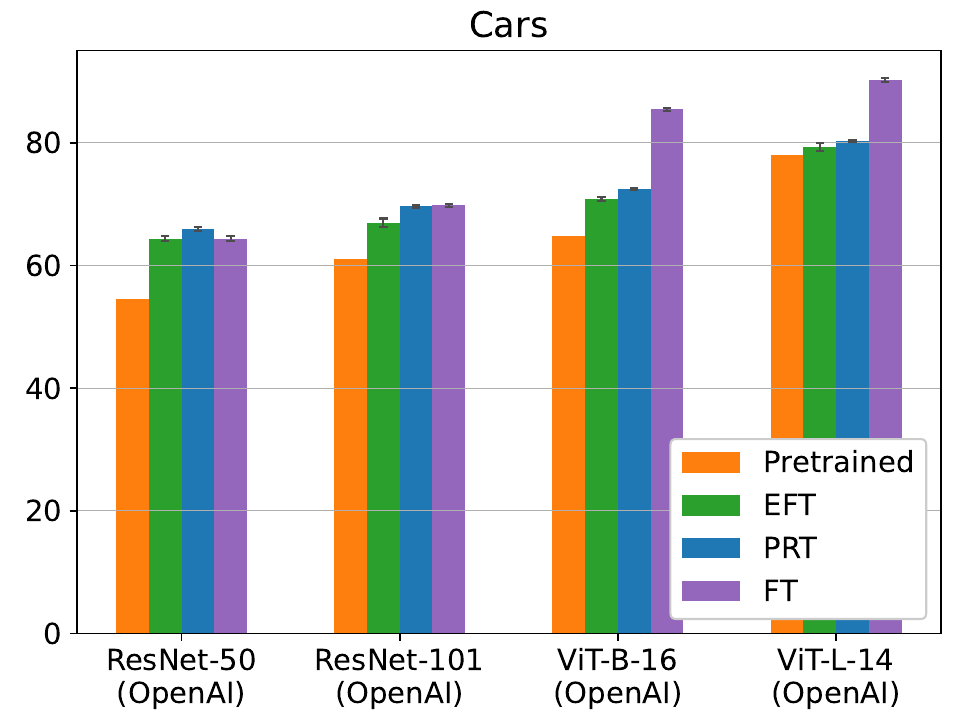}
            \includegraphics[width=\linewidth]{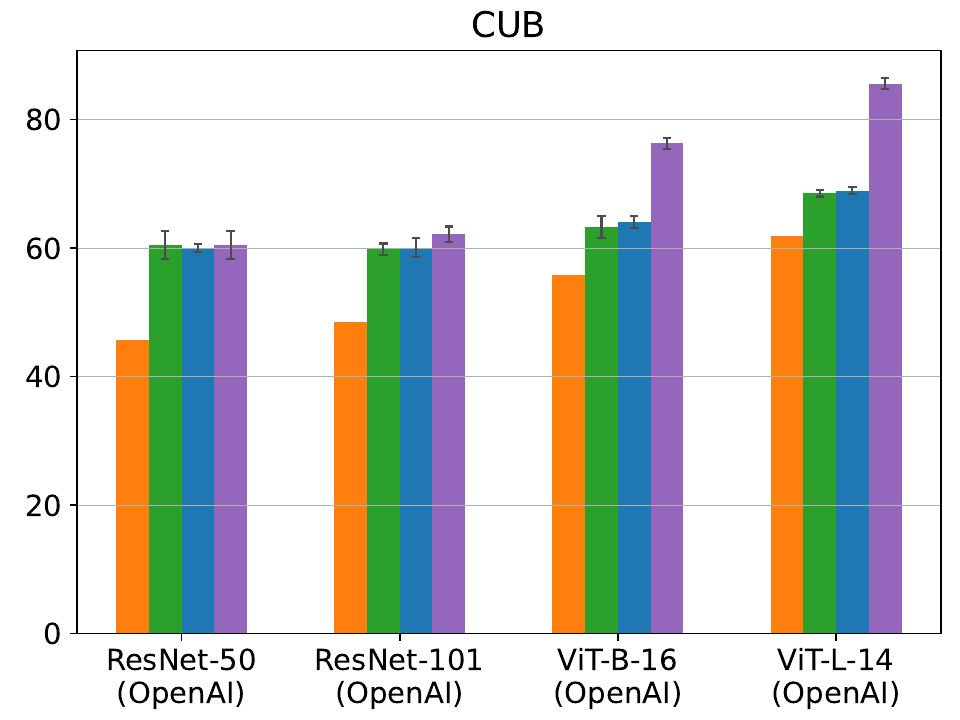}
            \includegraphics[width=\linewidth]{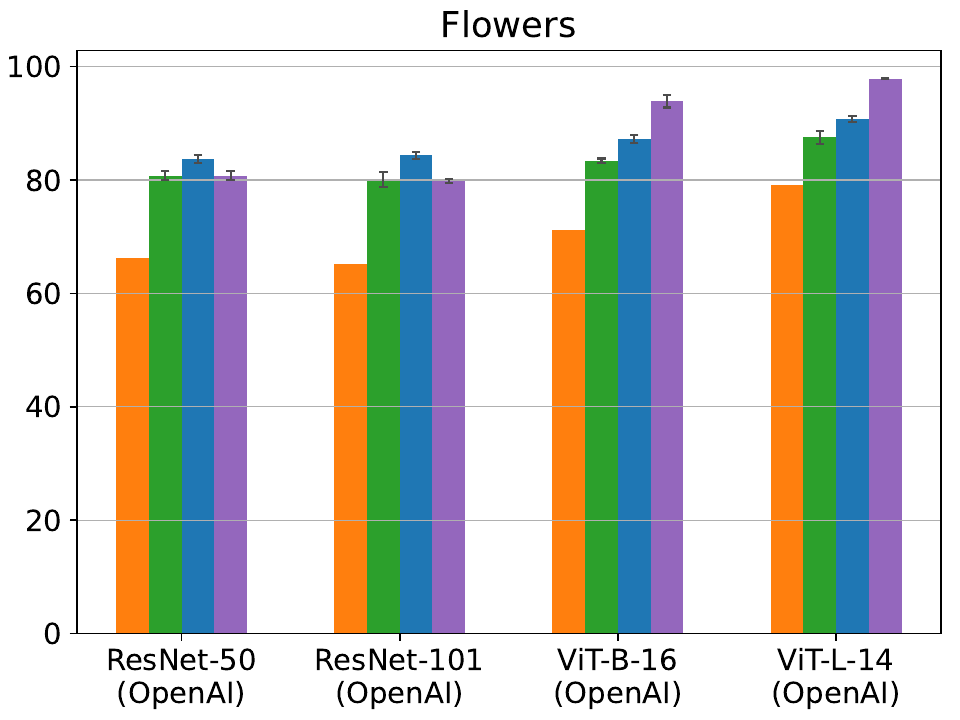}
        \end{minipage}
        \caption{\scriptsize ResNet-50 (OpenAI)}
        \label{fig:resnet50_openai}
    \end{subfigure}
    \hfill
    \begin{subfigure}[t]{0.19\textwidth}
        \begin{minipage}{\textwidth}
            \centering
            \includegraphics[width=\linewidth]{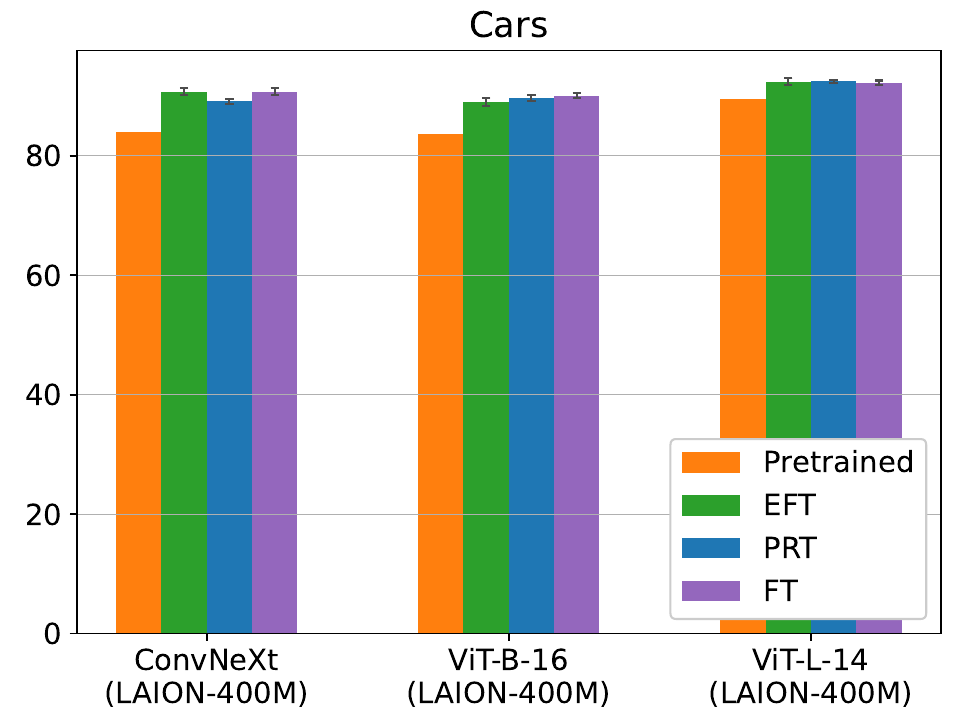}
            \includegraphics[width=\linewidth]{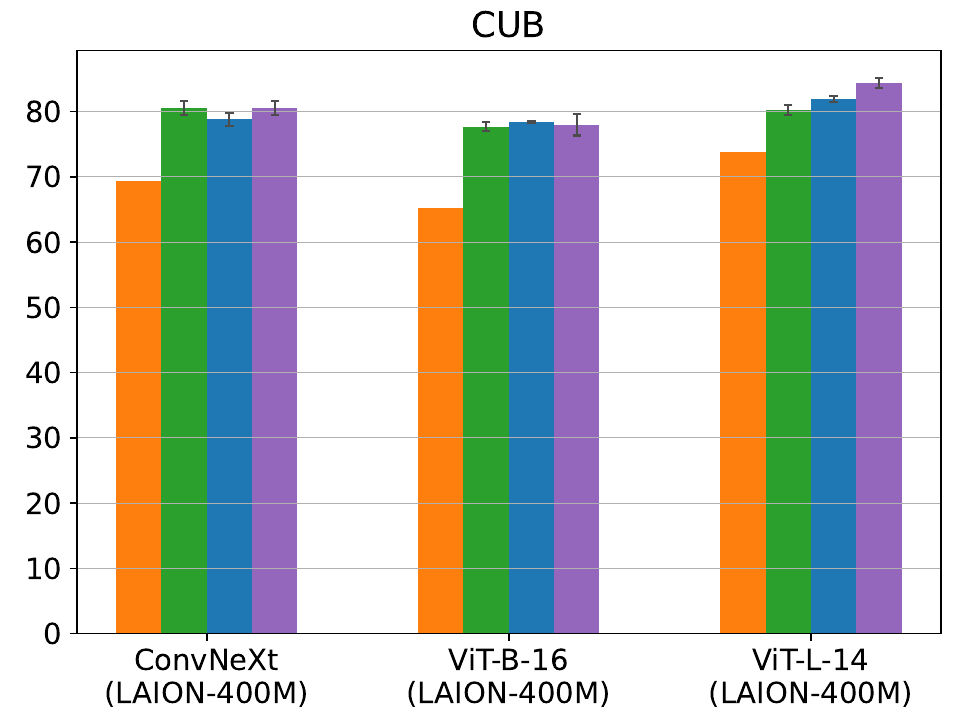}
            \includegraphics[width=\linewidth]{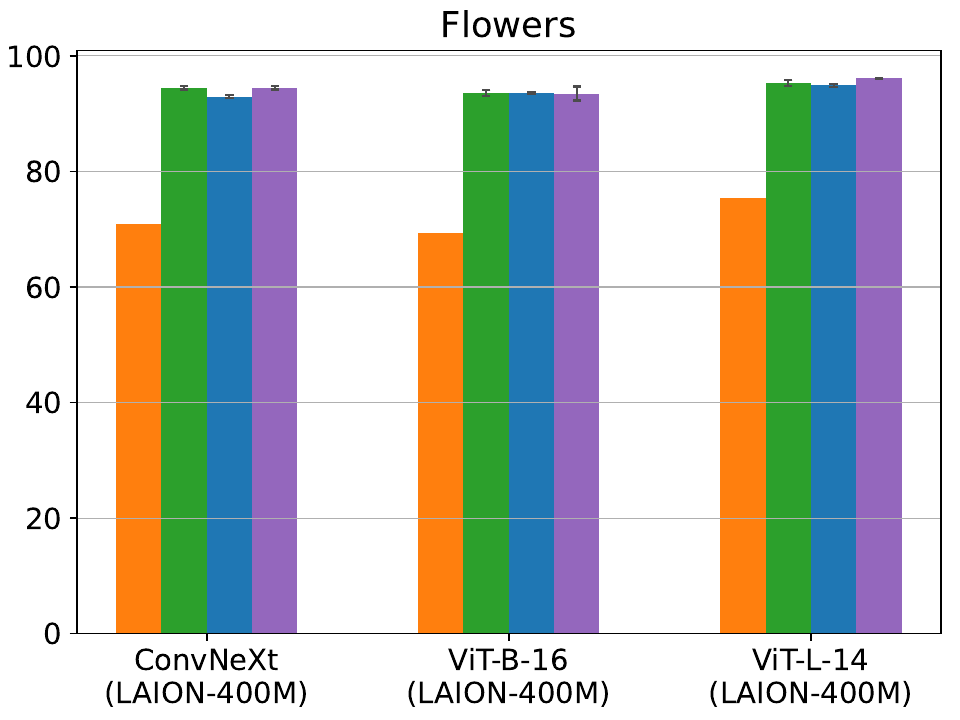}
        \end{minipage}
        \caption{{\scriptsize ConvNext (LAION-400M)}}
        \label{fig:convnext_laion400m}
    \end{subfigure}
    \hfill
    \begin{subfigure}[t]{0.19\textwidth}
        \begin{minipage}{\textwidth}
            \centering
            \includegraphics[width=\linewidth]{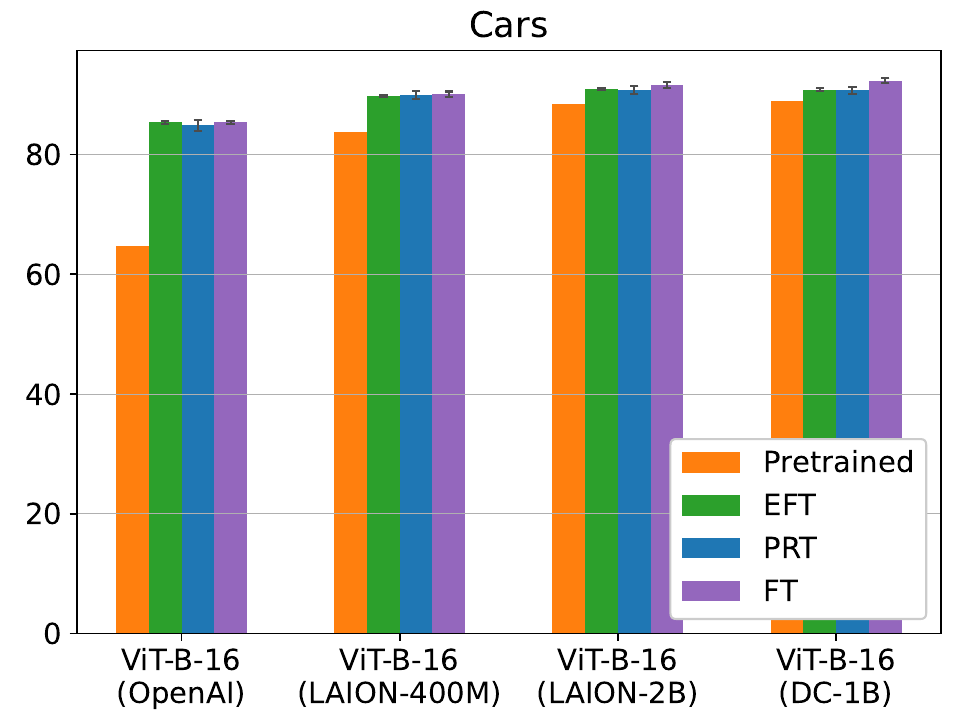}
            \includegraphics[width=\linewidth]{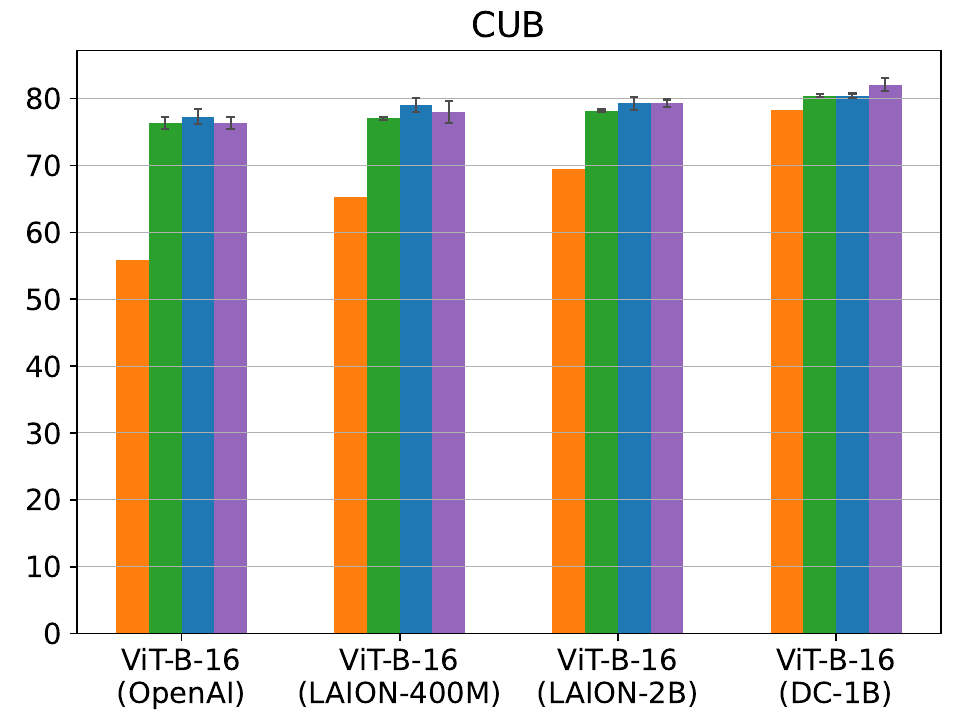}
            \includegraphics[width=\linewidth]{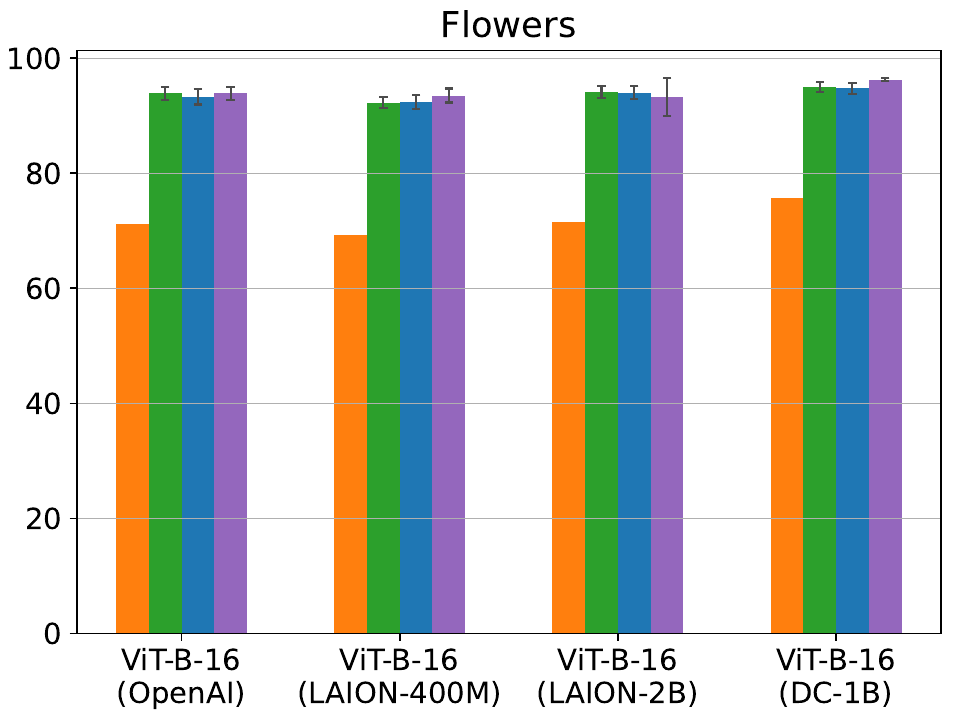}
        \end{minipage}
    \caption{\scriptsize ViT-B-16 (OpenAI)}
    \label{fig:vit-b-16_openai}
    \end{subfigure}
    \hfill    
    \begin{subfigure}[t]{0.19\textwidth}
        \begin{minipage}{\textwidth}
            \centering
            \includegraphics[width=\linewidth]{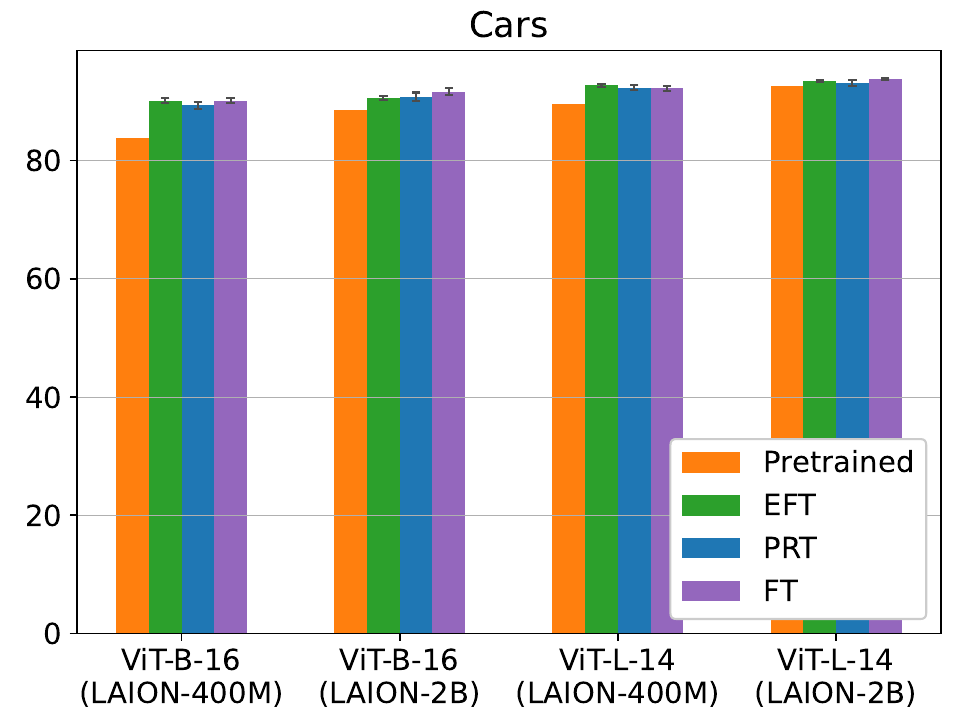}
            \includegraphics[width=\linewidth]{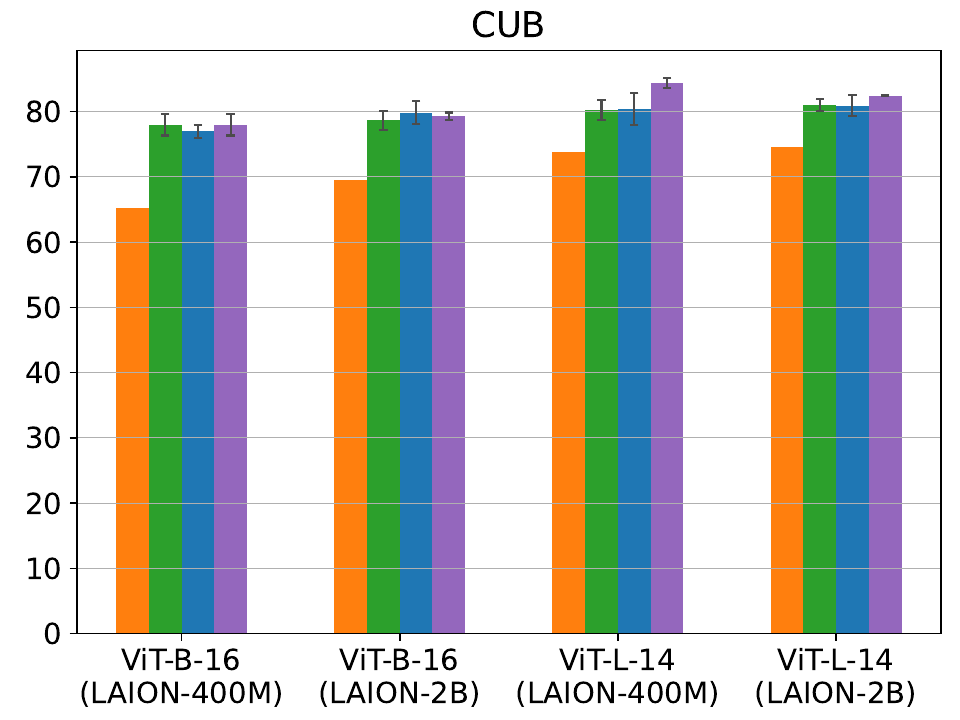}
            \includegraphics[width=\linewidth]{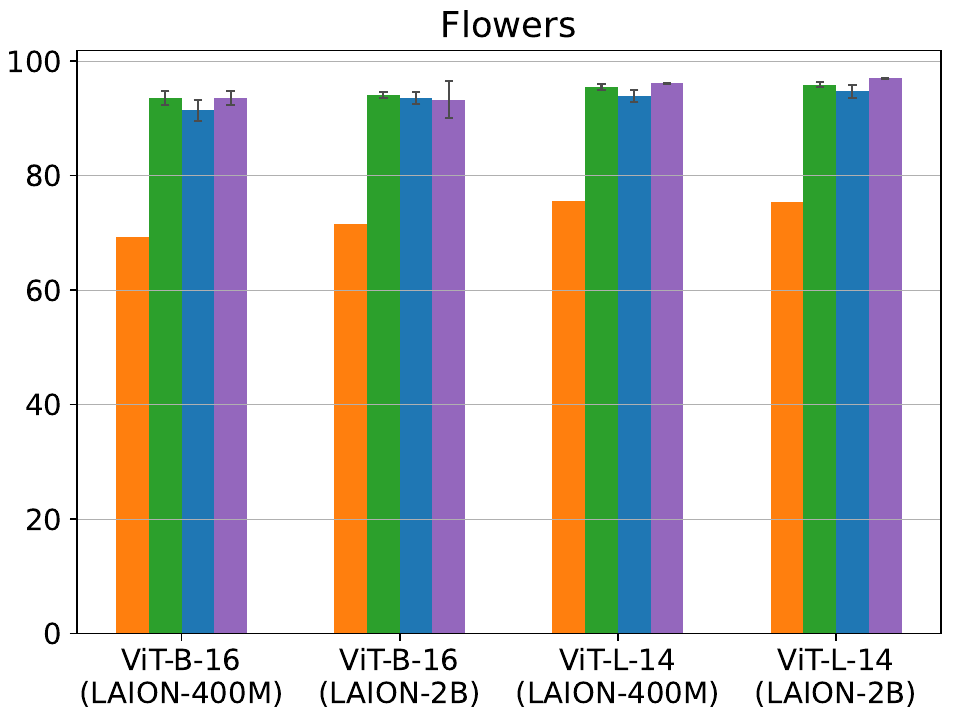}
        \end{minipage}
    \caption{\scriptsize ViT-B-16 (LAION-400M)}
    \label{fig:vit-b-16_laion400m}
    \end{subfigure}
    \hfill
    \begin{subfigure}[t]{0.19\textwidth}
        \begin{minipage}{\textwidth}
            \centering
            \includegraphics[width=\linewidth]{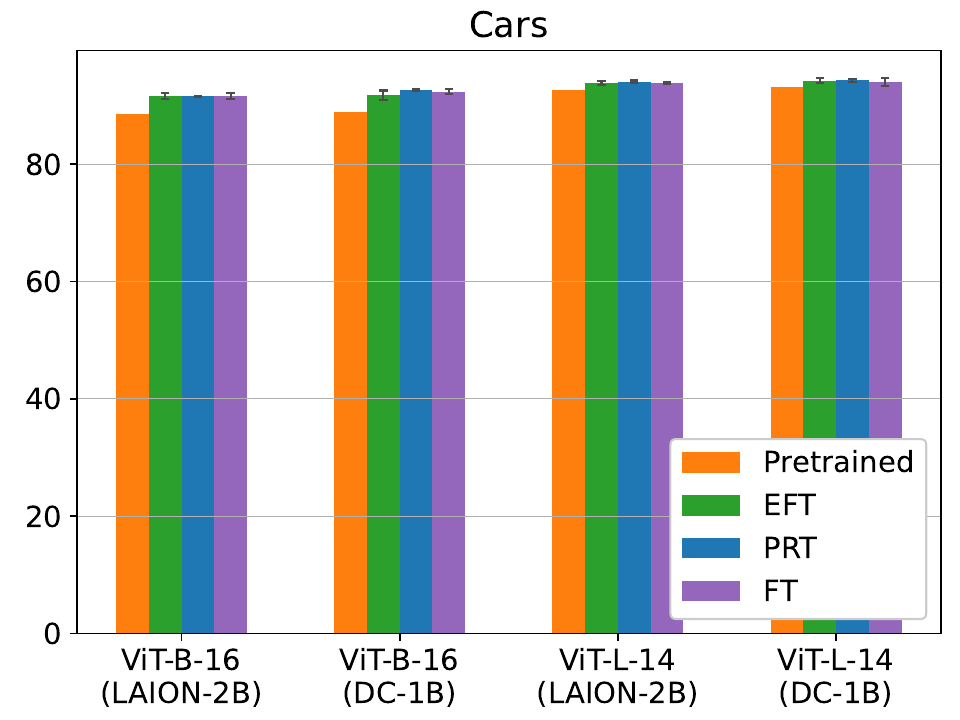}
            \includegraphics[width=\linewidth]{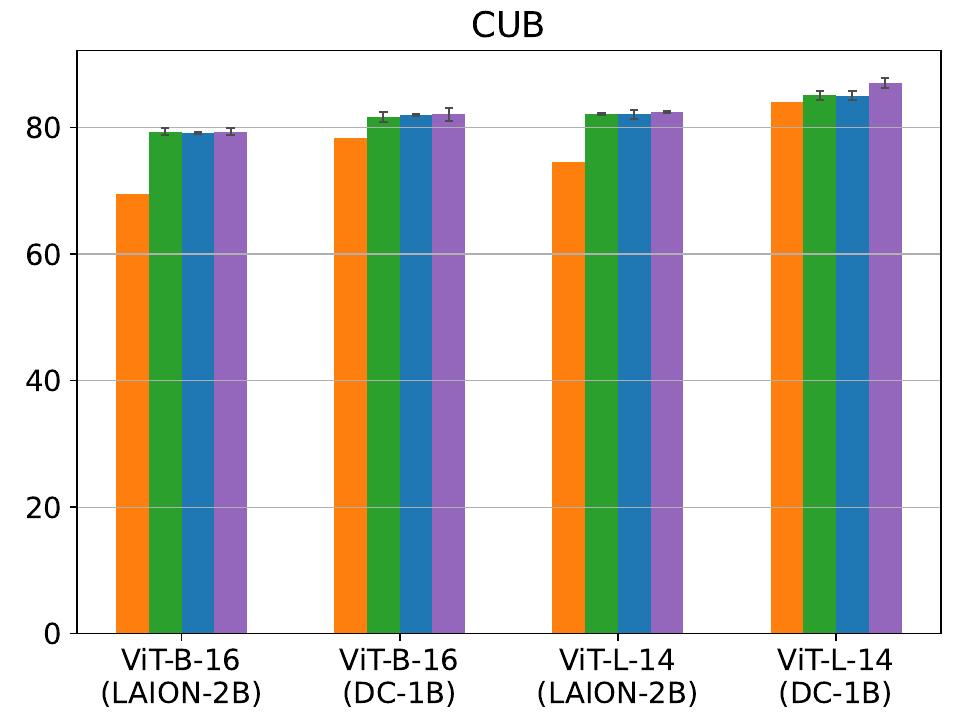}
            \includegraphics[width=\linewidth]{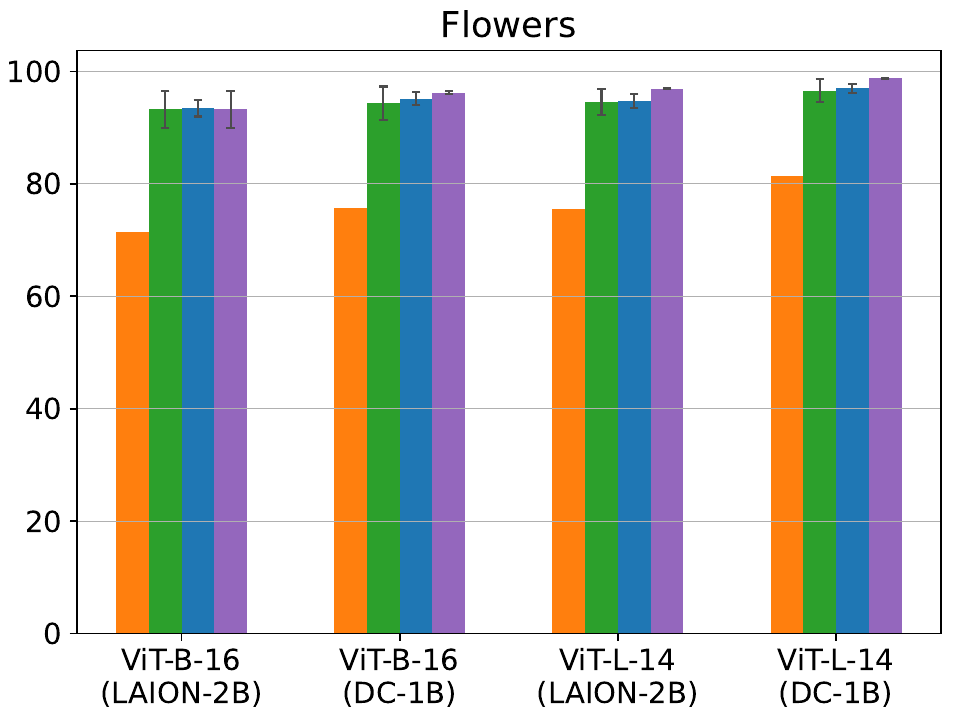}
            \end{minipage}
    \caption{\scriptsize ViT-B-16 (LAION-2B)}
    \label{fig:vit-b-16_laion2b}
    \end{subfigure}
    \vspace{-0.2cm}
    \caption{Evaluations of inference-time tuned models for vision tasks. Each subcaption refers to the source pretrained model, and the labels in x-axis are target pretrained models. \textcolor{ptcolor}{\bf Pretrained} means the zero-shot classification by each target model as a baseline, and \textcolor{ftcolor}{\bf FT} means the fine-tuned target model as an oracle result.}
    \label{fig:vision_results}
\end{figure*}

\begin{figure*}[!t]
    \centering
    \begin{subfigure}[t]{0.24\textwidth}
        \begin{minipage}{\textwidth}
            \centering
            \includegraphics[width=\linewidth]{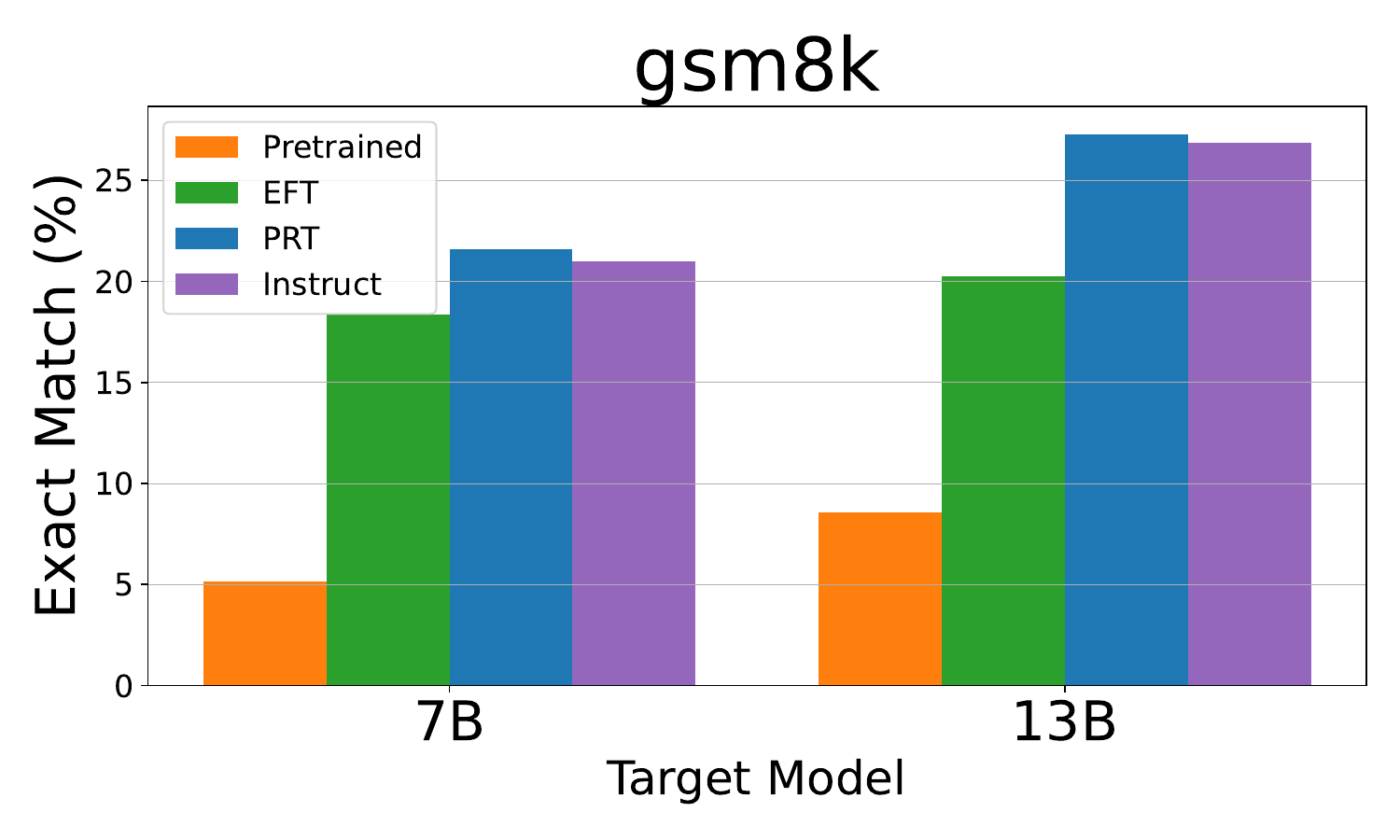}
            \includegraphics[width=\linewidth]{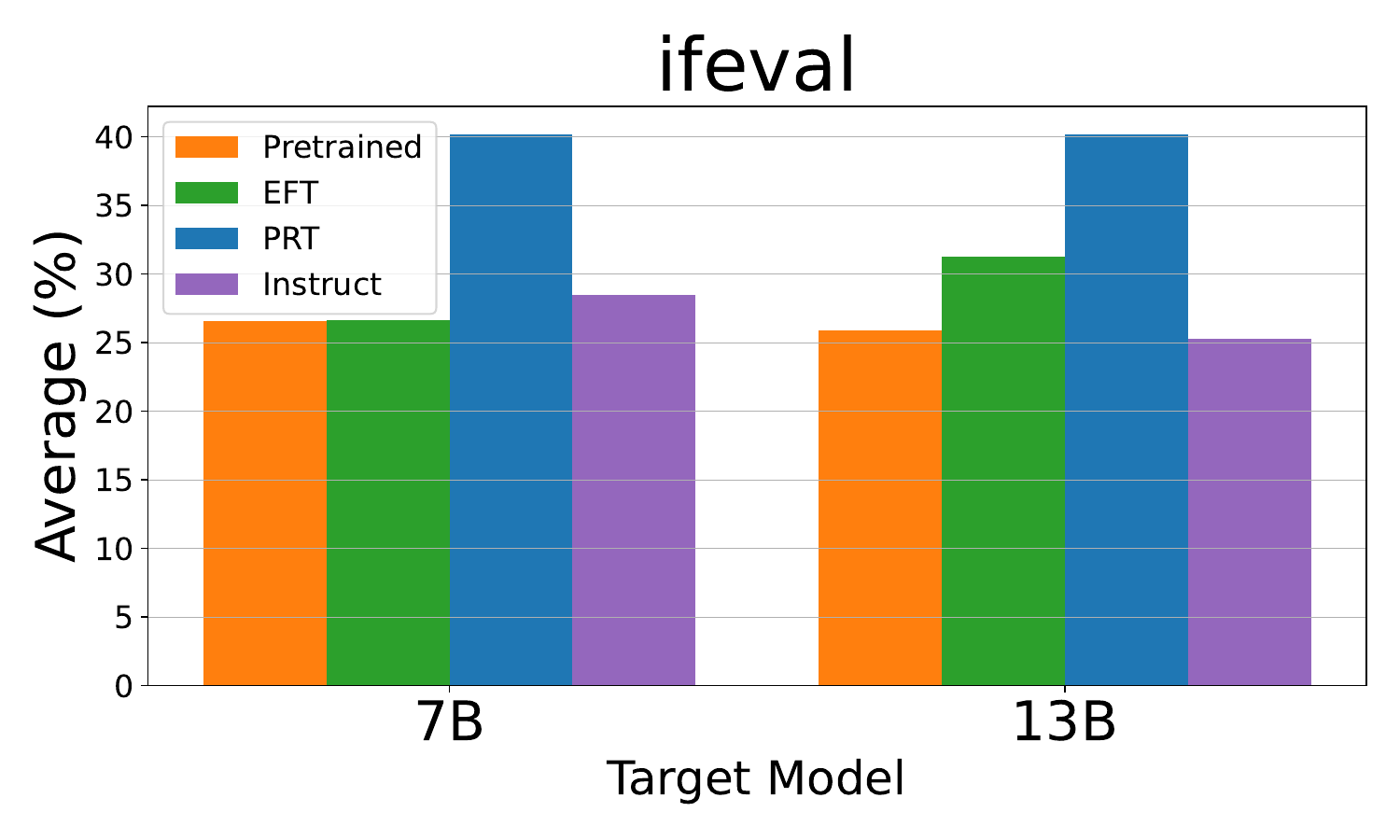}
        \end{minipage}
    \caption{\scriptsize Llama2-7B}
    \label{fig:llama2}
    \end{subfigure}
    \hfill
    \begin{subfigure}[t]{0.24\textwidth}
        \begin{minipage}{\textwidth}
            \centering
            \includegraphics[width=\linewidth]{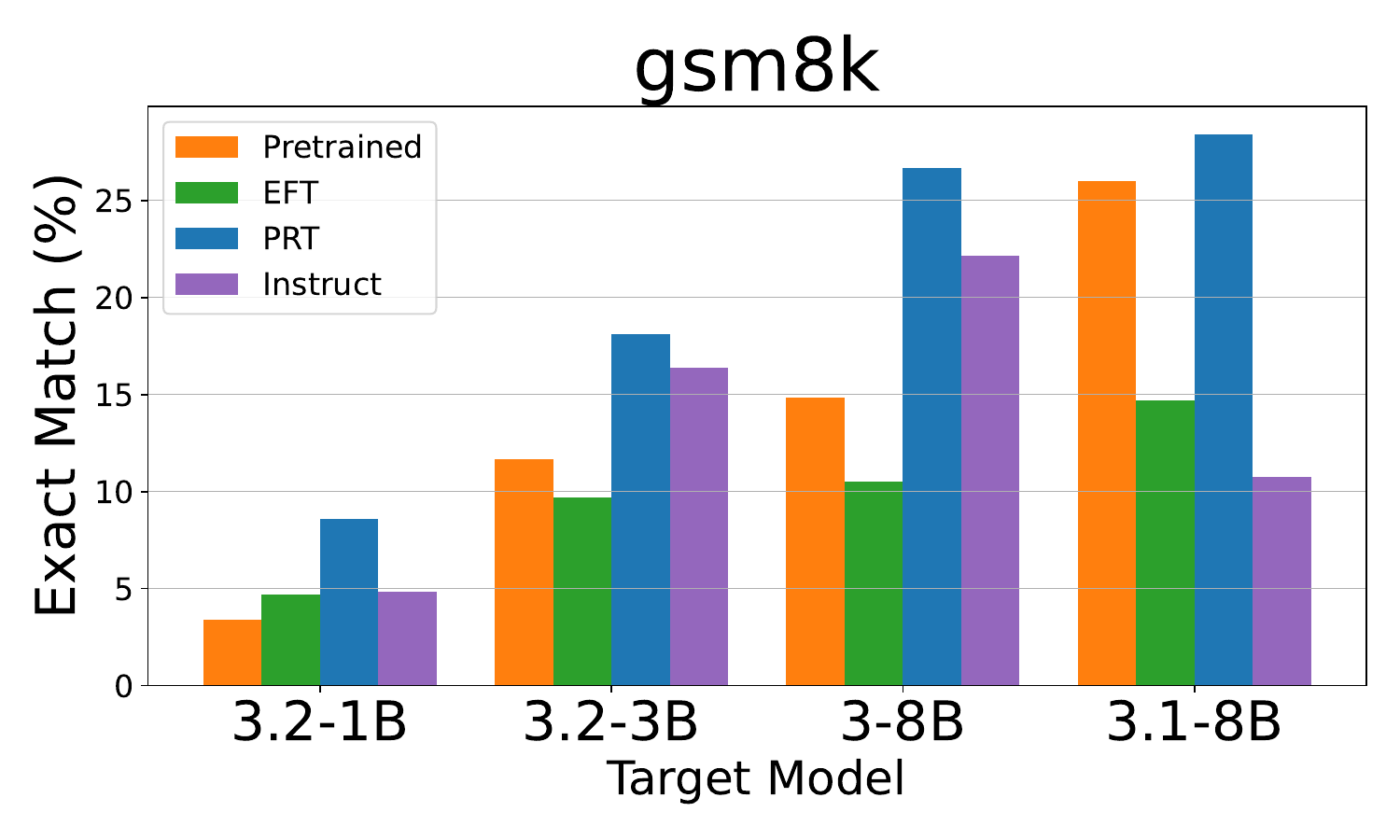}
            \includegraphics[width=\linewidth]{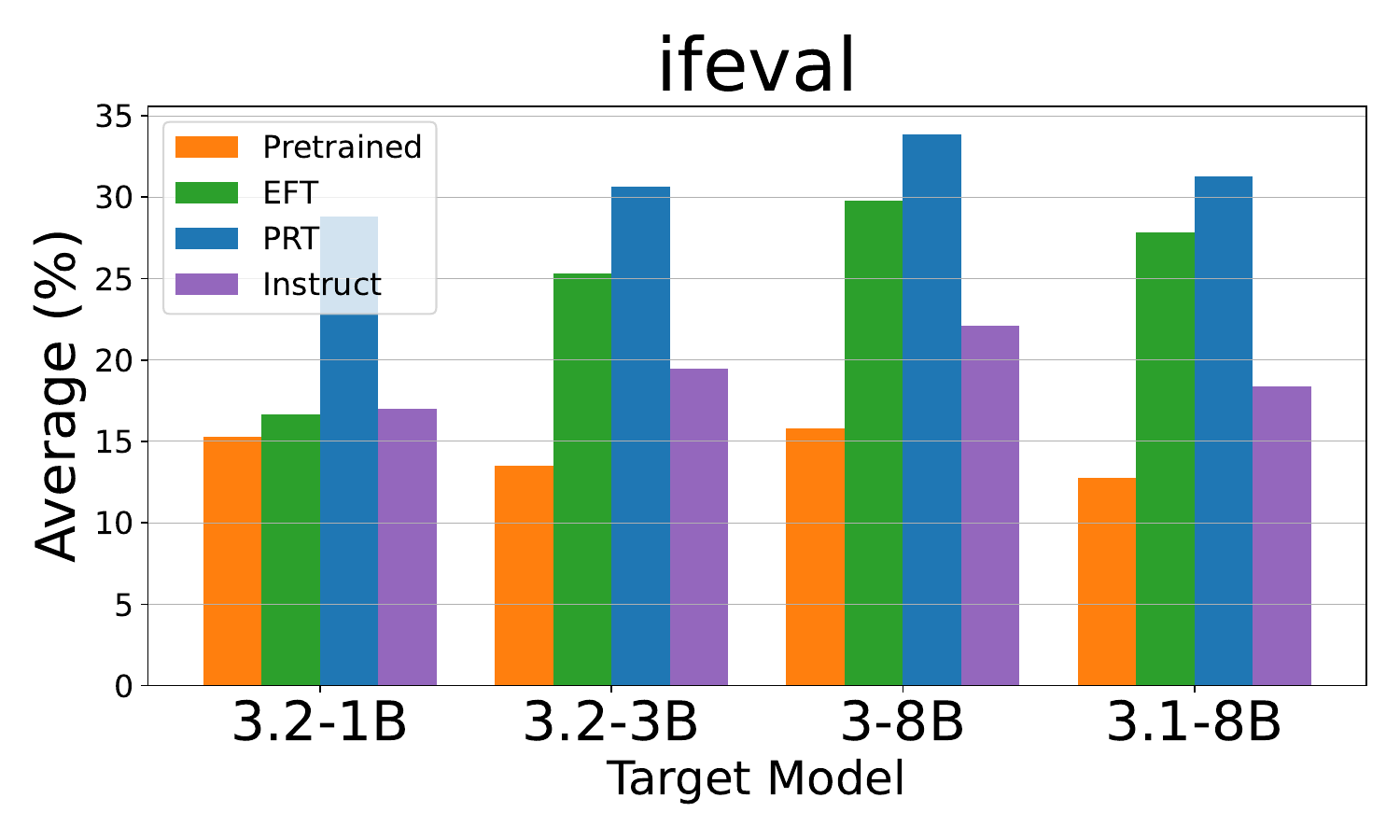}
        \end{minipage}
    \caption{\scriptsize Llama3.2-1B}
    \label{fig:llama3}
    \vspace{-0.1cm}
    \end{subfigure}
    \hfill
    \begin{subfigure}[t]{0.24\textwidth}
        \begin{minipage}{\textwidth}
            \centering
            \includegraphics[width=\linewidth]{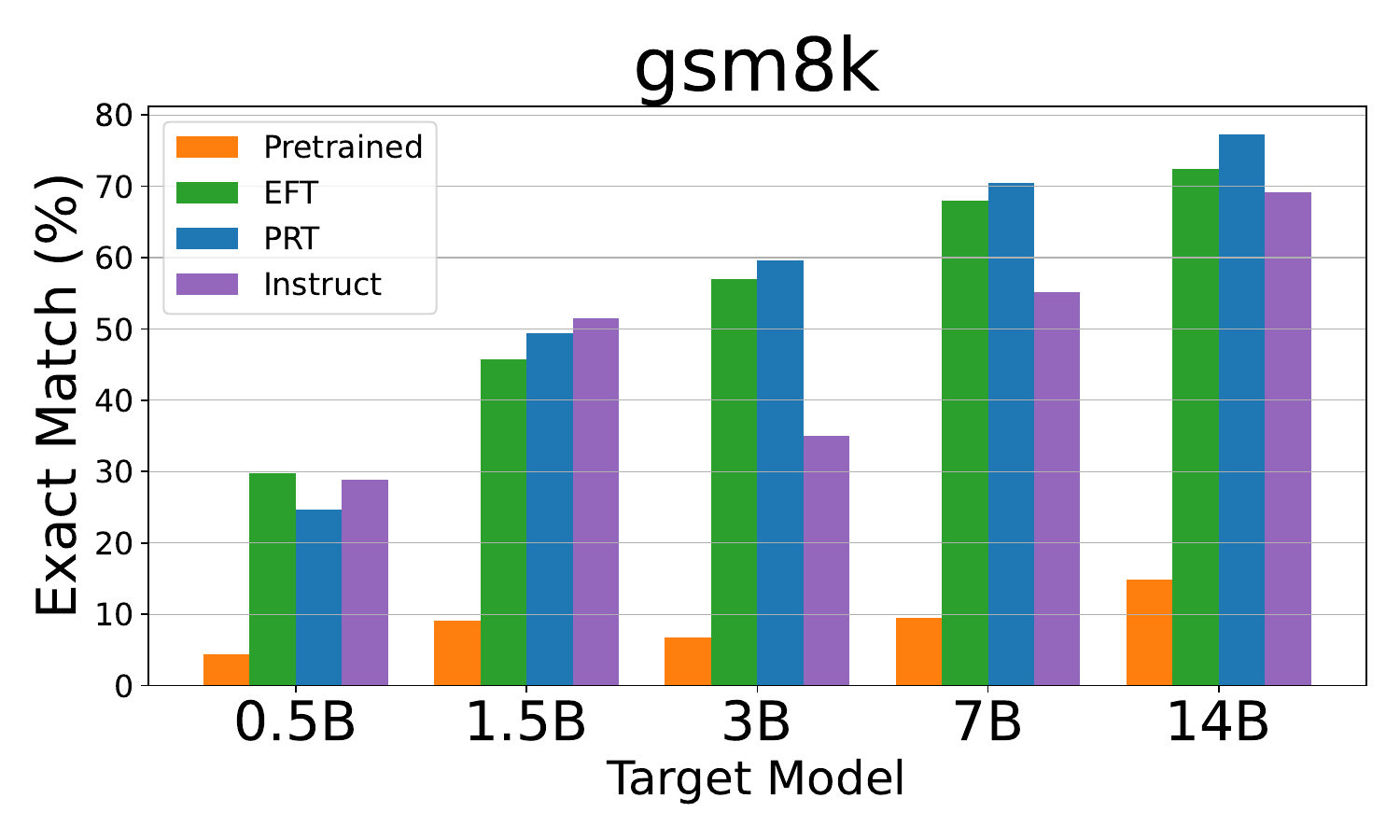}
            \includegraphics[width=\linewidth]{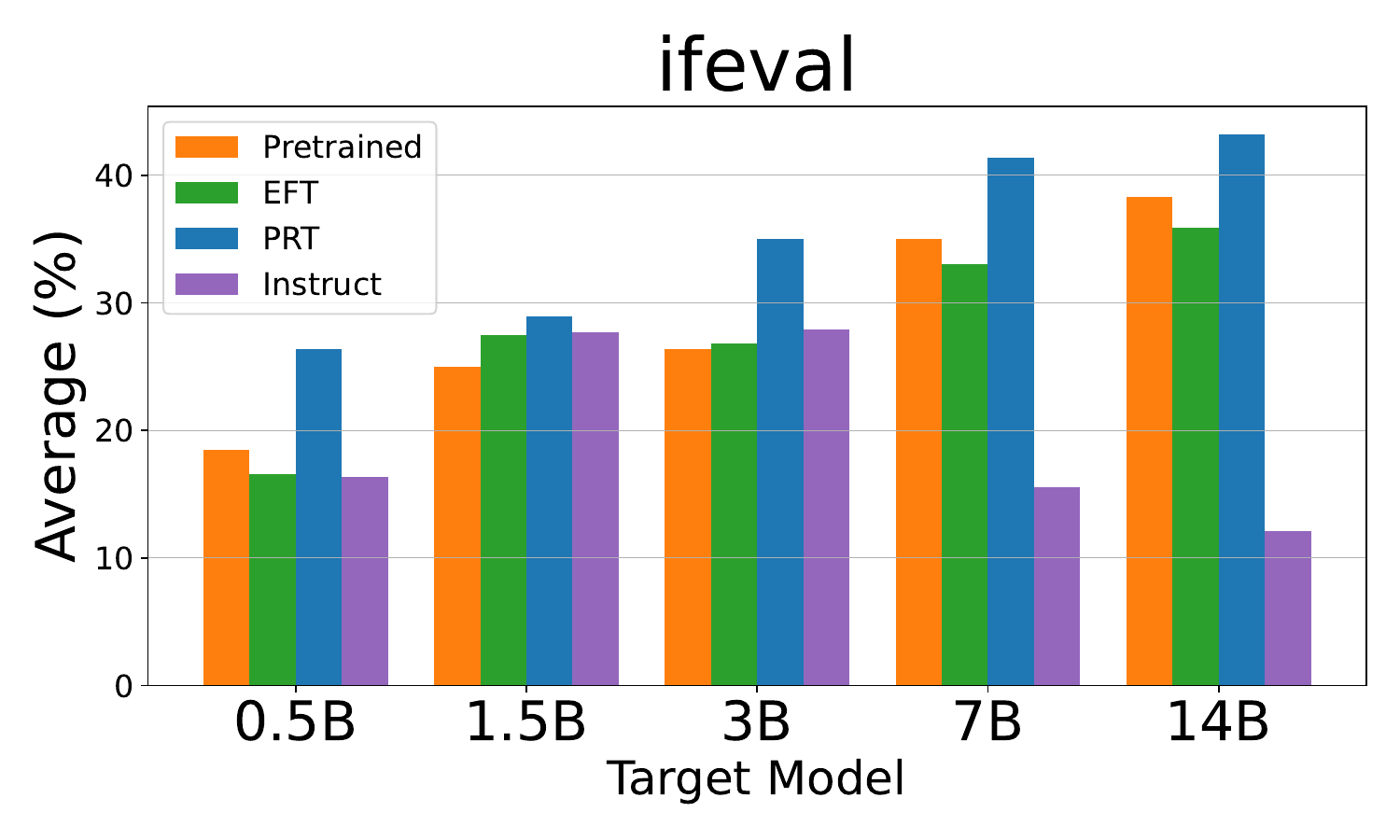}
        \end{minipage}
    \caption{\scriptsize Qwen2.5-0.5B}
    \label{fig:qwen2.5}
    \end{subfigure}
    \hfill
    \begin{subfigure}[t]{0.24\textwidth}
        \begin{minipage}{\textwidth}
            \centering
            \includegraphics[width=\linewidth]{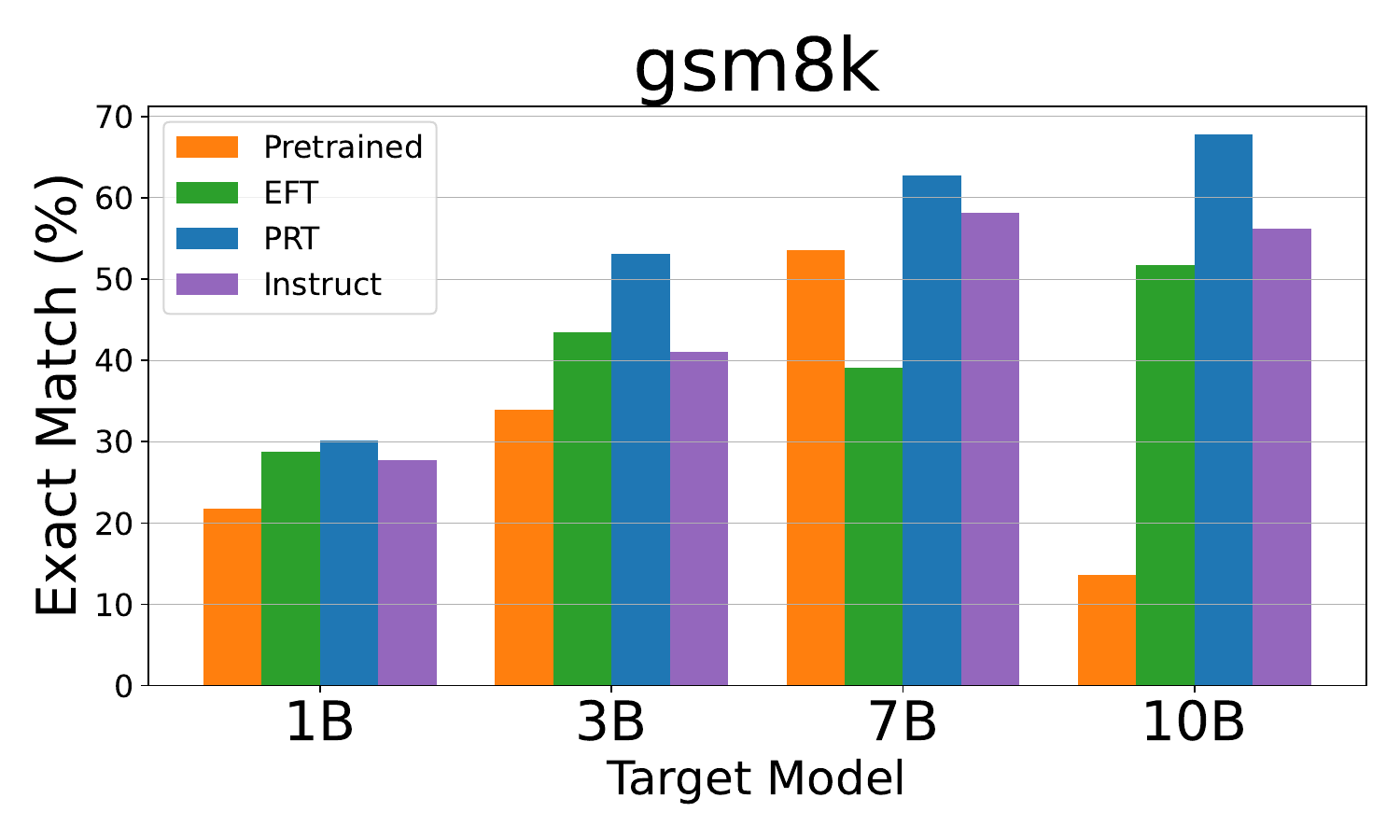}
            \includegraphics[width=\linewidth]{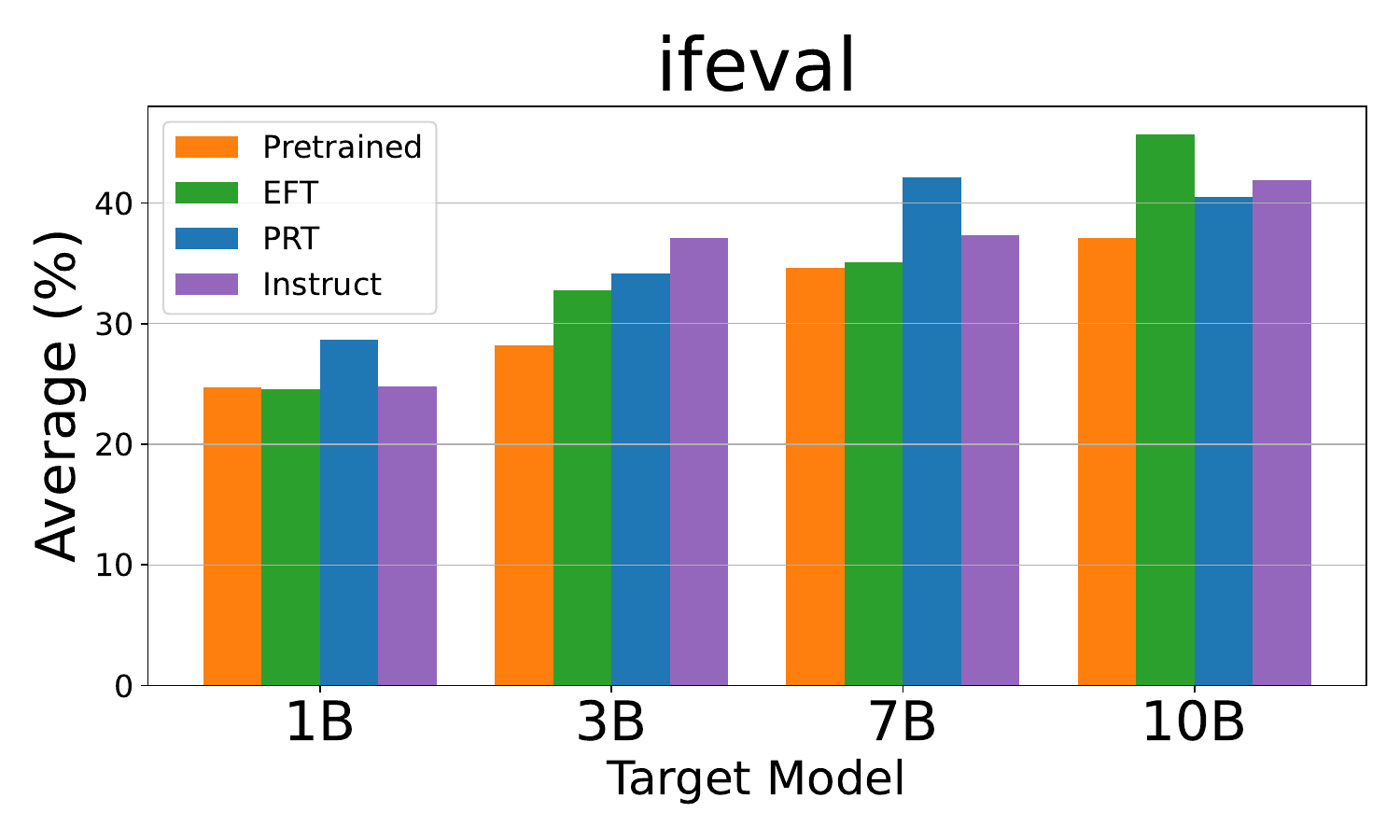}
        \end{minipage}
    \caption{\scriptsize Falcon3-1B}
    \label{fig:falcon3}
    \end{subfigure}
    \vspace{-0.2cm}
    \caption{Evaluations of inference-time instruction-tuned models on GSM8k and IFEval benchmarks. Each subcaption refers to the source pretrained model, and the labels in x-axis are target pretrained models. \textcolor{ptcolor}{\bf Pretrained} means the zero-shot inference by each target model as a baseline, and \textcolor{ftcolor}{\bf Instruct} means the instruct-tuned target model as an oracle result.}
    \label{fig:llm_results}
    \vspace{-0.1cm}
\end{figure*}

\vspace{-0.2cm}
\paragraph{Setups for vision experiments.} 
(1) Pretrained models: We employed CLIP models~\citep{ilharco2021openclip} pretrained on various datasets including OpenAI’s proprietary dataset~\citep{radford2021learning}, LAION-400M~\citep{schuhmann2021laion}, LAION-2B~\citep{schuhmann2022laion}, and DataComp-1B~\citep{gadre2024datacomp}.
(2) Fine-tuning: For each fine-grained dataset, such as Cars~\citep{krause20133d} and CUB~\citep{WahCUB_200_2011}, we first constructed and fixed the classification layer of each pretrained model for zero-shot classification, and then fine-tuned (or reward-tuned) its feature extractor on the train set.
(3) Evaluation: We evaluated models on the test set of each dataset where the training set was used for tuning the models.

\vspace{-0.2cm}
\paragraph{Setups for language experiments.} 
(1) Pretrained models: We employed pretrained language models of the decoder-only Transformers such as \llama series~\citep{touvron2023llama2}, Qwen series~\citep{yang2024qwen2.5,yang2024qwen2} and Falcon series~\citep{Falcon3}. (2) Fine-tuning: We performed instruction tuning on these pretrained models with Tulu v2 dataset~\citep{ivison2023camels}, a large-scale dataset consisting of demonstrations for following given instructions.
(3) Evaluation: We evaluated models on downstream benchmarks, particularly on the GSM8k benchmark~\citep{cobbe2021training} for reasoning ability and IFEval benchmark~\citep{zhou2023instruction} for instruction-following ability.

\vspace{-0.1cm}
\subsection{Results}
\vspace{-0.1cm}

Figures~\ref{fig:vision_results}, \ref{fig:llm_results} and \ref{fig:qwen2-to-qwen2.5} show the  results of inference-time tuning using PRT and EFT, from a {\it source} pretrained model, i.e., the one used in tuning the (either explicit or implicit) reward, to a {\it target} pretrained model, i.e., the one never used in tuning the reward.
Here PRT refers to the vanilla one without regularization for fair comparison.
We also compare them with the zero-shot performance of pretrained models themselves as baselines, and with their fine-tuned performance as oracles.
Note that, in the case that the source and target are the same pretrained model (i.e., the leftmost one in x-axis), the results correspond to the ones without inference-time tuning.
Overall, these results support our main claim in this paper, i.e., PRT achieves comparable accuracy to EFT with less inference cost. (See also Section~\ref{sec:memory and speed analysis})

Particularly, the pairs of the source/target pretrained models can be categorized into either of the following two scenarios: (i) upscaling the source model to larger target models, and (ii) updating the pretrained knowledge of the source model to the target model with better pretraining data.
The results for the former scenario (i) are shown in Figures~\ref{fig:resnet50_openai}-\ref{fig:convnext_laion400m},~\ref{fig:llama2}-\ref{fig:falcon3}, where the source and target models are pretrained on the same dataset, and thus the only difference between them is the network architecture.
The results for the latter scenario (ii) are shown in Figures~\ref{fig:vit-b-16_openai}-\ref{fig:vit-b-16_laion2b} and Figure~\ref{fig:qwen2-to-qwen2.5}.
In both scenarios, we observe that PRT successfully leverages the improved capabilities of target pretrained models, by reusing the fixed reward model trained with the source pretrained model.

\COMMENTOUT{ %
\WIP{As shown in Figure~\ref{fig:llm_results}, PRT outperformed the Pretrained in all cases. 
Furthermore, even when compared with EFT or Instruct, PRT demonstrated performance comparable to or exceeding those models, thereby confirming its effectiveness.    
}
}

\COMMENTOUT{ %
\paragraph{Scenario (i): Upscaling the model size only.}

The results for this scenario are shown in Figures~\ref{fig:resnet50_openai}-\ref{fig:convnext_laion400m},~\ref{fig:llama2}-\ref{fig:falcon3}.
In these experiments, the source and target models are pretrained on the same dataset, and thus the only difference between them is the network architecture.
In general, the larger architecture leads to the higher capability of the pretrained models.
The overall results show that the inference-time tuning benefits from this fact, i.e., 

As shown in Figures~\ref{fig:llama2} $\sim$~\ref{fig:falcon3}, 
PRT outperforms the Pretrained on each task, indicating that the PRT model can effectively leverage its learned capabilities during inference, 
even when the Target Model is larger than the model used during training.

\WIP{Furthermore, in several tasks, PRT achieves higher scores than the Instruct Model. 
This does not necessarily mean that PRT inherently learns better than standard instruction tuning; 
rather, under the specific tasks and model configurations used in this study, PRT happened to yield higher scores.
Theoretically, there should be no difference between the optimal models obtained through PRT and instruction tuning, but differences in how the model parameters are updated during training may have led to the observed outcomes. 
Meanwhile, a more in-depth analysis of the distinctions among tuning methods, as well as the unique characteristics of PRT during training, remains a topic for future work.}

\paragraph{Scenario (ii): Updating the pretrained knowledge.}
\WIP{In this scenario, we evaluate the effectiveness of PRT when upgrading model versions using the Qwen2 and Qwen2.5 families.
We evaluate the effectiveness of PRT when upgrading model versions using the Qwen2 and Qwen2.5 families.}

During PRT training, we used Qwen2-0.5B as the Pretrained Model and initialized the reward model with Qwen2-0.5B.
For inference, we evaluated Qwen2.5 models with 0.5B, 1.5B, 3B, 7B, and 14B parameters as the target model.
As shown in Figure~\ref{fig:qwen2_gsm} and \ref{fig:qwen2_ifeval}, PRT outperformed the Pretrained in all settings and achieved performance comparable to or better than EFT and Instruct.
This indicates that training PRT on an older version of the model allows for immediate deployment of the PRT model when a new version is released, enabling inference results similar to those of a instruct model.
}

\begin{figure}[t]
    \centering
    \begin{minipage}{0.45\textwidth}
        \includegraphics[width=\linewidth]{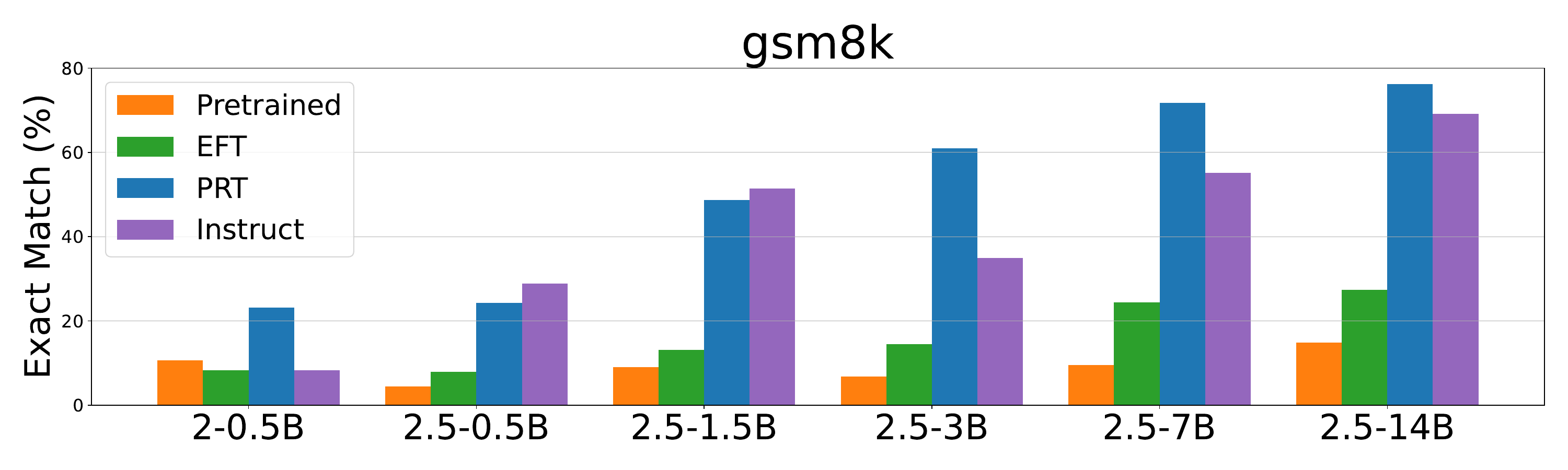}
        \includegraphics[width=\linewidth]{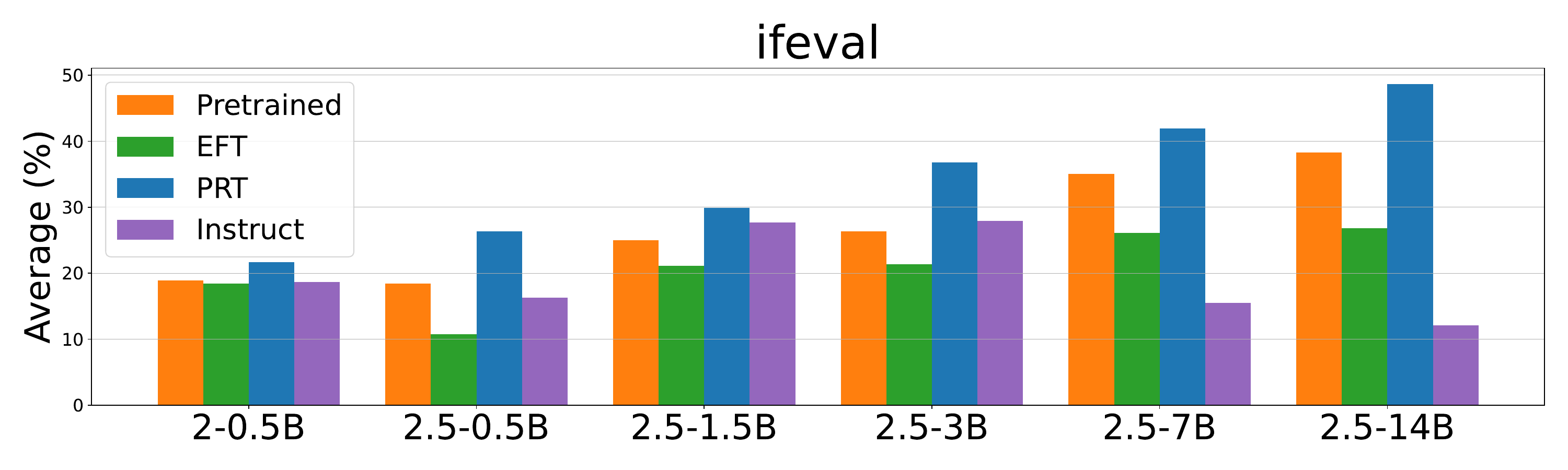}
    \end{minipage}
    \vspace{-0.2cm}
    \caption{\footnotesize Inference-time tuning from Qwen2-0.5B to the Qwen2.5 models with various sizes.}\label{fig:qwen2-to-qwen2.5}
    \vspace{-0.2cm}
\end{figure}
\begin{figure}[t!]
    \centering

    \begin{tcolorbox}[
        width=0.98\linewidth,    %
        colback=white,
        colframe=black,
        colbacktitle=black,
        coltitle=white,
        fonttitle=\bfseries,
        title=Prompt,           %
    ]

    Question: James has 6 more candies than Robert. John has twice as many candies as Robert. If John has 54 candies, how many more candies does John have than James?
    
    Answer:
      
    \tcbsubtitle{Model Response}

    Let R be the number of candies Robert has.
    \sethlcolor{yellow!40}\hl{John has 2R candies.} 
    \sethlcolor{yellow!40}\hl{James has 6} + R candies. We know that John has 54 candies, so 2R = 54.
    \sethlcolor{cyan!40} \hl{R = 27. James has }6 + 27 = 33 candies. John has 54 - 33 = 21 more candies than James.
    
    \tcblower
    \sethlcolor{yellow!40} 
        \begin{minipage}{0.48\linewidth} %
            \centering
            \includegraphics[width=\linewidth]{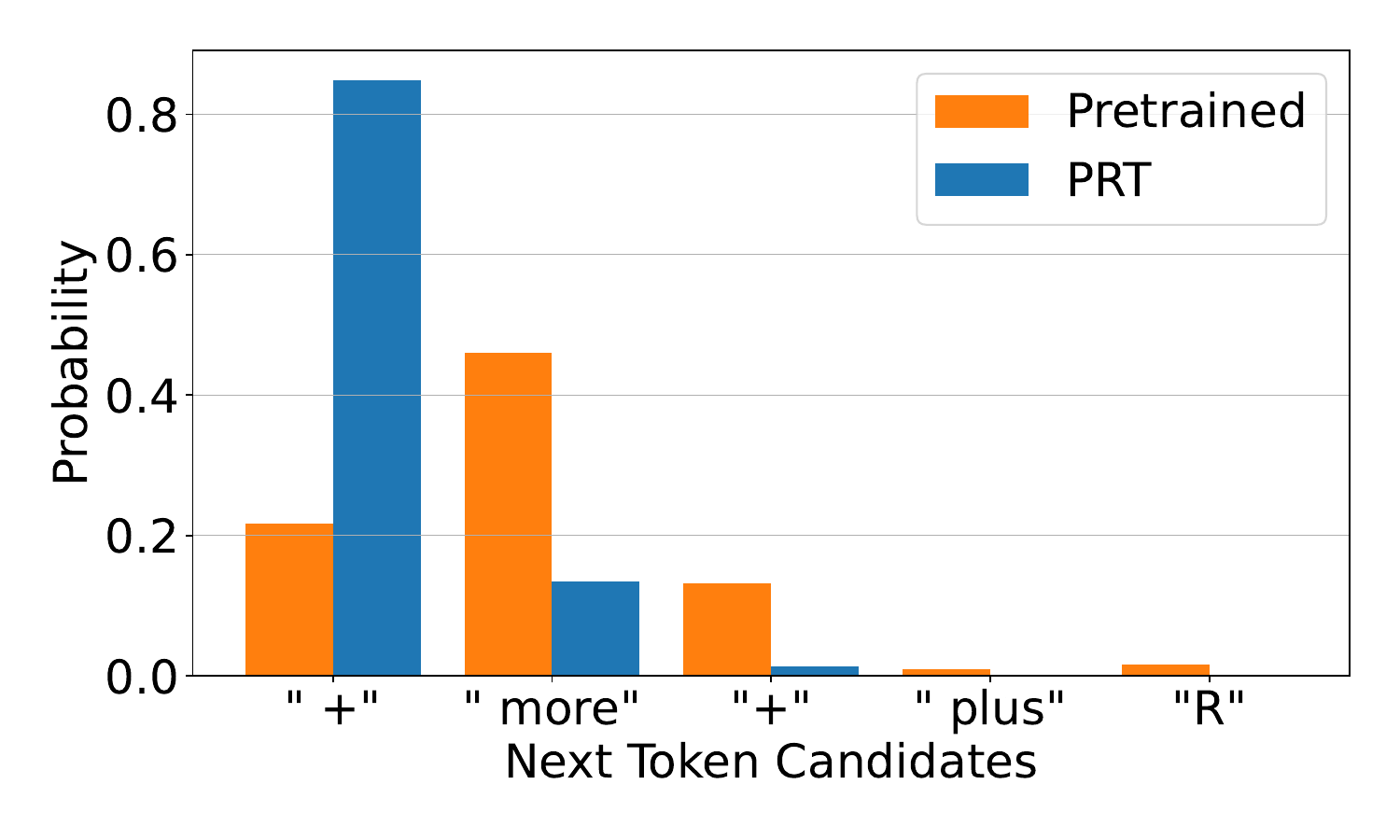}
            \subcaption{Next Token Candidates Following "... \hl{John has 2R candies. James has 6}"}
            \label{fig:model_res_prob1}
        \end{minipage}
        \hfill
        \begin{minipage}{0.48\linewidth} %
            \centering
            \includegraphics[width=\linewidth]{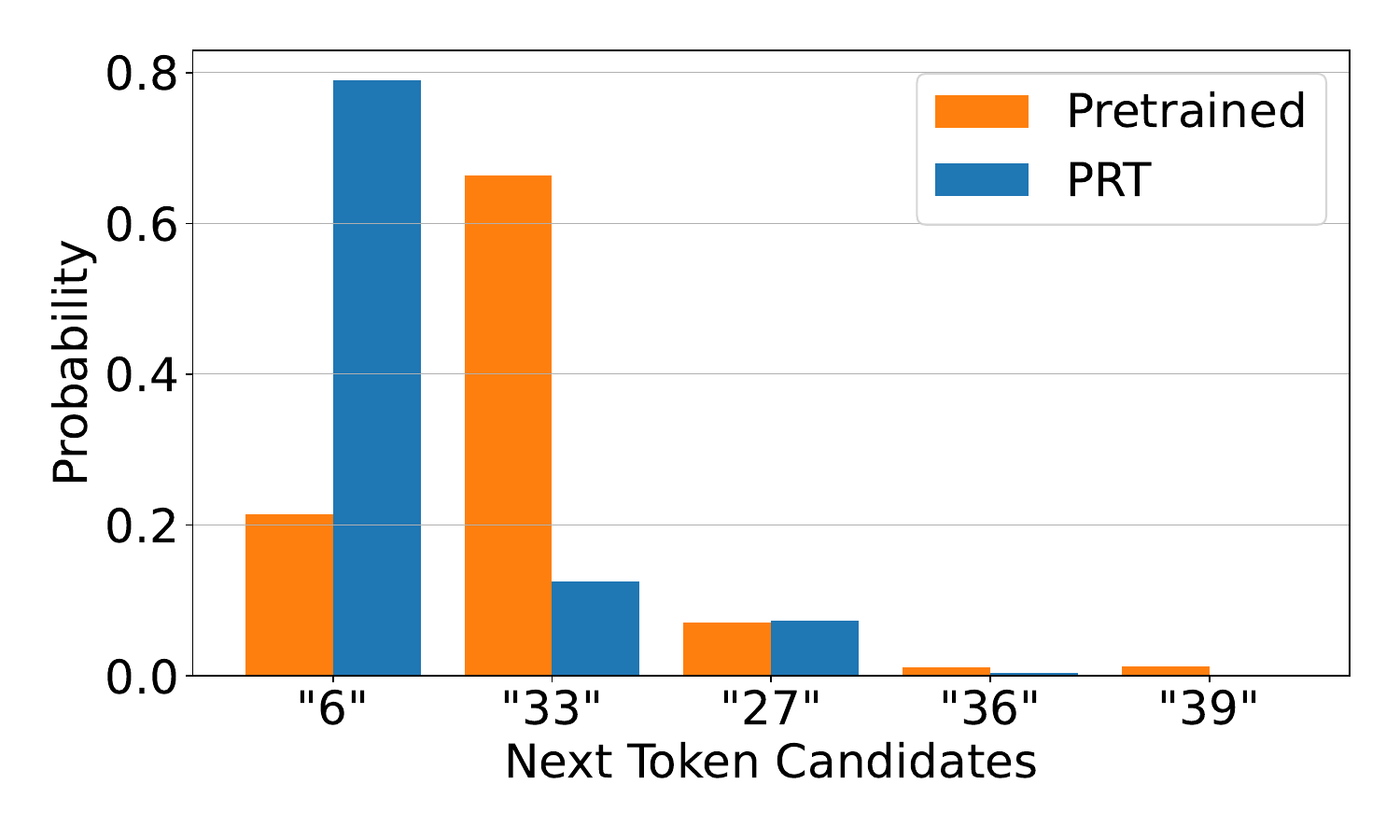}
            \subcaption{Next Token Candidates Following "... \sethlcolor{cyan!40} \hl{R = 27. James has }"}
            \label{fig:model_res_prob2}
        \end{minipage}
    \end{tcolorbox}
    \vspace{-0.2cm}
    \caption{\footnotesize Changes in next-token probabilities by PRT.}
    \label{fig:model_response}
    \vspace{-0.4cm}
\end{figure}

\vspace{-0.1cm}
\subsection{Qualitative Analysis}
\vspace{-0.1cm}
To analyze the behavior of PRT in more detail, we examine the tokens generated by the model. 
In this analysis, we used Llama3-8B as the target model, while Llama-3.2-1B was used for training and reward-model initialization.
Figure~\ref{fig:model_response} presents the output obtained under these settings. 
We observe that the PRT output correctly produces the chain of reasoning that leads to the final answer of 21.\footnote{By contrast, the pretrained Llama3-8B output was only “18.” For details, see Appendix \ref{appendix:qualitative_analysis}.}
Additionally, Figures~\ref{fig:model_res_prob1} and \ref{fig:model_res_prob2} show the top-5 prediction probabilities for the tokens that follow each highlighted sentence.
From these distributions, we see that the target pretrained model’s predictions are altered by the PRT model. 
In Figure~\ref{fig:model_res_prob2}, for instance, the pretrained model  attempts to output the answer "33" directly, whereas the PRT outputs "6" which reflects the step-by-step reasoning capability acquired through inference-time instruction-tuning.

\begin{figure}[t!]
    \centering
    \begin{minipage}{0.22\textwidth}
        \centering
        \includegraphics[width=\linewidth]{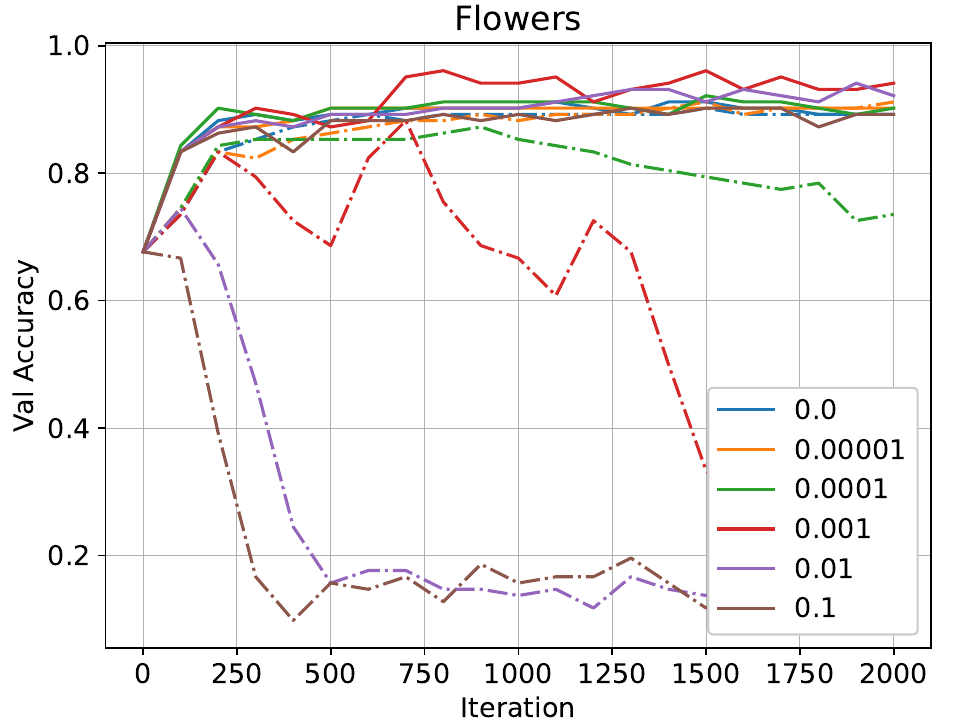}
        \vspace{-0.5cm}
        \subcaption{\scriptsize Accuracy of PRT (solid) and reward-only (dotted) during training.}\label{subfig:reward only accuracy}
    \end{minipage}
    \hfill
    \begin{minipage}{0.22\textwidth}
        \centering
        \includegraphics[width=\linewidth]{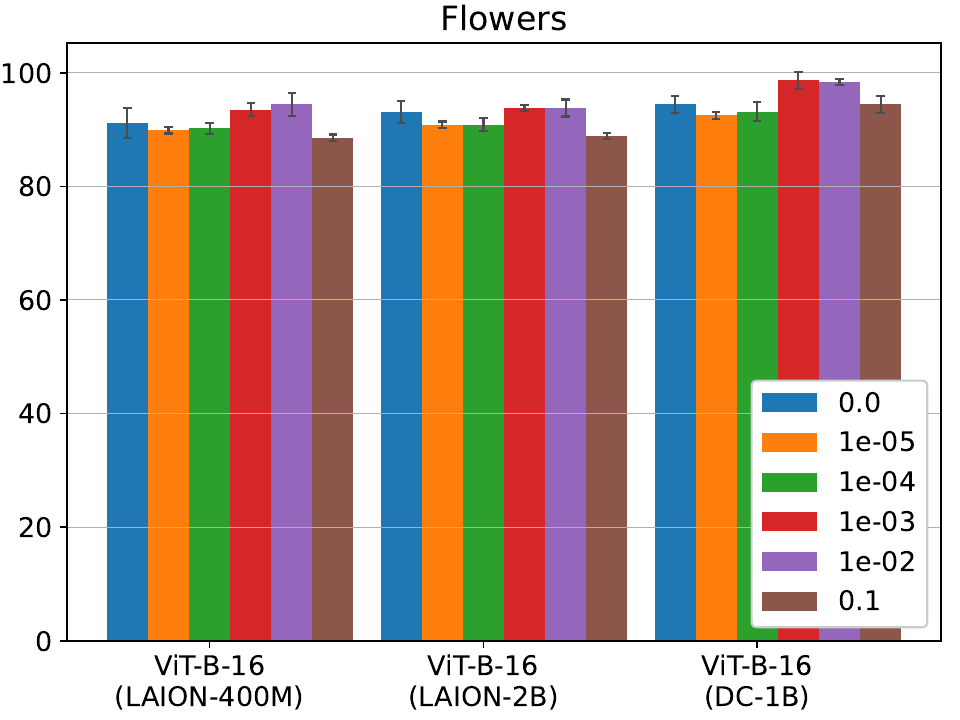}
        \vspace{-0.5cm}
        \subcaption{\scriptsize Inference-time tuning with various regularization coefficients $\alpha$.}\label{subfig:vision results with varying alpha}
    \end{minipage}
    \vspace{-0.2cm}
    \caption{\footnotesize Analysis on how the EM regularization affects on performance of PRT with various coefficient $\alpha$. Reward-only refers to the exponential distribution $\rho_\theta(y|x)$ in Sec~\ref{sec:pac-bayesian perspective}. }
    \label{fig:vision reg factors}
    \vspace{-0.3cm}
\end{figure}
\begin{figure}[t!]
    \centering
    \begin{minipage}{0.22\textwidth}
        \centering
        \includegraphics[width=\linewidth]{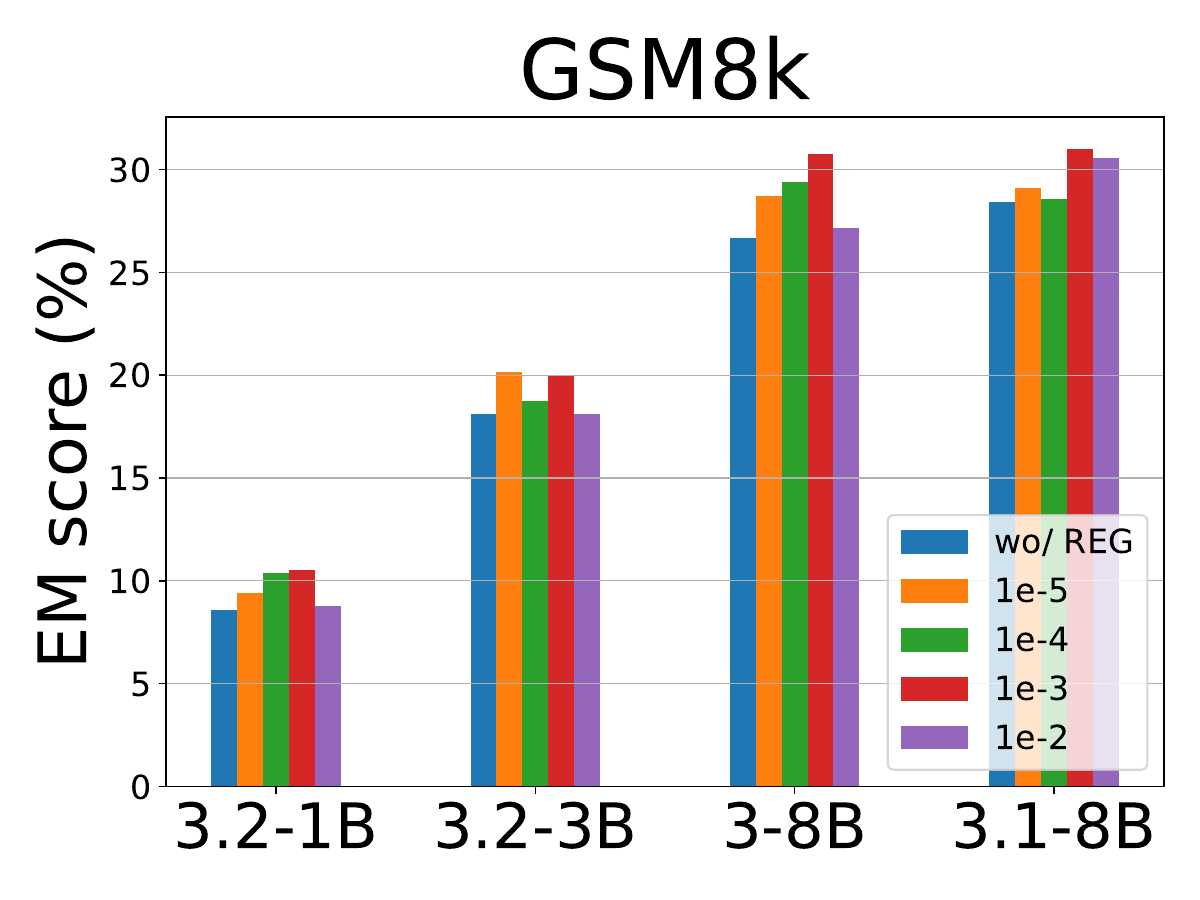}
        \vspace{-0.6cm}
        \subcaption{Llama3}
        \label{fig:reg_llama3}
    \end{minipage}
    \hfill
    \begin{minipage}{0.22\textwidth}
        \centering
        \includegraphics[width=\linewidth]{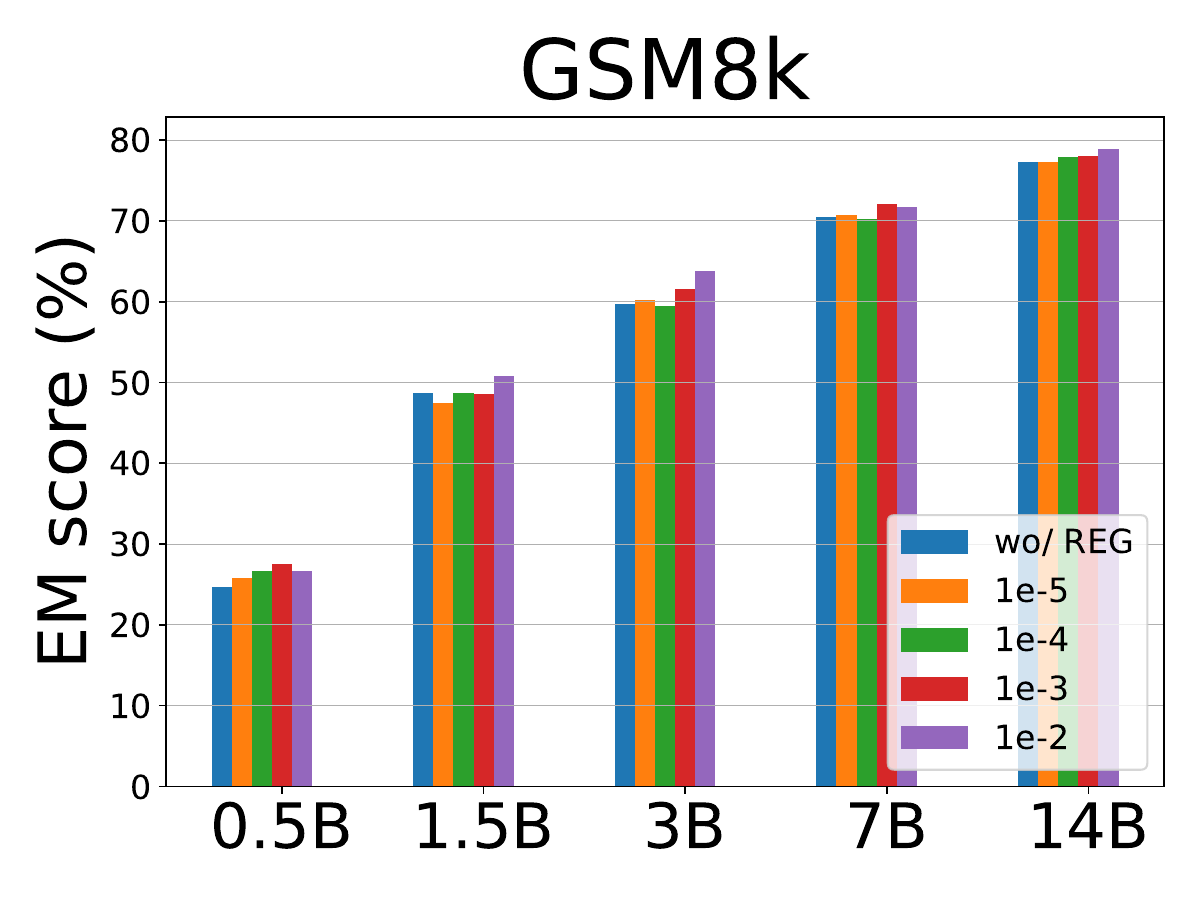}
        \vspace{-0.6cm}
        \subcaption{Qwen2.5}
        \label{fig:reg_qwen2.5}
    \end{minipage}
    \vspace{-0.2cm}
    \caption{\footnotesize Performance of PRT on GSM8k with various regularization coefficients $\alpha$.}
    \label{fig:ablation_reg}
\end{figure}

\vspace{-0.1cm}
\subsection{Effects of Entropy Regularization}
\vspace{-0.1cm}

Here we employ Entropy Maximization (EM) regularization in PRT training, with its coefficient $\alpha \in \mathbb{R}_{\geq 0}$.
In Section~\ref{sec:pac-bayesian perspective}, the PAC-Bayesian analysis indicated that the generalization capability of PRT models is affected by the entropy of the reward distribution $\rho_\theta(y|x)$ defined by the reward model $r_\theta(x,y)$.
In Figure~\ref{fig:vision reg factors} and \ref{fig:ablation_reg}, we empirically analyze its effect by varying $\alpha$.
First of all, we observed that the EM regularization generally suppresses the accuracy of the reward itself (Fig.~\ref{subfig:reward only accuracy}) as $\alpha$ increases, which can be naturally expected.
However, interestingly, the performance of PRT (Fig.~\ref{subfig:vision results with varying alpha}, \ref{fig:ablation_reg}) is not degraded even with such rewards, but also sometimes boosted regardless of the accuracy of rewards themselves.
The regularization itself may not be yet practical since the optimal $\alpha$ tends to be dependent on tasks and pretrained models, but these analyses provide valuable insights for generalization capability of PRT models, which may lead to future exploration of more practical regularization.
%

\vspace{-0.1cm}
\subsection{Memory and Speed Analysis}\label{sec:memory and speed analysis}
\vspace{-0.1cm}

\COMMENTOUT{
We investigated the impact of EFT and PRT on inference speed.
The evaluation was conducted using Llama2 with a reward model of size 7B and a target model of size 13B, as well as Llama3 with a reward model of size 1B and a target model of size 8B.
For each setting, six samples were generated, and the average and standard deviation of the generation time per token were computed.    
}

Since PRT introduces an auxiliary model as the reward for both training and inference, we investigate how the memory usage and speed increase or decrease, compared to standard fine-tuning in training and EFT in inference.
For training, Table~\ref{tab:vision training stats} in Appendix shows that, while training time slightly increases due to the auxiliary model, the increase in memory usage is relatively negligible because the pretrained model in PRT does not require back-propagation.
For inference, Tables~\ref{tab:vision inference stats} and \ref{tab:llm inference_speeds} show that PRT successfully reduces both inference speed and memory usage compared to EFT, which highlights the benefit of employing explicit reward models.

\section{Related Work}

\paragraph{Tuning by Refining Predictions.}
In this paper we focused on inference-time tuning accomplished by refining predictions from the underlying pretrained model.
In particular, our baseline is the emulated fine-tuning (EFT;~\citet{mitchell2024emulator}) which established the interpretation of inference-time tuning based on KL-regularized reward maximization.
Parallel work by \citet{liu2024tuning} also proposed an essentially same method called proxy-tuning.
While these previous work focused on inference-time tuning with pretrained models that only differ in their model scale, we examined a more general setting that the pretrained models may differ even in their architectures or pretraining datasets.
Also, the previous work assumed the fine-tuned model was prepared in advance, and explored how to exploit it for a new pretrained model.
In contrast, we reexamined the assumption  and explored an alternative to fine-tuning that is more suitable for inference-time tuning.
As a consequence, at the cost of a little overhead in training, our approach successfully halves the overhead in inference-time tuning.

The literature of controlled text generation~\citep{krause2021gedi,yang2021fudge,pascual2021plug,li2023contrastive,deng2023reward} is also related to our work, but they have explored specific methods for attribute-conditioned text generation, which requires a classifier for some attributes, rather than fine-tuning for general tasks as in our work.
In particular, \citet{deng2023reward} proposed to control language models with reward models similarly to our work, but which are trained with a manually-designed loss specific to text classification tasks, while our method employs  standard loss for fine-tuning and thus can be naturally applied to broader domains and tasks such as vision classification and instruction tuning.

\vspace{-0.3cm}
\paragraph{Tuning by Editing Parameters or Activations.}
Another possible approach for inference-time tuning would be directly editing the parameters or activations of pretrained models, instead of their predictions.
A bunch of research on parameter editing, including \citep{ilharco2023editing,gueta2023knowledge,ortiz2024task,yadav2024ties,chijiwa2024transferring,daheim2024model}, addresses inference-time tuning by leveraging existing fine-tuned results.
However, most of these work aimed to tune a given pretrained model for multiple tasks, and thus cannot be used to tune a newly provided pretrained model.
Although \citet{chijiwa2024transferring} tackled the challenge of inference-time tuning with different pretrained models, it still requires the pretrained models to share the same architecture, which may be a fundamental limitation of the approach by editing parameters.
Similarly, there is a line of research~\citep{dathathri2020plug,hernandez2023inspecting,chuang2024dola,li2024inference} on tuning by editing the activations, but they also require the model architecture to have the same dimension for activations.
In contrast to these approaches, the approach by refining predictions in this paper would be more promising since it is completely free from the choice of the model architectures.

\vspace{-0.3cm}
\paragraph{Reward for Language Models.}
For language models, the notion of rewards has been exploited mainly in two lines of research: (i) Reinforcement Learning from Human Feedback (RLHF;~\citet{christiano2017deep,jaques2017sequence,ouyang2022training}), and (ii) multi-step reasoning~\citep{cobbe2021training,uesato2022solving}.
In RLHF, reward models are first trained on a dataset of human feedbacks, such as pairs of prefered-disprefered responses, and then used for training LLMs by reinforcement learning.
Although recent methods~\citep{rafailov2024direct,calandriello24humanalignment,ethayarajh24modelalignment,rafailov2024from} successfully bypass the explicit use of reward models, they are still learning human preferences through implicit rewards.
Several work~\cite{liu2024decoding,chakraborty2024transferqstar,khanov2024args} also integrate the idea of inference-time decoding into RLHF, but they assume the reward model is already prepared by RLHF.
In multi-step reasoning, reward models are trained to evaluate the intermediate process of reasoning by LLMs, from datasets with process or outcome supervision, and then used as verifiers for sampling such as Best-of-N sampling~\citep{lightman2024lets_verify,wang2024math} or self-consistency decoding~\citep{wang2023selfconsistency,luo2024improve}.
In contrast to these lines of work, where rewards are used for either training LLMs or verifying inference, the reward model in our approach is directly trained for {\it classification or generation} from ground-truth labels or tokens through the same loss and dataset for standard fine-tuning, which may lead to a broad range of applications in various domains including language generation.

\section{Conclusion}

In this paper, we introduced a new fine-tuning principle called portable reward tuning (PRT) as an alternative to standard fine-tuning, based on the interpretation of fine-tuning as reward maximization with KL regularization.
PRT naturally fits into the framework of inference-time tuning as the reward maximization.
We theoretically analyzed its basic properties of both training and inference, revealing how the reward evolves in training and how the choice of pretrained models affects inference-time tuning.
Also we empirically confirmed that PRT can achieve comparable accuracy to previous work of inference-time tuning, even with less computational overhead.

\ifdefined\preprint
\else
    \section*{Impact Statement}
    This paper investigates basic properties of a proposed principle of fine-tuning, which aims to advance the field of Machine Learning.
    There may be potential societal consequences of our work, but discussing about them is out of scope of this work.
\fi

\bibliography{main}
\bibliographystyle{icml2025}

\appendix
\onecolumn

\section{Proof of Proposition~\ref{prop:evaluation of KL divergence}}\label{app:section:proof of Proposition 3.2}
\begin{proof}
    Let $Z_\theta(x) := \mathbb{E}_{y\sim\pi_\rmpt(y|x)} [ \exp(r_\theta(x, y)) ]$ and similarly $\wt{Z}_\theta(x) := \mathbb{E}_{y\sim\wt{\pi}_\rmpt(y|x)}  [ \exp(r_\theta(x, y)) ]$ for convenience.
    \begin{align}
        &\KL(\pi_\theta(\text{-}|x) \parallel \wt{\pi}_\theta(\text{-}|x)) \nonumber\\
        &= \sum_y \pi_\theta(y|x) \log\left( \frac{\pi_\theta(y|x)}{\wt{\pi}_\theta(y|x)} \right) \nonumber\\
        &= \sum_y \frac{\pi_\rmpt(y|x)\exp(r_\theta(x,y))}{Z_{\theta}(x)} \log\left( \frac{\pi_\theta(y|x)}{\wt{\pi}_\theta(y|x)} \right) \nonumber\\
        &= \sum_y \frac{\pi_\rmpt(y|x)\exp(r_\theta(x,y))}{Z_{\theta}(x)}  \nonumber\\
        & \quad \ \times \left\{ \log\left( \frac{\pi_\rmpt(y|x)}{\wt{\pi}_\rmpt(y|x)} \right) + \log\left( \frac{\wt{Z}_\theta(x)}{Z_\theta(x)} \right) \right\} \nonumber\\
        & = \frac{1}{Z_\theta(x)}\sum_y \exp\left(r_\theta(x,y)\right) \pi_\rmpt(y|x)  \log\left( \frac{\pi_\rmpt(y|x)}{\wt{\pi}_\rmpt(y|x)} \right) \nonumber\\
        & \quad \ + \log\left( \frac{\wt{Z}_\theta(x)}{Z_\theta(x)} \right) \nonumber\\
        & \leq \frac{\max_y \exp r_\theta(x,y)}{\mathbb{E}_{y\sim\pi_\rmpt(y|x)} \exp r_\theta(x,y)} \KL(\pi_\rmpt(\text{-}|x) \parallel \wt{\pi}_\rmpt(\text{-}|x)) \nonumber\\
        & \quad \ + \log\left( \frac{\wt{Z}_\theta(x)}{Z_\theta(x)} \right) \nonumber.
    \end{align}
    The first term can be bounded by $C\varepsilon$ by combining the assumptions.
    The second term can be evaluated as follows:
    \begin{align*}
        & \frac{\wt{Z}_\theta(x)}{Z_\theta(x)} \\
        & = \frac{\sum_y \wt{\pi}_\rmpt(y|x) \exp r_\theta(x,y)}{\sum_y \pi_\rmpt(y|x) \exp r_\theta(x,y)} \\
        & = 1 + \frac{\sum_y (\wt{\pi}_\rmpt(y|x) - \pi_\rmpt(y|x)) \exp r_\theta(x,y)}{\sum_y \pi_\rmpt(y|x) \exp r_\theta(x,y)} \\
        &\leq 1 + \frac{\max_y \exp r_\theta(x,y)}{\mathbb{E}_{y\sim\pi_\rmpt(y|x)} \exp r_\theta(x,y)} \sum_y \big|\wt{\pi}_\rmpt(y|x) - \pi_\rmpt(y|x)\big| \\
        &\leq 1 + C \sum_y \big|\wt{\pi}_\rmpt(y|x) - \pi_\rmpt(y|x)\big| \\
        & \quad (\text{by the assumption}) \\
        &\leq 1 + C'\sqrt{\KL(\wt{\pi}_\rmpt(\text{-}|x) \parallel \pi_\rmpt(\text{-}|x))} \\
        & \quad (\text{by Pinsker's inequality}) \\
        &\leq 1 + C'\sqrt{\varepsilon}.
    \end{align*}
    Since $\log(1+C'\sqrt{\varepsilon})=O(\sqrt{\varepsilon})$ holds asymptotically, finally we  have $\KL(\pi_\theta(y|x) \parallel \wt{\pi}_\theta(y|x))) \leq O(\sqrt{\varepsilon})$.
\end{proof}

\section{Experimental Setup}
\label{app:section:exp_setup}

\subsection{Image Classification Tasks.}

\paragraph{Training Setups}
In training of either standard fine-tuning or PRT, we used the same hyperparameters following existing work~\citep{ilharco2023editing} as follows: learning rate $= 1\times 10^{-5}$, batch size $= 128$, number of iterations $=2000$, optimizer $=$ Adam, warmup iterations $=500$, learning rate scheduler $=$ cosine annealing. We conducted all training on a single A100 GPU.

\paragraph{Fine-Tuning and Evaluation Dataset}
We consider the following image classification tasks:

\begin{itemize}
    \item \textbf{Aircraft}~\citep{maji13fine-grained}: A dataset with 100 classes of aircrafts, 100 images per class.
    \item \textbf{Caltech101}~\citep{li2022caltech101}: A dataset with 101 classes of objects,  40 to 800 images per class.
    \item \textbf{Cars}~\cite{krause20133d}: A dataset with 196 classes of various cars.
    \item \textbf{CIFAR-100}~\citep{krizhevsky2009learning}: A dataset with 100 classes of 32x32 color images.
    \item \textbf{Country211}~\citep{radford2021learning}: A dataset of photos taken in 211 different countries.
    \item \textbf{CUB}~\cite{WahCUB_200_2011}: A dataset of images with 200 bird species for fine-grained classification.
    \item \textbf{Flowers}~\citep{nilsback2008automated}: The Oxford 102 Flower Dataset, containing images of 102 flower categories for fine-grained classification.
    \item \textbf{RESISC45}~\citep{cheng2017remote}: A dataset of images with 45 scene classes.
\end{itemize}

\paragraph{Models}
We employed the following model architectures for the vision feature extractors of CLIP models:

\begin{itemize}
    \item \textbf{ResNets (ResNet-50, ResNet-101;~\citet{he2016resnet})}
    \item \textbf{ConvNext~\citep{liu2022convnet}}
    \item \textbf{Vision Transformers (ViT-B-16, ViT-L-14;~\citet{dosovitskiy2021an})}
\end{itemize}

\subsection{Language Modeling Tasks.}

\paragraph{Training Dataset and Settings}
For instruction tuning, we used the Tulu v2 dataset~\cite{ivison2023camels}, which is a large-scale dataset designed to improve the instruction-following capabilities of language models. 
The dataset includes a diverse set of instructions and corresponding responses, covering a wide range of topics and tasks.
The training conditions are as follows: learning rate = $2 \times 10^{-5}$, batch size = 128, number of epochs = 2, optimizer = Adam, warmup ratio = 0.03, and learning rate scheduler = linear.
We conducted all training on 8 NVIDIA A100 GPUs.

\paragraph{Evaluation Dataset}
We consider the following language modeling tasks: GSM8k and IFEval.
\begin{itemize}
    \item \textbf{GSM8K}~\citep{cobbe2021training}: A dataset for evaluating the ability of models to solve grade-school math problems. Evaluation is based on the exact match metric, which measures whether the model's final answer exactly matches the correct answer.
    \item \textbf{IFEval}~\citep{zhou2023instruction}: A dataset for evaluating the ability of models to perform information extraction tasks. It employs four evaluation metrics: instruction-level strict accuracy, instruction-level relaxed accuracy, prompt-level strict accuracy, and prompt-level relaxed accuracy, which comprehensively assess the model’s ability to follow complex instructions. In this paper, we report the average of these four scores as the evaluation metric.
\end{itemize}

\paragraph{Models}
We evaluated the proposed PRT method on the following models:

\begin{itemize}
    \item \textbf{\llama 2 Series}~\citep{touvron2023llama}: A family of large-scale language models developed by Meta AI, available in various sizes, including 7B, 13B, and 70B parameters, designed for a wide range of natural language processing tasks.
    \item \textbf{\llama 3 Series}~\citep{dubey2024llama}: A family of large-scale language models developed by Meta AI, introduced in April 2024 with 8B and 70B parameter variants. Compared to Llama 2, it features improvements in tokenizer efficiency, training data scale, and overall model optimization.  
    The subsequent update, {\bf \llama 3.1}, released in July 2024, further enhanced performance by refining pretraining methodologies while maintaining the same parameter sizes.  
    In September 2024, {\bf \llama 3.2} introduced lightweight text models with 1B and 3B parameters, optimized for efficiency in resource-constrained environments such as mobile and edge devices.  
    \item \textbf{Qwen 2 Series}~\citep{yang2024qwen2.5,yang2024qwen2}: A series of large-scale language models developed by Alibaba Cloud's Qwen team, designed for various natural language understanding and generation tasks. The initial {\bf Qwen 2} models were released with parameter sizes such as 1.5B and 3B, focusing on high-quality training data and diverse applications.  
    The subsequent {\bf Qwen 2.5} series expanded the model range, introducing sizes from 0.5B to 72B parameters, with both base and instruction-tuned variants, further improving performance and efficiency.  
    \item \textbf{Falcon 3 Series}~\citep{Falcon3}: A series of open-source large language models developed by the Technology Innovation Institute (TII) in Abu Dhabi, designed to provide accessible and efficient AI solutions. Released in December 2024, Falcon 3 models are available in 1B, 3B, 7B, and 10B parameter sizes, each offered in both Base and Instruct variants. The Base models are tailored for general-purpose text generation, while the Instruct models are fine-tuned for conversational applications.  

\end{itemize}

\section{Memory and Speed Benchmarks}

See Table~\ref{tab:vision training stats} for training-time benchmarks, and Tables~\ref{tab:vision inference stats} and \ref{tab:llm inference_speeds} for inference-time benchmarks.

\begin{table}[h!]
    \centering
        \begin{tabular}{cc|cc}
            \toprule
            Models & & FT & PRT \\
            \midrule
            \multirow{2}{*}{ResNet-50} & Peak memory & $12.84$ GB & $13.08$ GB \\
            & Speed per batch & $38.26_{\pm 3.62}$ ms & $46.36_{\pm 3.34}$ ms \\
            \midrule
            \multirow{2}{*}{ViT-B-16} & Peak memory & $20.17$ GB & $20.58$ GB \\
            & Speed per batch & $116.10_{\pm 0.43}$ ms & $151.13_{\pm 0.48}$ ms \\ 
            \bottomrule
        \end{tabular}
    \caption{Memory usage and average time  per batch in training with batch size $128$.}\label{tab:vision training stats}
\end{table}
\begin{table}[h!]
    \centering
    \resizebox{0.95\linewidth}{!}{%
        \begin{tabular}{ccc|ccc}
            \toprule
            Source Models & Target Models &  & Target FT (Oracle) & EFT & PRT \\
            \midrule
            \multirow{2}{*}{ResNet-50} & \multirow{2}{*}{ViT-B-16} & Peak memory & $1.46$ GB & $2.42$ GB & $2.18$ GB  \\
            & & Speed per batch & $3.52 \pm 0.17$ ms & $10.27 \pm 0.22$ ms & $7.00 \pm 0.12$ ms  \\
            \midrule
            \multirow{2}{*}{ResNet-50} & \multirow{2}{*}{ViT-L-14} & Peak memory & $2.96$ GB & $3.44$ GB & $3.20$ GB \\
            & & Speed per batch & $5.85 \pm 0.11$ ms & $12.52 \pm 0.21$ ms & $9.43 \pm 0.19$ ms \\ 
            \midrule
            \multirow{2}{*}{ViT-B-16 (LAION-400M)} & \multirow{2}{*}{ViT-B-16 (LAION-2B)} & Peak memory & $1.46$ GB & $2.29$ GB  & $1.88$ GB  \\
            & & Speed per batch & $3.52 \pm 0.17$ ms & $9.18 \pm 0.12$ ms & $6.33 \pm 0.07$ ms \\
            \midrule
            \multirow{2}{*}{ViT-B-16 (LAION-2B)} & \multirow{2}{*}{ViT-L-14 (LAION-2B)} & Peak memory & $2.96$ GB & $3.79$ GB & $3.38$ GB \\
            & & Speed per batch & $5.85 \pm 0.11$ ms & $11.73 \pm 0.15$ ms & $8.89 \pm 0.11$ ms \\
            \bottomrule
        \end{tabular}
    }
    \caption{Memory usage and average time  per batch in inference with batch size $128$.}\label{tab:vision inference stats}
\end{table}
\begin{table}[h!]
    \centering
    \begin{tabular}{lcc}
        \toprule
        & \multicolumn{2}{c}{Average Time per Token (ms)} \\
       & \textbf{Llama2-7B} $\to$ \textbf{Llama2-13B} & \textbf{Llama3.2-1B} $\to$ \textbf{Llama3-8B} \\
        \midrule
        Pretrained  & 26.0$_{\pm0.2}$ & 17.1$_{\pm1.1}$ \\
        EFT & 39.8$_{\pm 0.1}$ ($\times$ 0.65) & 24.4$_{\pm 0.2} (\times 0.7)$ \\
        PRT & 27.8$_{\pm 0.5}$ ($\times$ 0.93) & 22.7$_{\pm 1.8} (\times 0.75)$ \\
        \bottomrule
    \end{tabular}
    \caption{Inference speed of Pretrained, EFT and PRT. The number following “$\times$” in brackets indicates each method’s token generation speed relative to the Pretrained model, whose speed is set to 1.0.}
    \label{tab:llm inference_speeds}
\end{table}

\section{Qualitative Analysis}
\label{appendix:qualitative_analysis}
To analyze the behavior of PRT in more detail, we examine the tokens generated by the model. 
In this analysis, we adopt Llama3-8B as the target model. 
In this analysis, we used Llama3-8B as the target model, while Llama-3.2-1B was used for training and reward-model initialization.
Figures~\ref{fig:model_response_gsm_1} and ~\ref{fig:model_response_gsm_2} present the output obtained under these settings.

\begin{figure}[h!]
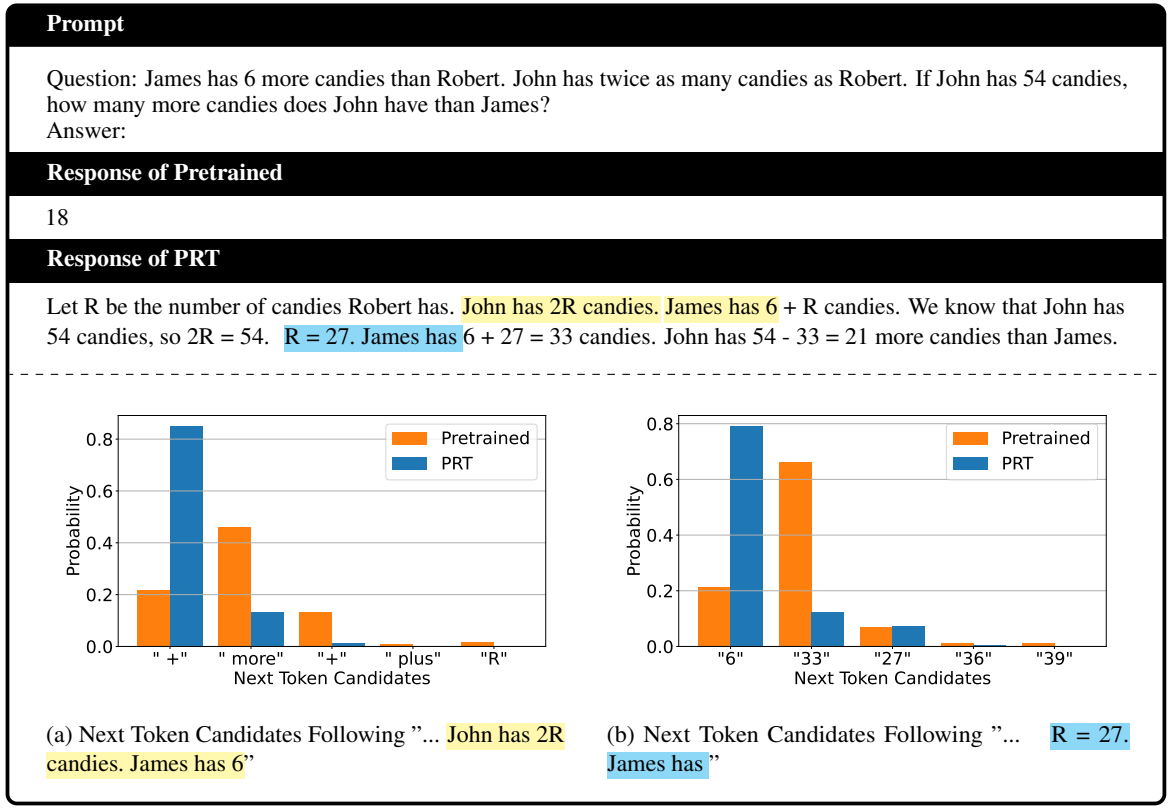

    \centering

    \begin{tcolorbox}[
        width=0.9\linewidth,    %
        colback=white,
        colframe=black,
        colbacktitle=black,
        coltitle=white,
        fonttitle=\bfseries,
        title=Prompt,           %
    ]

    Question: James has 6 more candies than Robert. John has twice as many candies as Robert. If John has 54 candies, how many more candies does John have than James?
    
    Answer:
      
    \tcbsubtitle{Response of Pretrained}

     18

    \tcbsubtitle{Response of PRT}

    Let R be the number of candies Robert has.
    \sethlcolor{yellow!40}\hl{John has 2R candies.} 
    \sethlcolor{yellow!40}\hl{James has 6} + R candies. We know that John has 54 candies, so 2R = 54.
    \sethlcolor{cyan!40} \hl{R = 27. James has }6 + 27 = 33 candies. John has 54 - 33 = 21 more candies than James.
    
    \tcblower
    \sethlcolor{yellow!40} 
        \begin{minipage}{0.48\linewidth} %
            \centering
            \includegraphics[width=\linewidth]{results_final/prob_1.pdf}
            \subcaption{Next Token Candidates Following "... \hl{John has 2R candies. James has 6}"}
            \label{fig:model_res2_prob1}
        \end{minipage}
        \hfill
        \begin{minipage}{0.48\linewidth} %
            \centering
            \includegraphics[width=\linewidth]{results_final/prob_2.pdf}
            \subcaption{Next Token Candidates Following "... \sethlcolor{cyan!40} \hl{R = 27. James has }"}
            \label{fig:model_res2_prob2}
        \end{minipage}

    \end{tcolorbox}

    \caption{Example of model response.}
    \label{fig:model_response_gsm_1}
\end{figure}
\begin{figure}[h!]
    \centering

    \begin{tcolorbox}[
        width=0.9\linewidth,    %
        colback=white,
        colframe=black,
        colbacktitle=black,
        coltitle=white,
        fonttitle=\bfseries,
        title=Prompt,           %
    ]

    Question: Gerald and Julia divided \$100 in the ratio 3:2. If Gerald spent \$10 on a book, how much money did he have left?
    
    Answer:
      
    \tcbsubtitle{Response of Pretrained}

     \$40

    \tcbsubtitle{Response of PRT}

    Let x be the amount of money Julia had. Then 3x / 2 - 10 = x. So x = 20. \sethlcolor{yellow!40}\hl{Therefore, Gerald had }60 - 10 = \$50 left.
    
    \tcblower
    \sethlcolor{yellow!40} 
    \begin{center}
        \begin{minipage}{0.5\linewidth} %
            \centering
            \includegraphics[width=\linewidth]{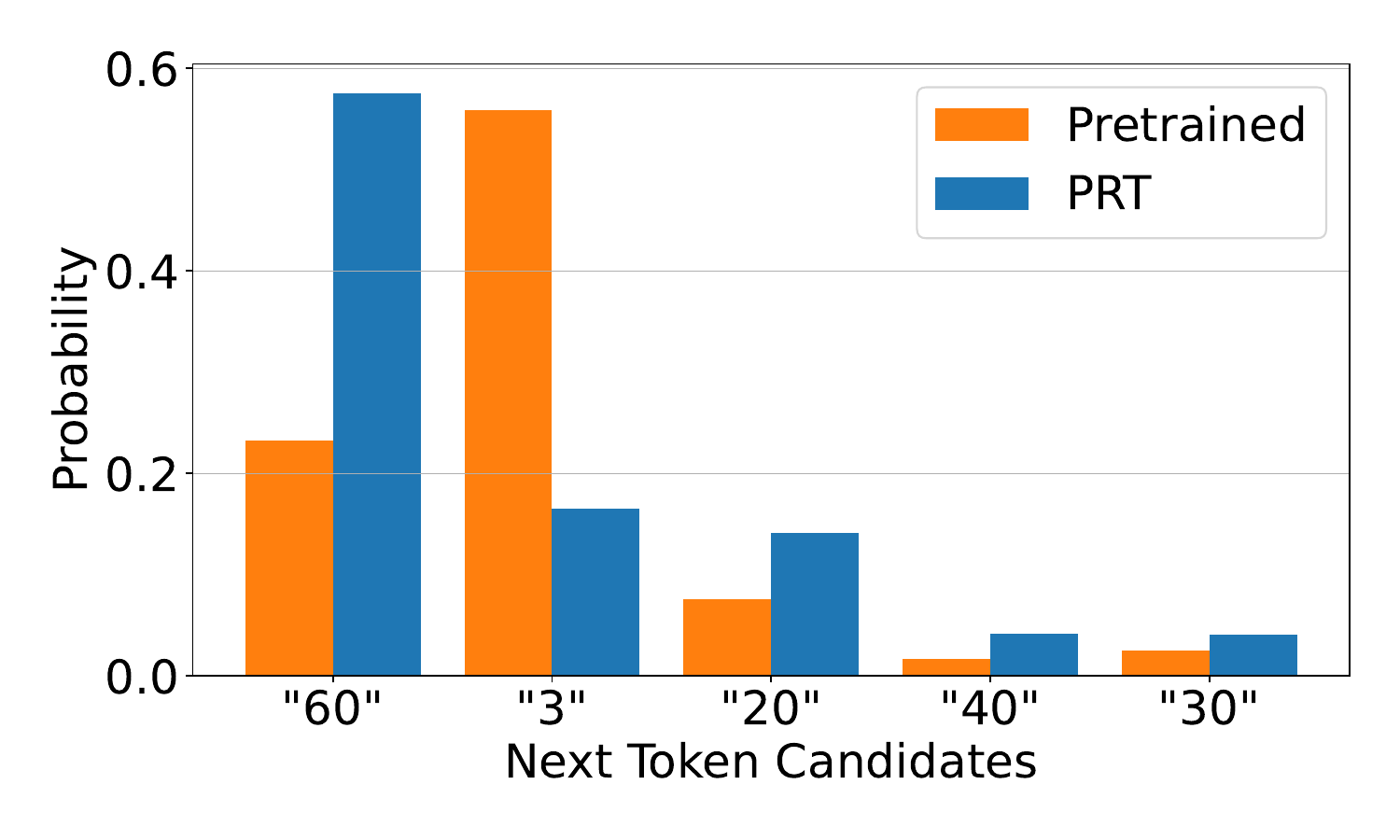}
            \subcaption{Next Token Candidates Following "... \hl{Therefore, Gerald had }"}
            \label{fig:model_res3_prob1}
        \end{minipage}
    \end{center}

    \end{tcolorbox}

    \caption{Example of model response.}
    \label{fig:model_response_gsm_2}
\end{figure}

\section{Evaluation on Various Vision Tasks}

Figure~\ref{app:fig:vision_results} shows an extensive evaluation of inference-time tuning on various vision datasets.

\begin{figure*}[!t]
    \centering
    \begin{subfigure}[t]{0.19\textwidth}
        \begin{minipage}{\textwidth}
            \centering
            \includegraphics[width=\linewidth]{results_final/plot_test/RN50/openai_Cars.pdf}
            \includegraphics[width=\linewidth]{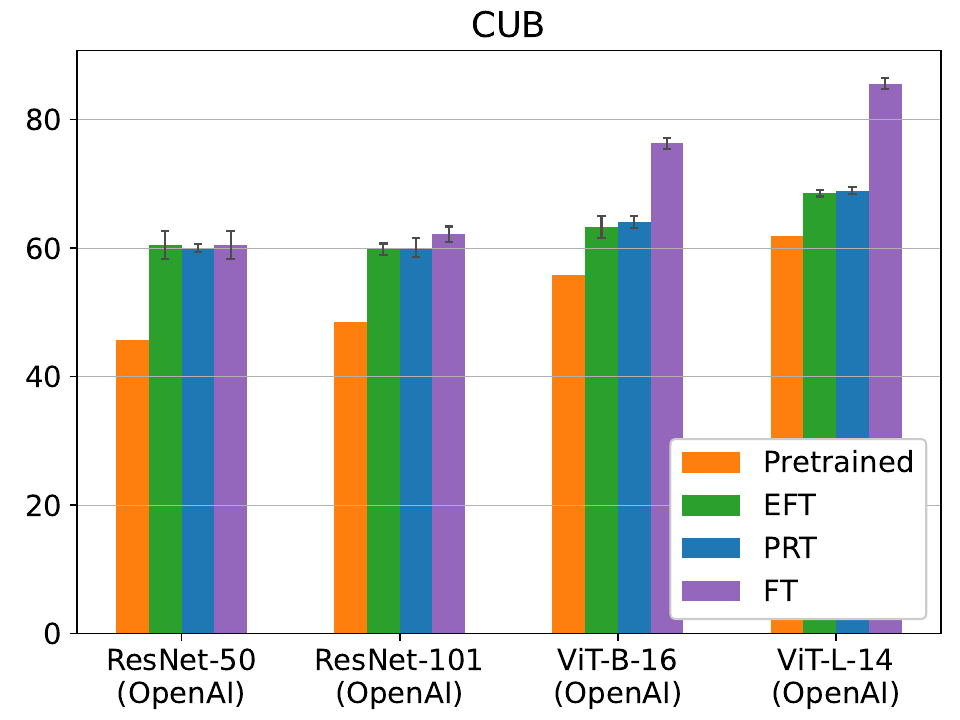}
            \includegraphics[width=\linewidth]{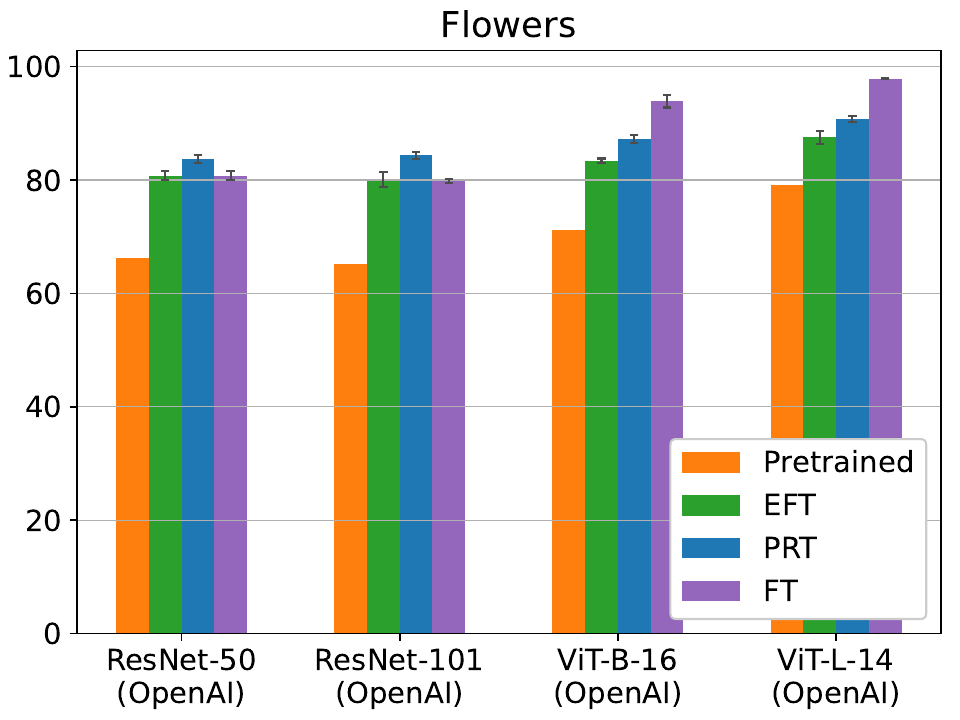}
            \includegraphics[width=\linewidth]{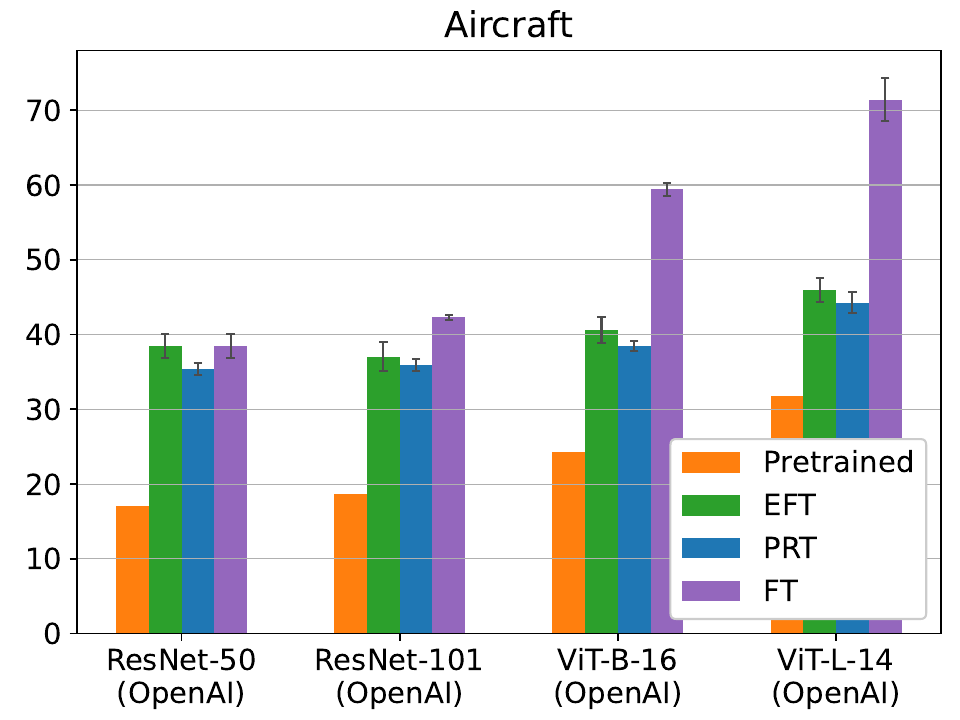}
            \includegraphics[width=\linewidth]{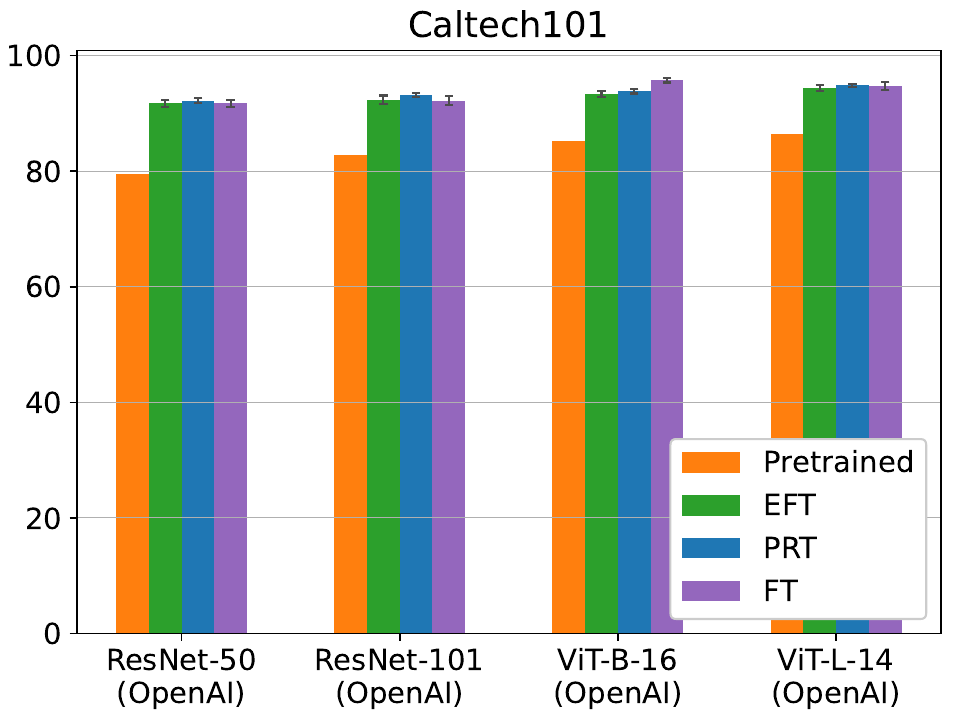}
            \includegraphics[width=\linewidth]{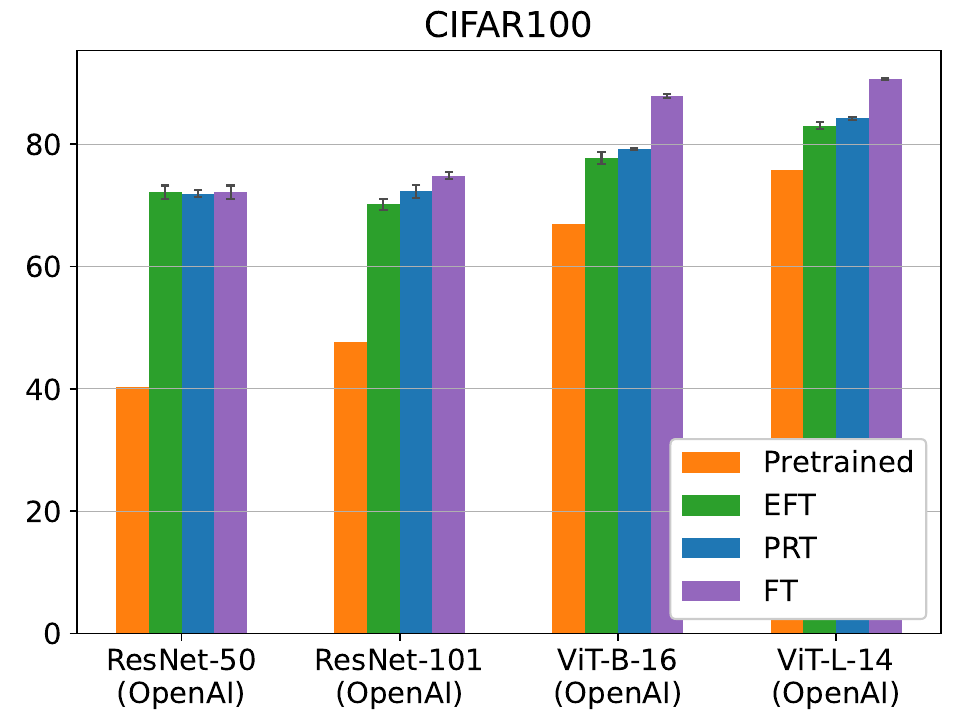}
            \includegraphics[width=\linewidth]{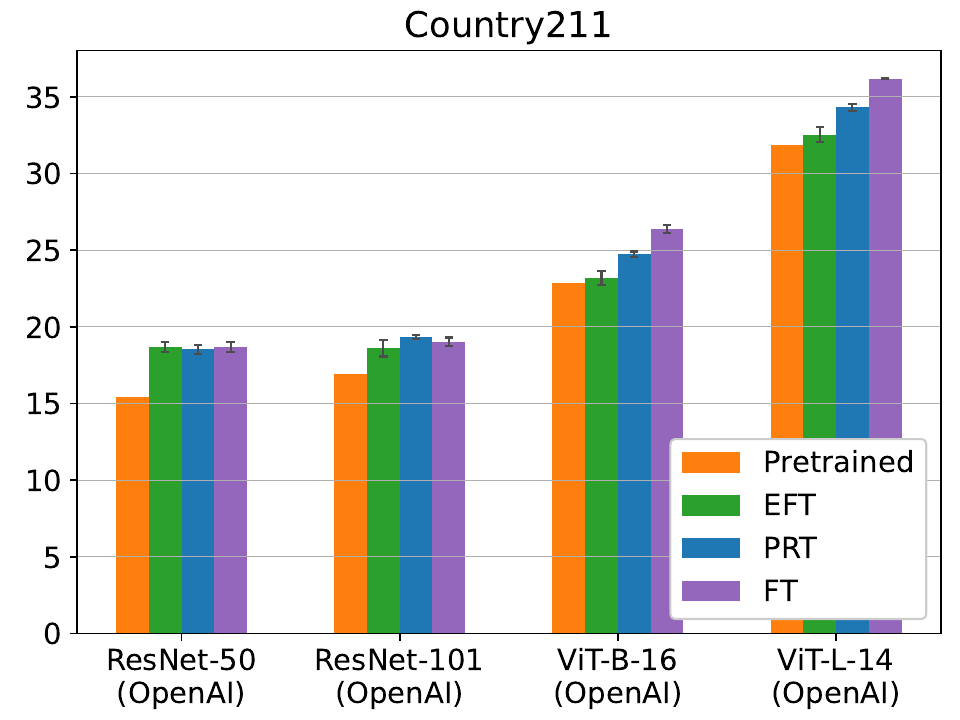}
            \includegraphics[width=\linewidth]{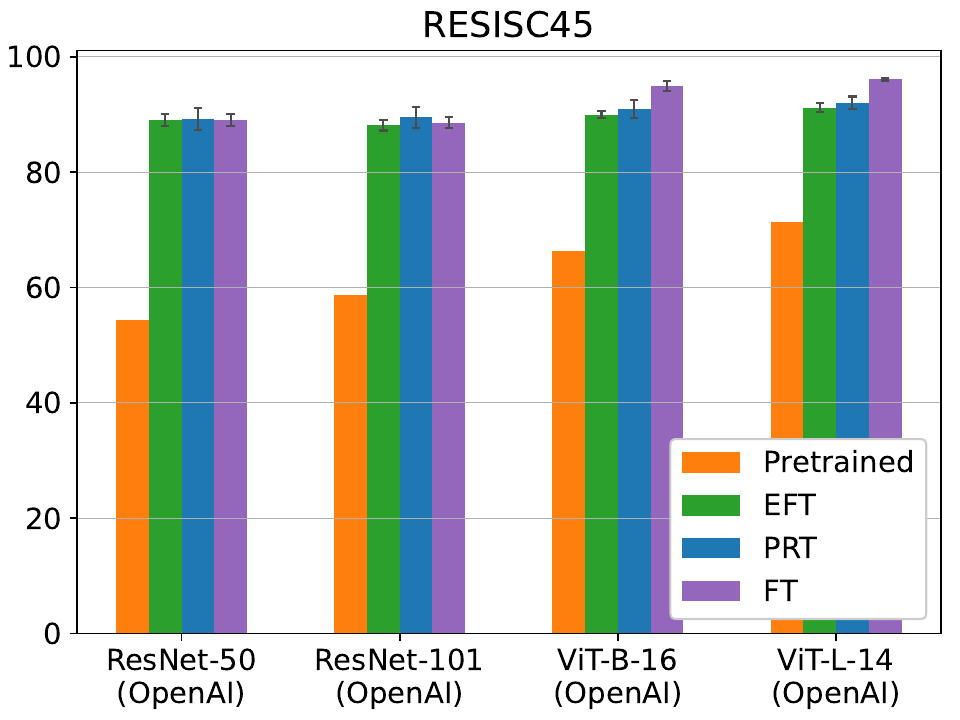}
        \end{minipage}
        \caption{\scriptsize ResNet-50 (OpenAI)}
        \label{app:fig:resnet50_openai}
    \end{subfigure}
    \hfill
    \begin{subfigure}[t]{0.19\textwidth}
        \begin{minipage}{\textwidth}
            \centering
            \includegraphics[width=\linewidth]{results_final/plot_test/convnext/laion400m_s13b_b51k_Cars.pdf}
            \includegraphics[width=\linewidth]{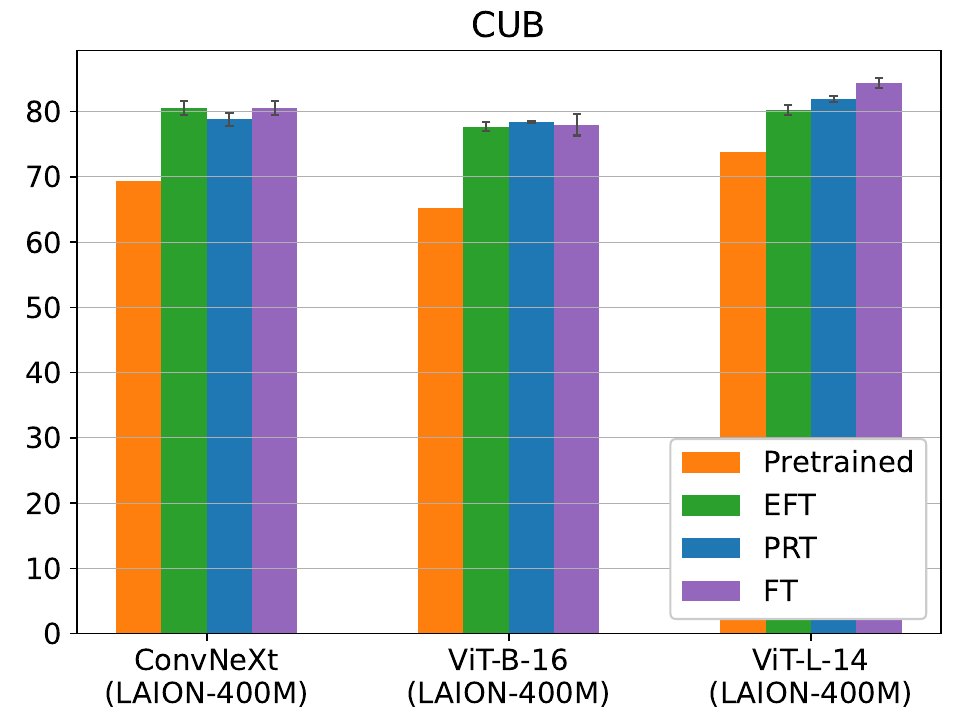}
            \includegraphics[width=\linewidth]{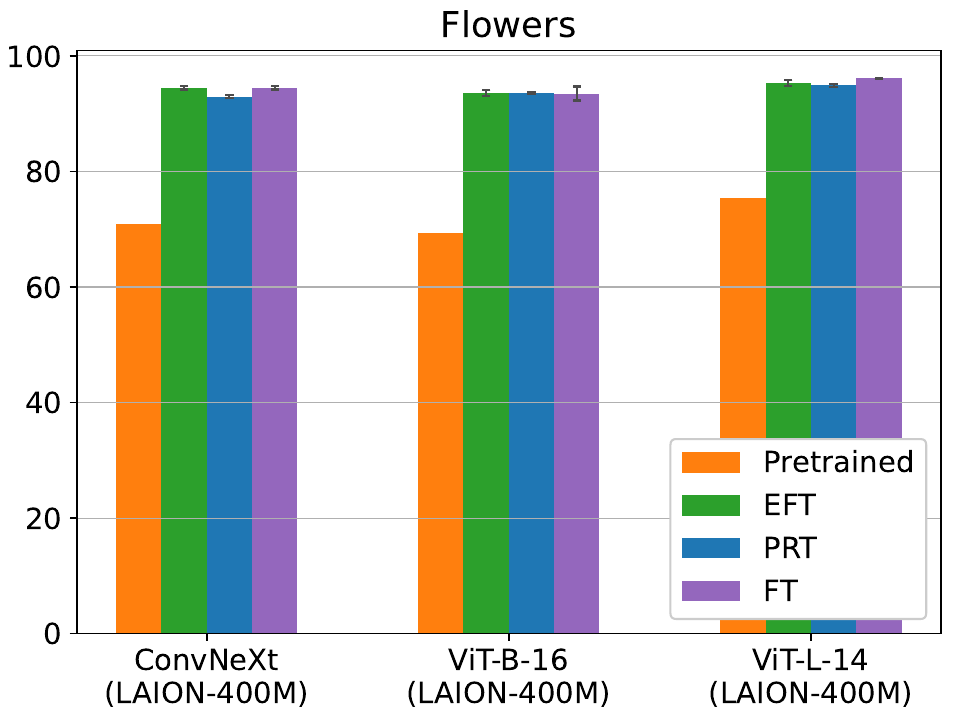}
            \includegraphics[width=\linewidth]{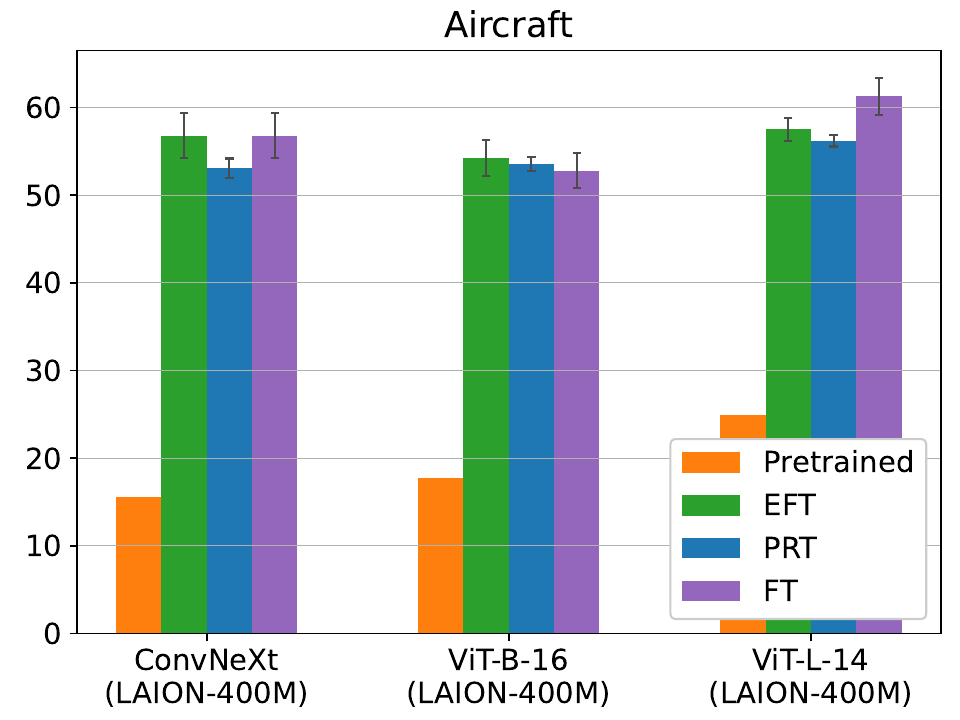}
            \includegraphics[width=\linewidth]{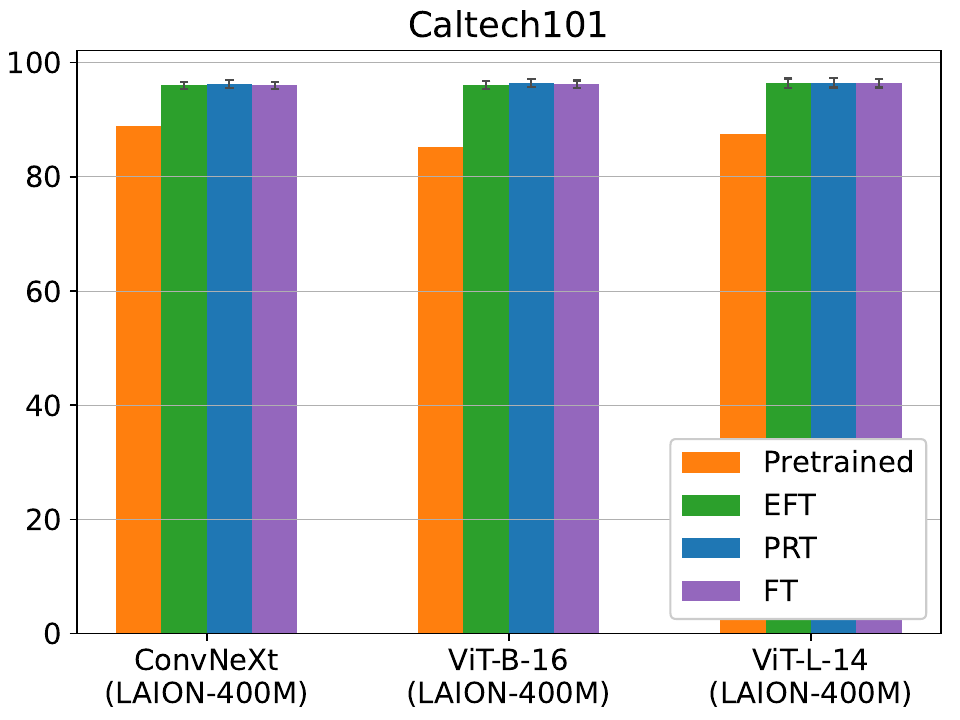}
            \includegraphics[width=\linewidth]{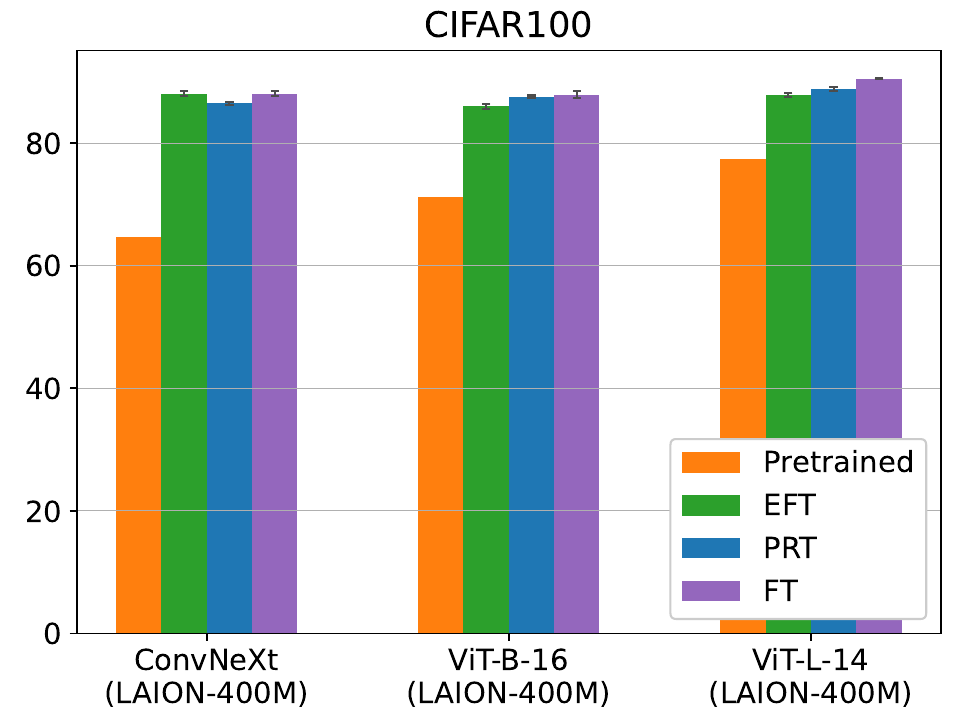}
            \includegraphics[width=\linewidth]{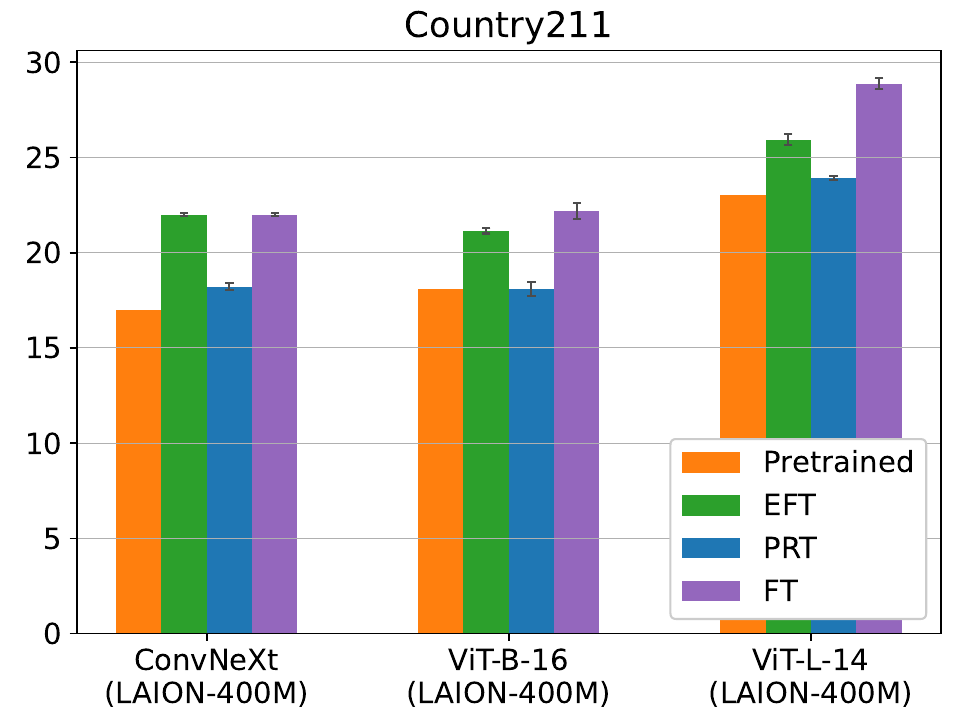}
            \includegraphics[width=\linewidth]{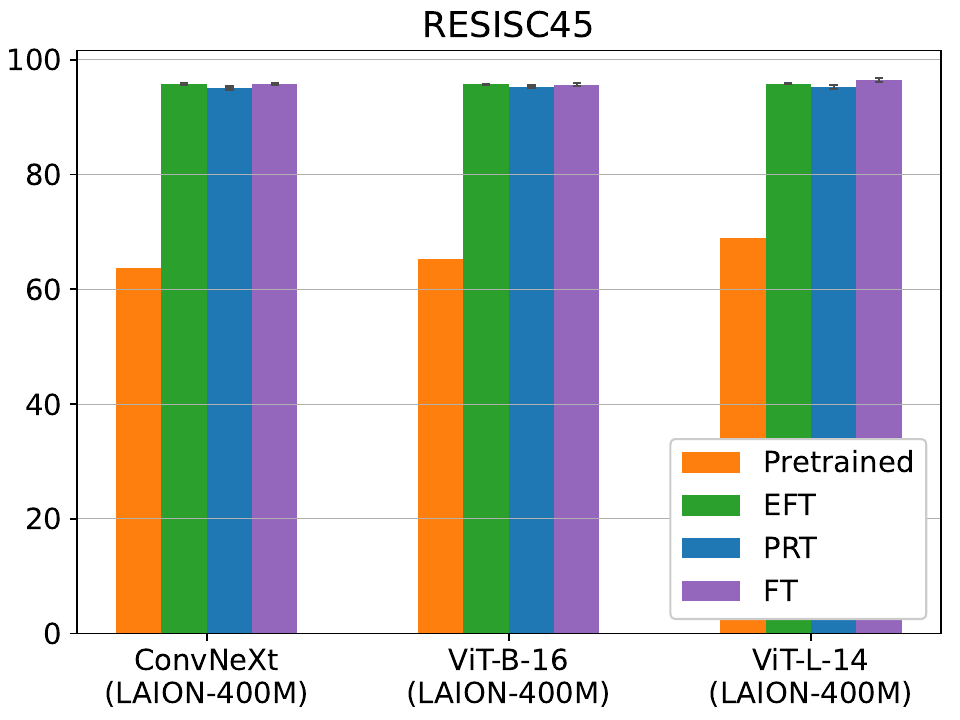}
        \end{minipage}
        \caption{{\scriptsize ConvNext (LAION-400M)}}
        \label{app:fig:convnext_laion400m}
    \end{subfigure}
    \hfill
    \begin{subfigure}[t]{0.19\textwidth}
        \begin{minipage}{\textwidth}
            \centering
            \includegraphics[width=\linewidth]{results_final/plot_test/vitb_openai/openai_Cars.pdf}
            \includegraphics[width=\linewidth]{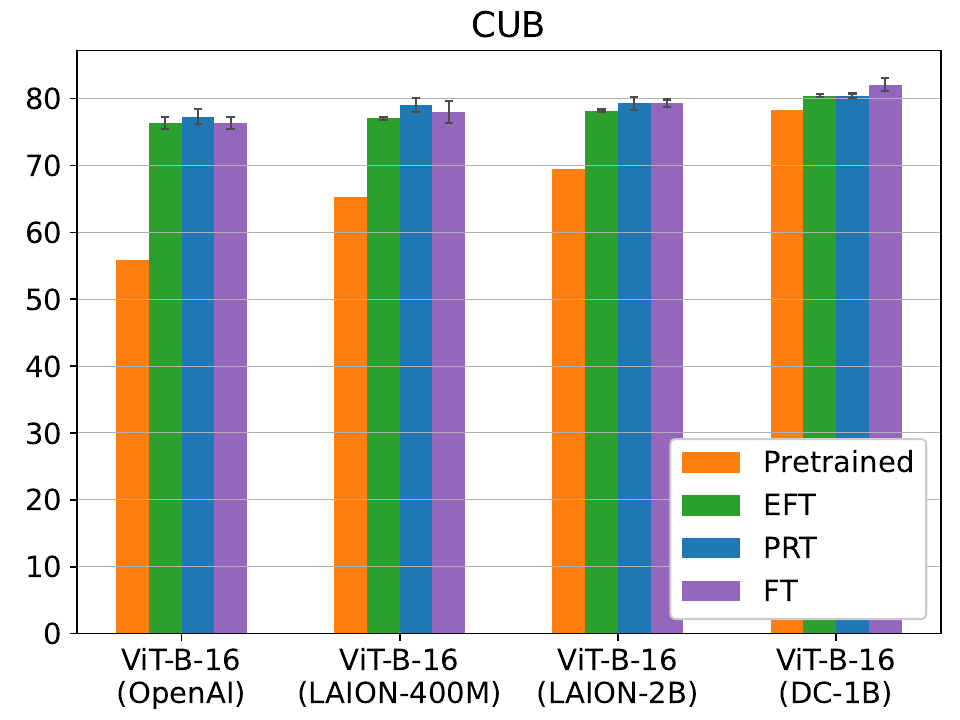}
            \includegraphics[width=\linewidth]{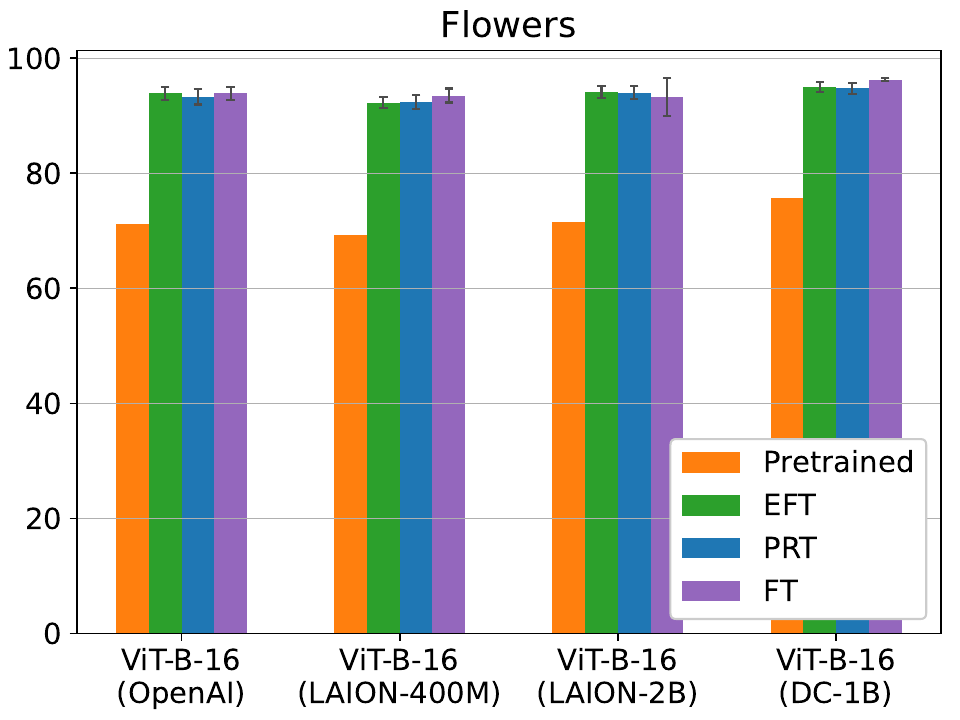}
            \includegraphics[width=\linewidth]{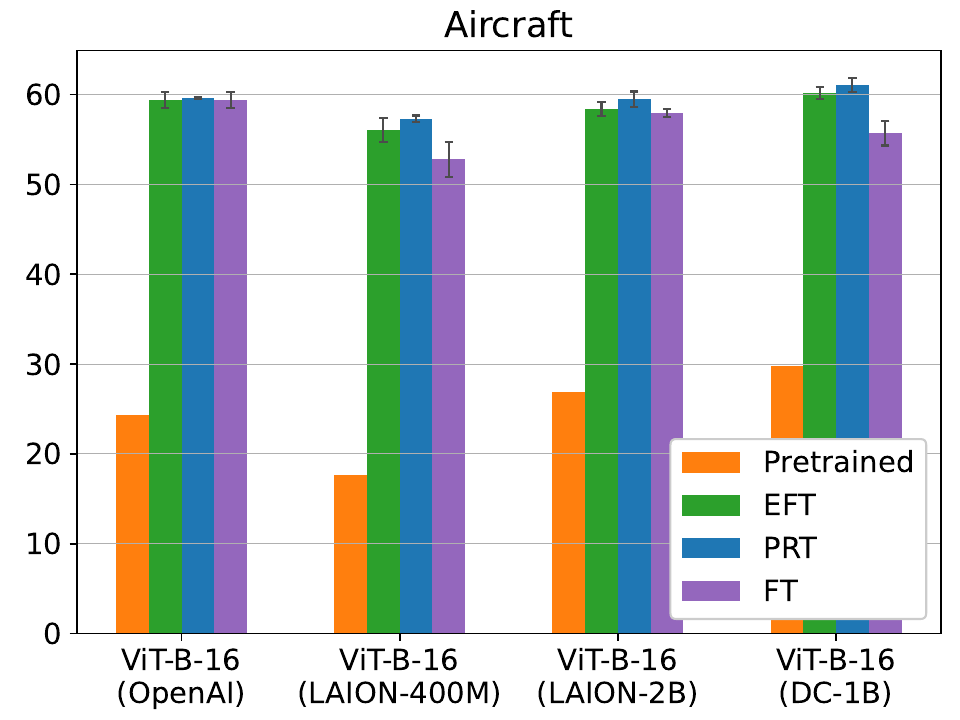}
            \includegraphics[width=\linewidth]{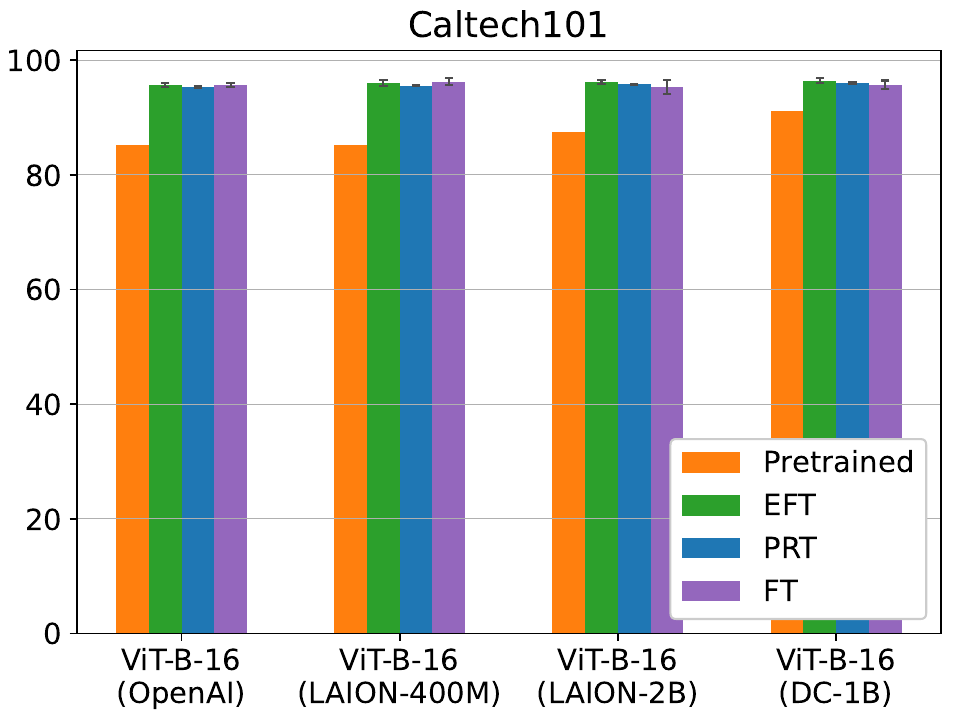}
            \includegraphics[width=\linewidth]{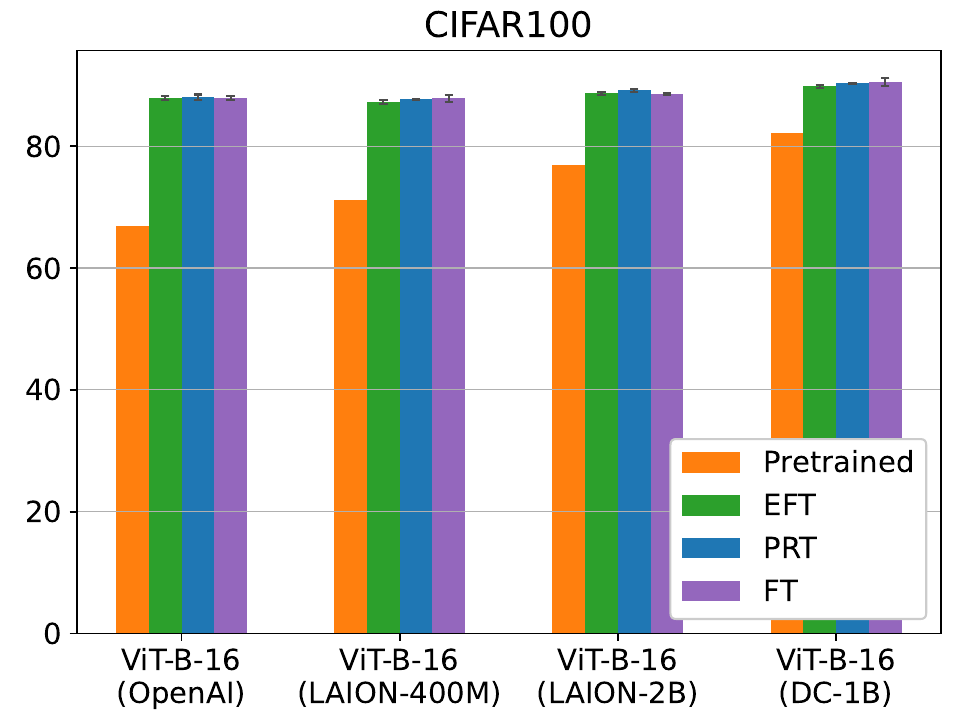}
            \includegraphics[width=\linewidth]{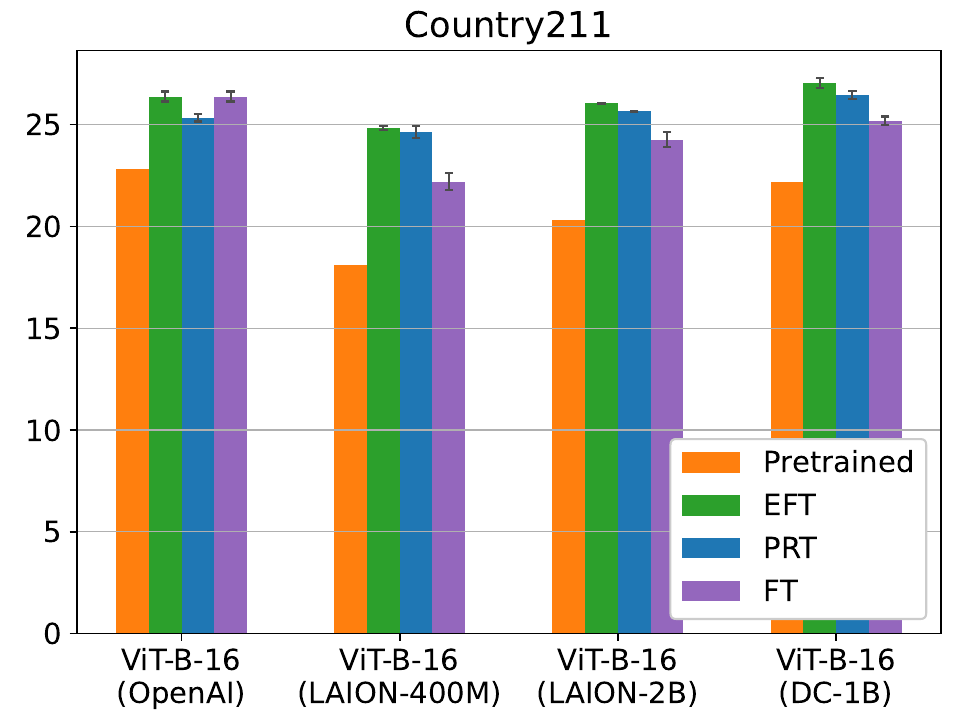}
            \includegraphics[width=\linewidth]{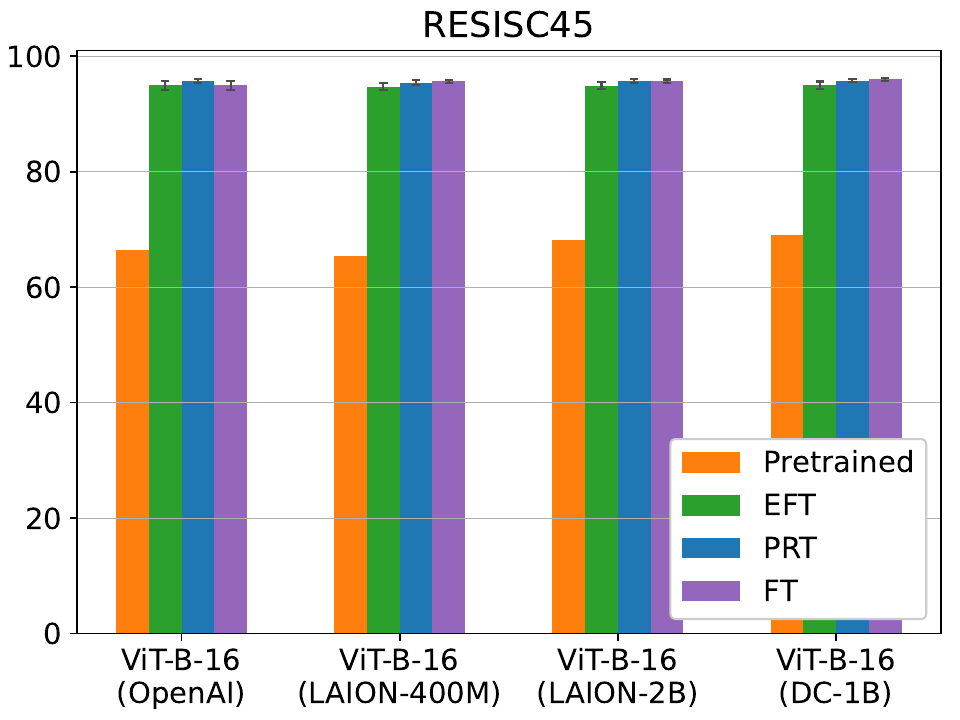}
        \end{minipage}
    \caption{\scriptsize ViT-B-16 (OpenAI)}
    \label{app:fig:vit-b-16_openai}
    \end{subfigure}
    \hfill    
    \begin{subfigure}[t]{0.19\textwidth}
        \begin{minipage}{\textwidth}
            \centering
            \includegraphics[width=\linewidth]{results_final/plot_test/vitb_400m_to_2b_n_vitl/laion400m_e32_Cars.pdf}
            \includegraphics[width=\linewidth]{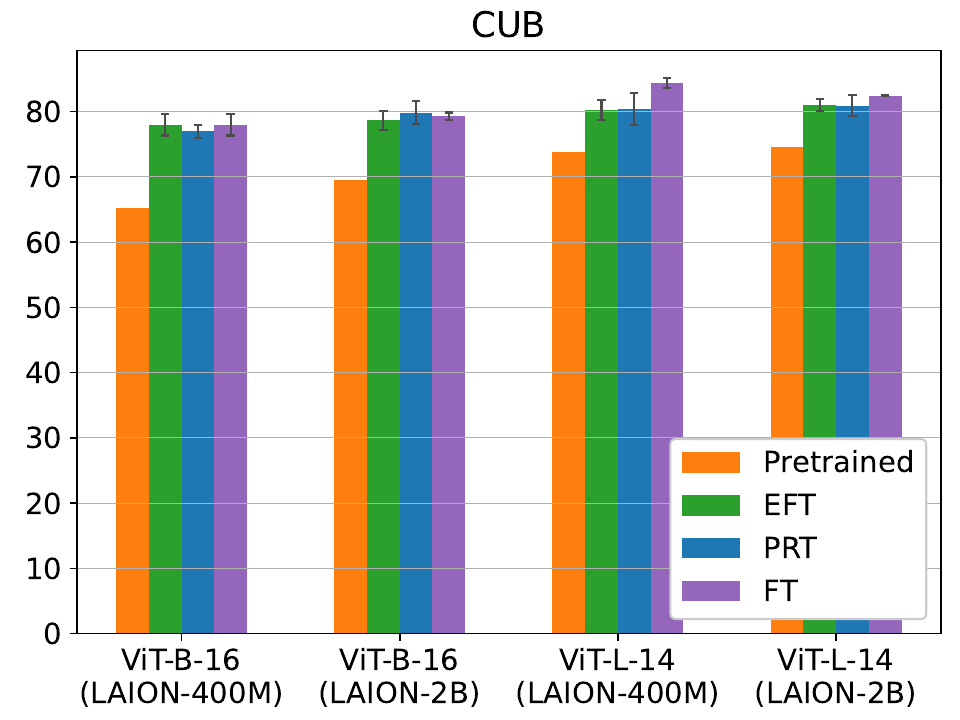}
            \includegraphics[width=\linewidth]{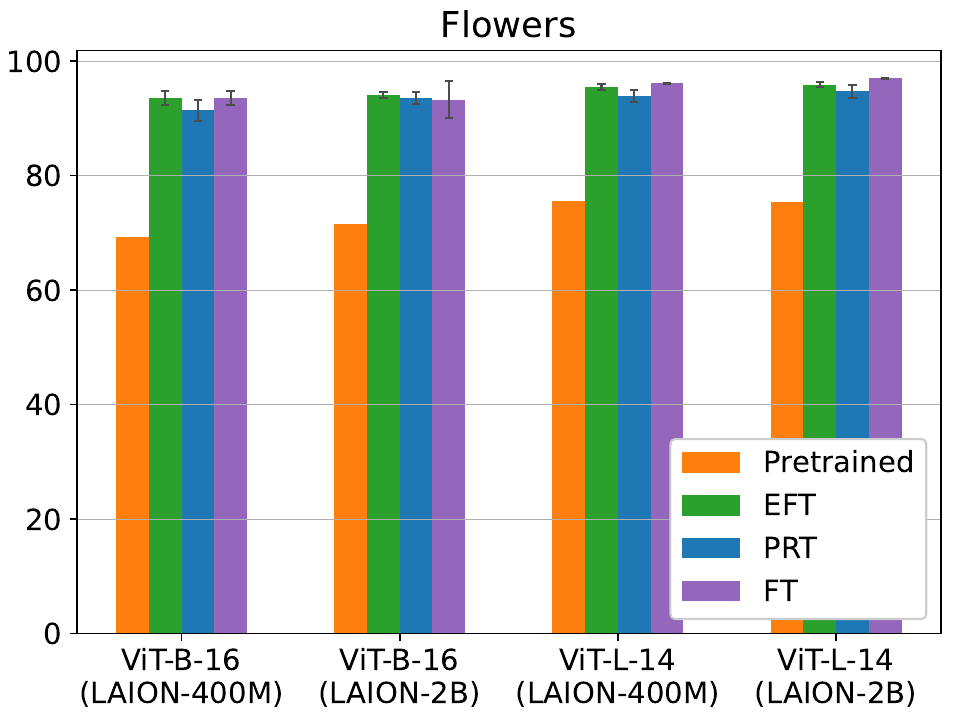}
            \includegraphics[width=\linewidth]{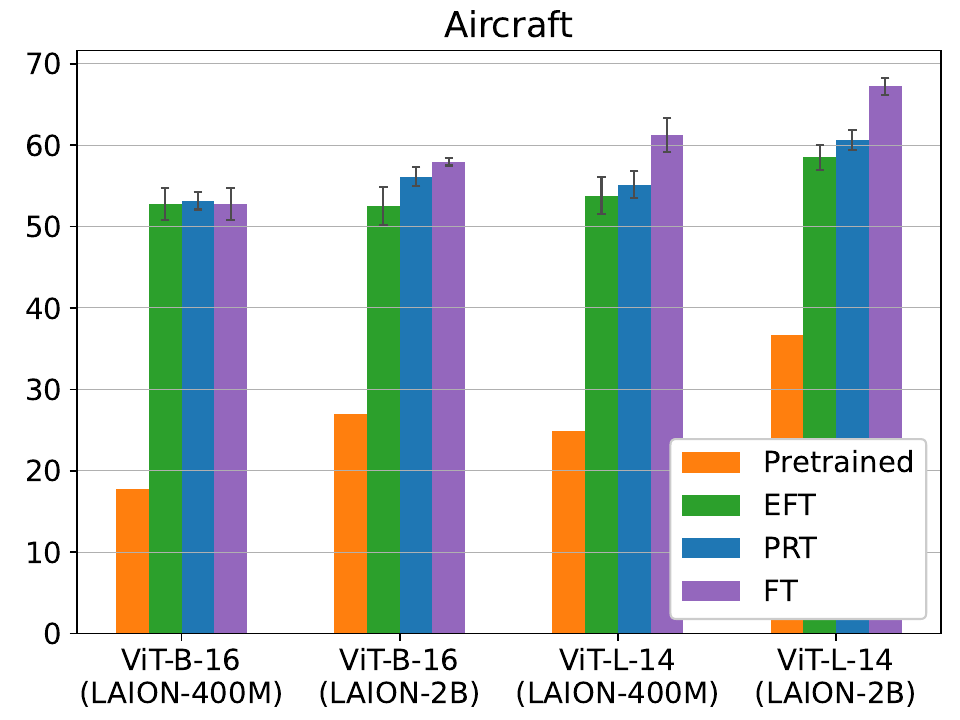}
            \includegraphics[width=\linewidth]{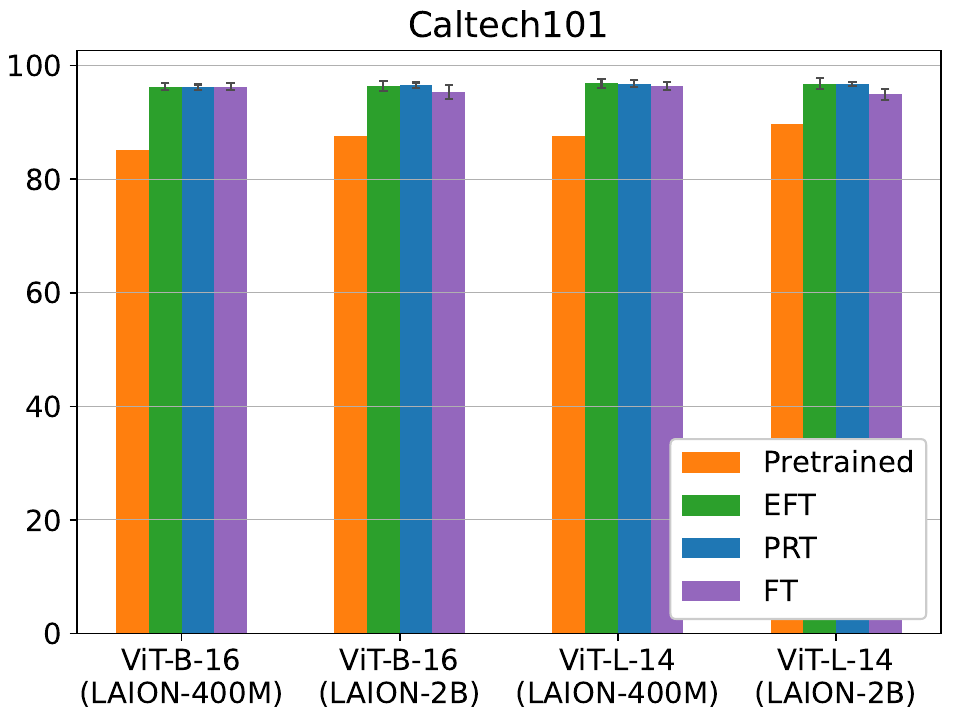}
            \includegraphics[width=\linewidth]{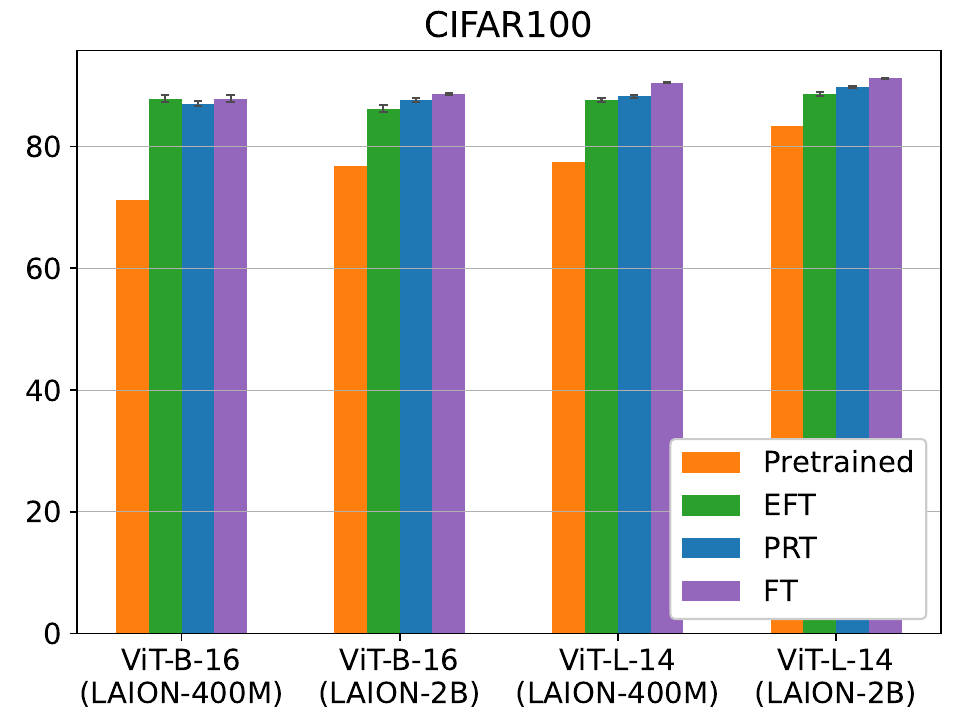}
            \includegraphics[width=\linewidth]{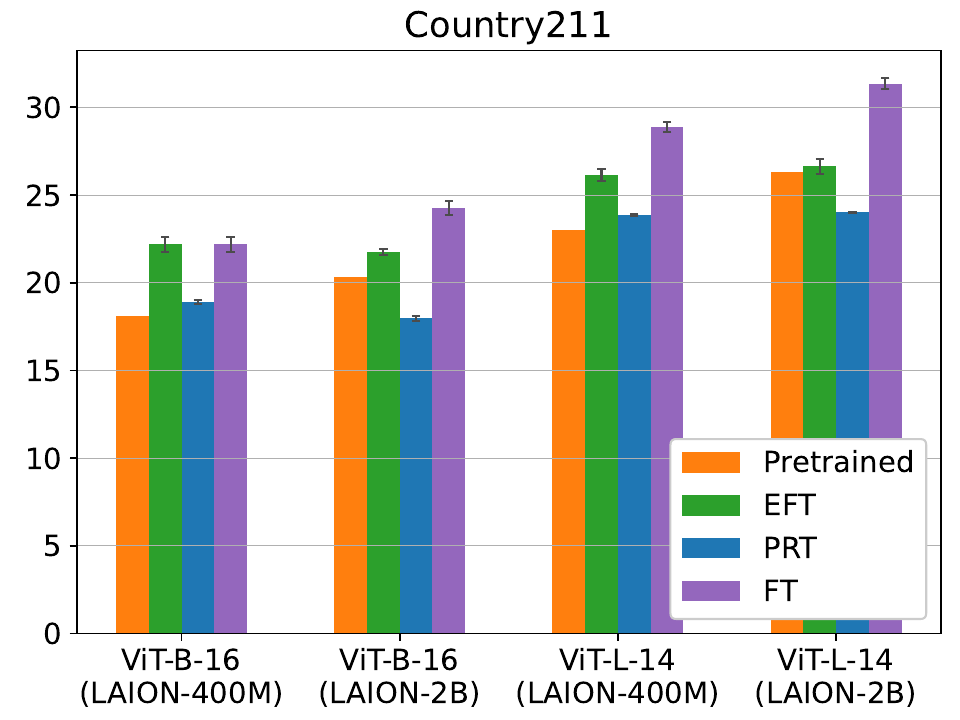}
            \includegraphics[width=\linewidth]{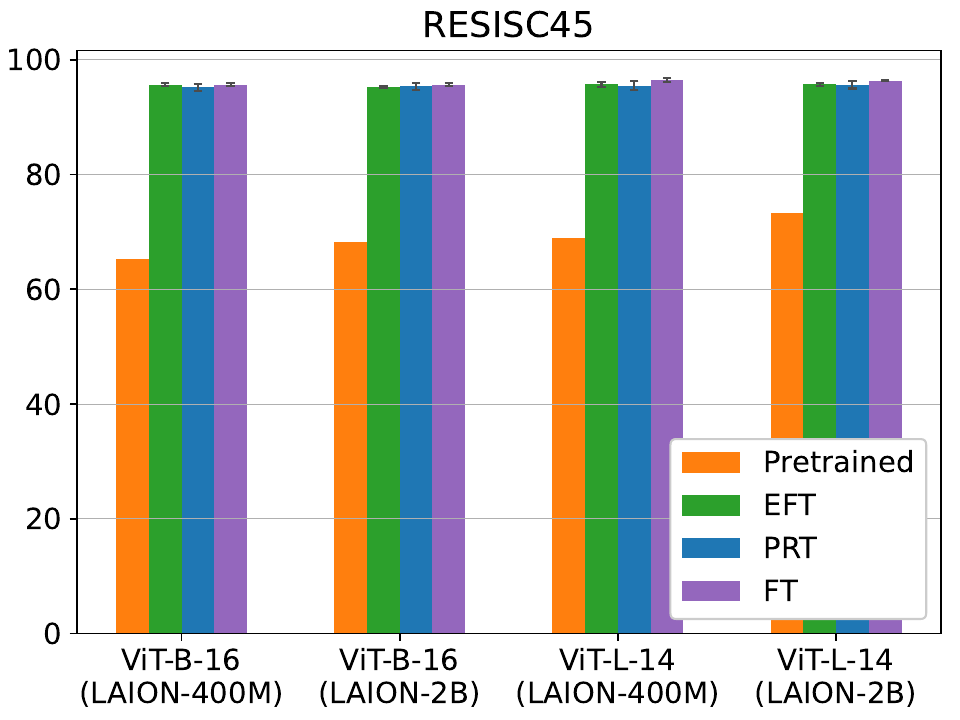}
        \end{minipage}
    \caption{\scriptsize ViT-B-16 (LAION-400M)}
    \label{app:fig:vit-b-16_laion400m}
    \end{subfigure}
    \hfill
    \begin{subfigure}[t]{0.19\textwidth}
        \begin{minipage}{\textwidth}
            \centering
            \includegraphics[width=\linewidth]{results_final/plot_test/vitb_2b_to_dc/laion2b_s34b_b88k_Cars.pdf}
            \includegraphics[width=\linewidth]{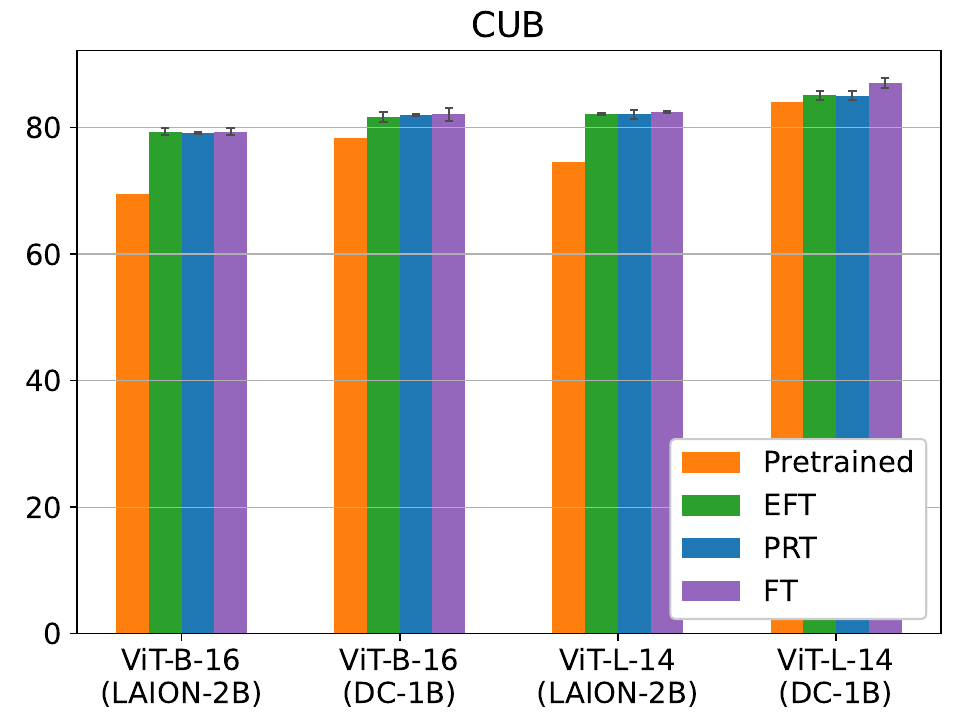}
            \includegraphics[width=\linewidth]{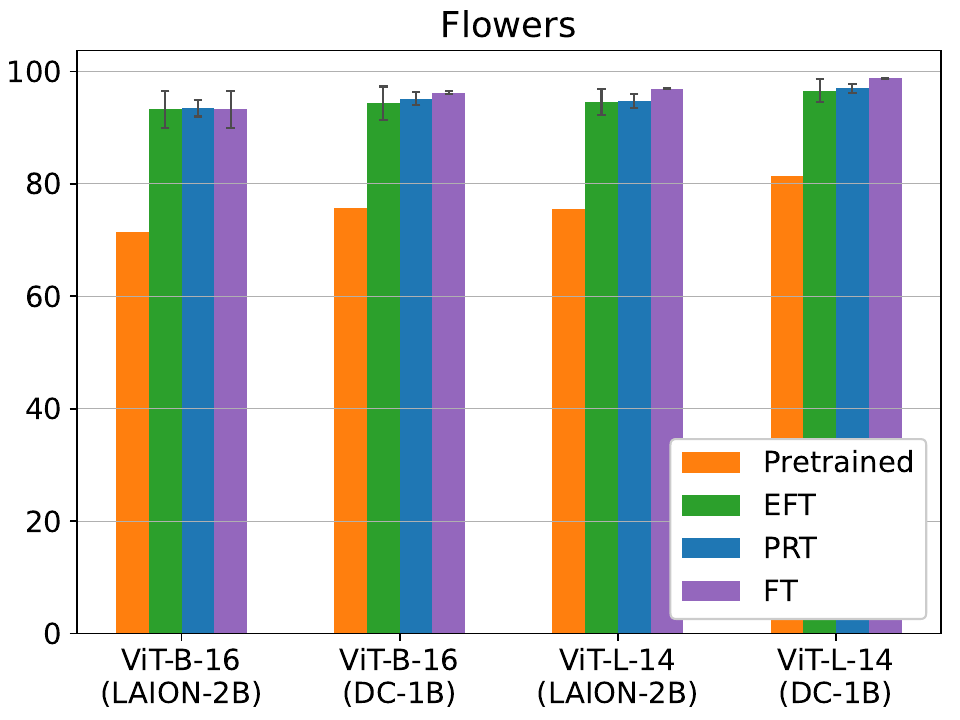}
            \includegraphics[width=\linewidth]{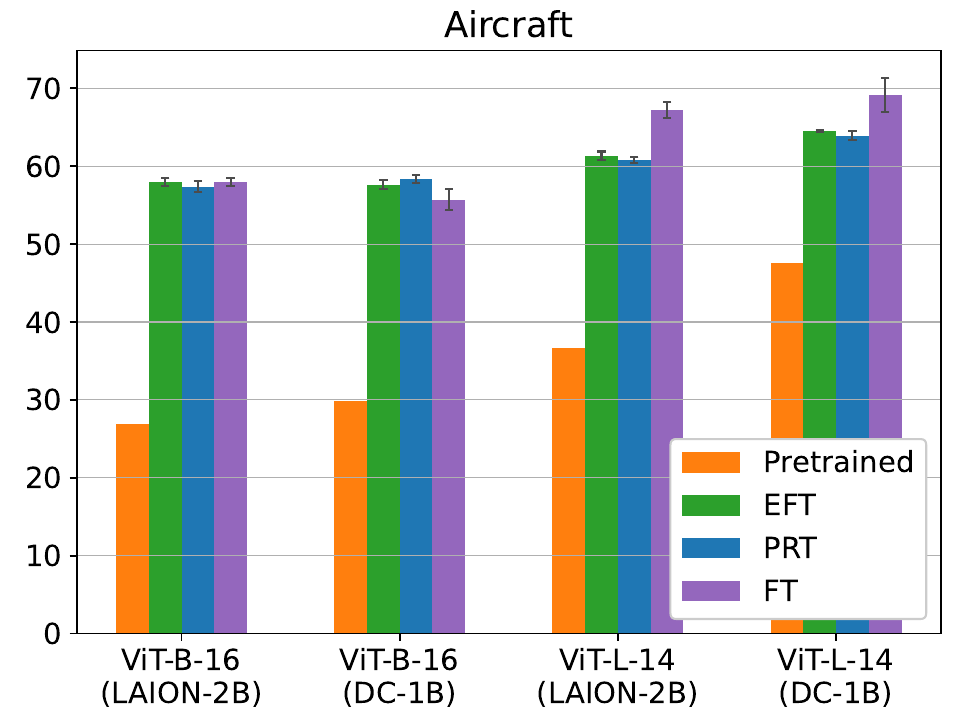}
            \includegraphics[width=\linewidth]{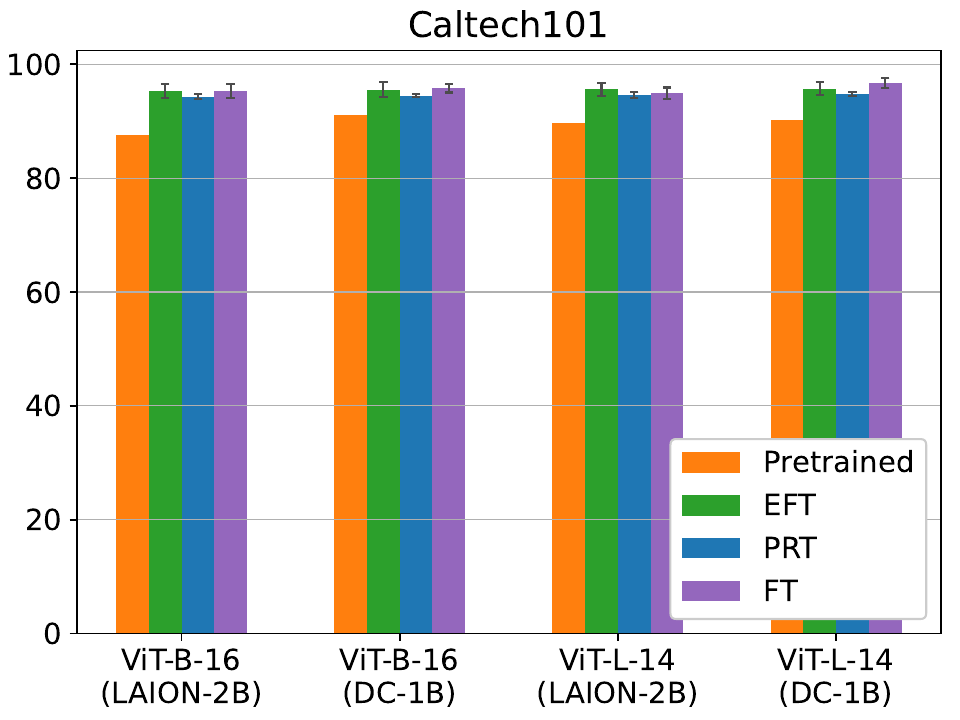}
            \includegraphics[width=\linewidth]{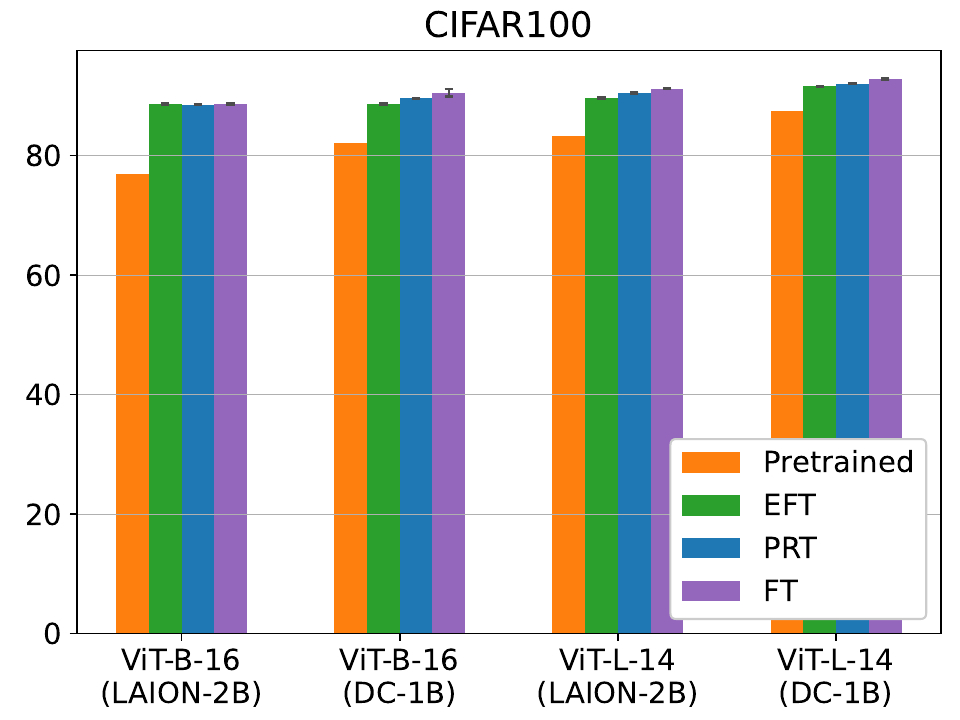}
            \includegraphics[width=\linewidth]{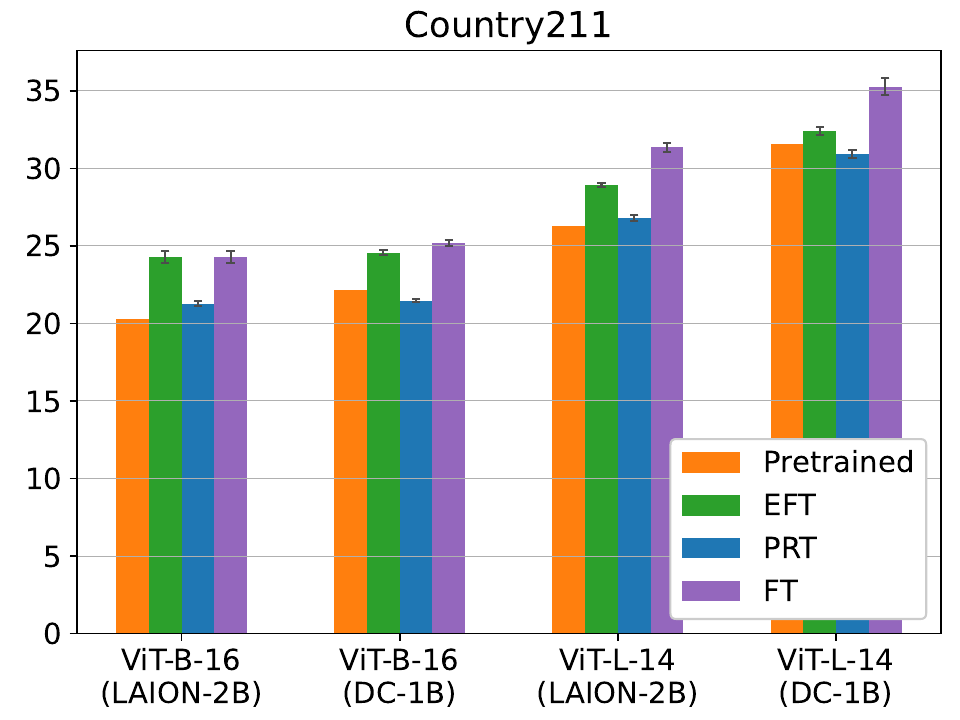}
            \includegraphics[width=\linewidth]{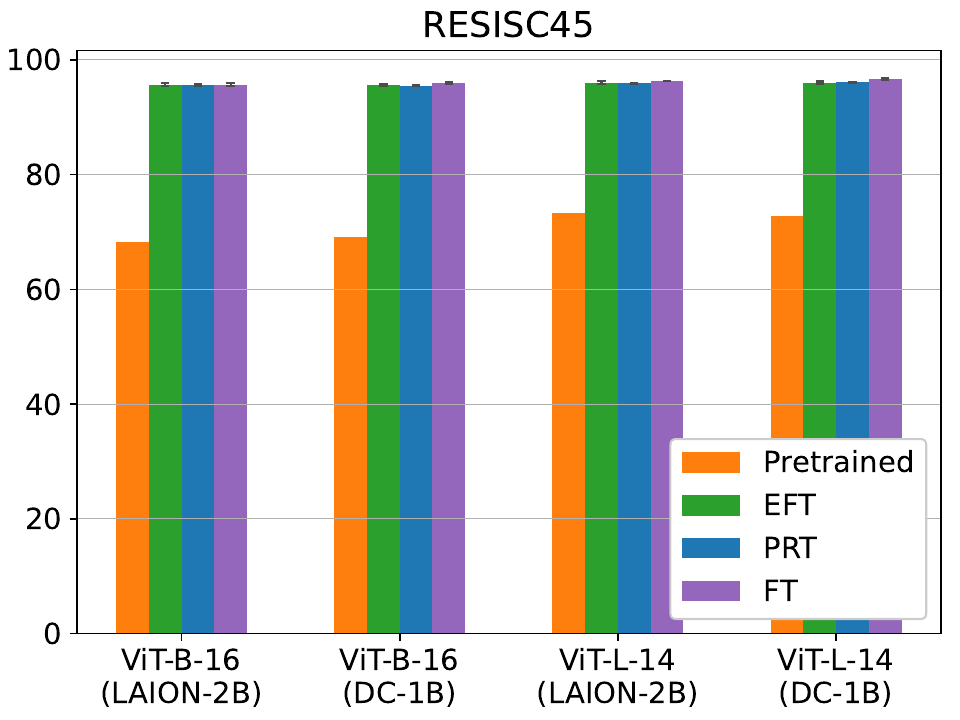}
            \end{minipage}
    \caption{\scriptsize ViT-B-16 (LAION-2B)}
    \label{app:fig:vit-b-16_laion2b}
    \end{subfigure}
    \caption{Evaluations of inference-time tuned models for vision tasks. Each subcaption refers to the source pretrained model, and the labels in x-axis are target pretrained models. {\bf Pretrained} means the zero-shot classification by each target model as a baseline, and {\bf FT} means the fine-tuned target model as an oracle result.}
    \label{app:fig:vision_results}
\end{figure*}

\section{Evaluation on Code Generation Tasks}
To evaluate the code generation capabilities of PRT, we conducted a comparison using HumanEval~\cite{chen2021evaluating}.

\paragraph{HumanEval}: A dataset for evaluating the ability of models to generate correct code based on natural language descriptions.

\paragraph{Result} As shown in Figure~\ref{fig:llm_results_humaneval}, the effectiveness of PRT was confirmed for Llama2, Llama3, and Qwen2.5.
On the other hand, for Falcon3 with target model sizes of 7B and 10B, the score actually declined compared to the Pretrained model. 
We believe this is because, during training, the same 1B model was used as the target, which exhibited little improvement over the Pretrained model in code generation. 
As a result, the PRT model did not acquire code generation capabilities during training; consequently, when applied to larger model sizes, it acts as noise rather than providing a benefit.

\begin{figure*}[t!]
    \centering
    \begin{minipage}{0.35\textwidth}
        \centering
        \includegraphics[width=\linewidth]{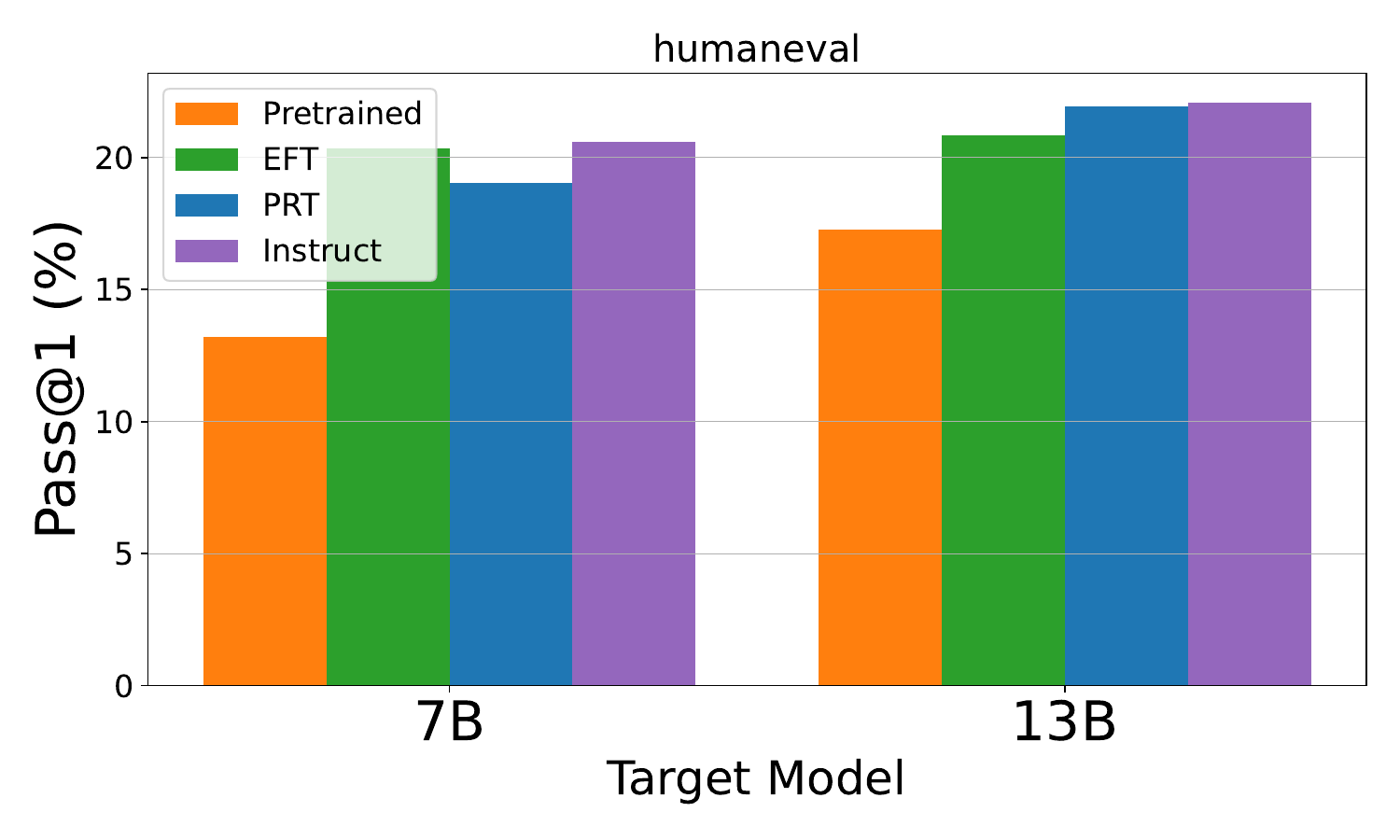}
        \subcaption{Llama2}
        \label{fig:llama2_humaneval}
    \end{minipage}
    \begin{minipage}{0.35\textwidth}
        \centering
        \includegraphics[width=\linewidth]{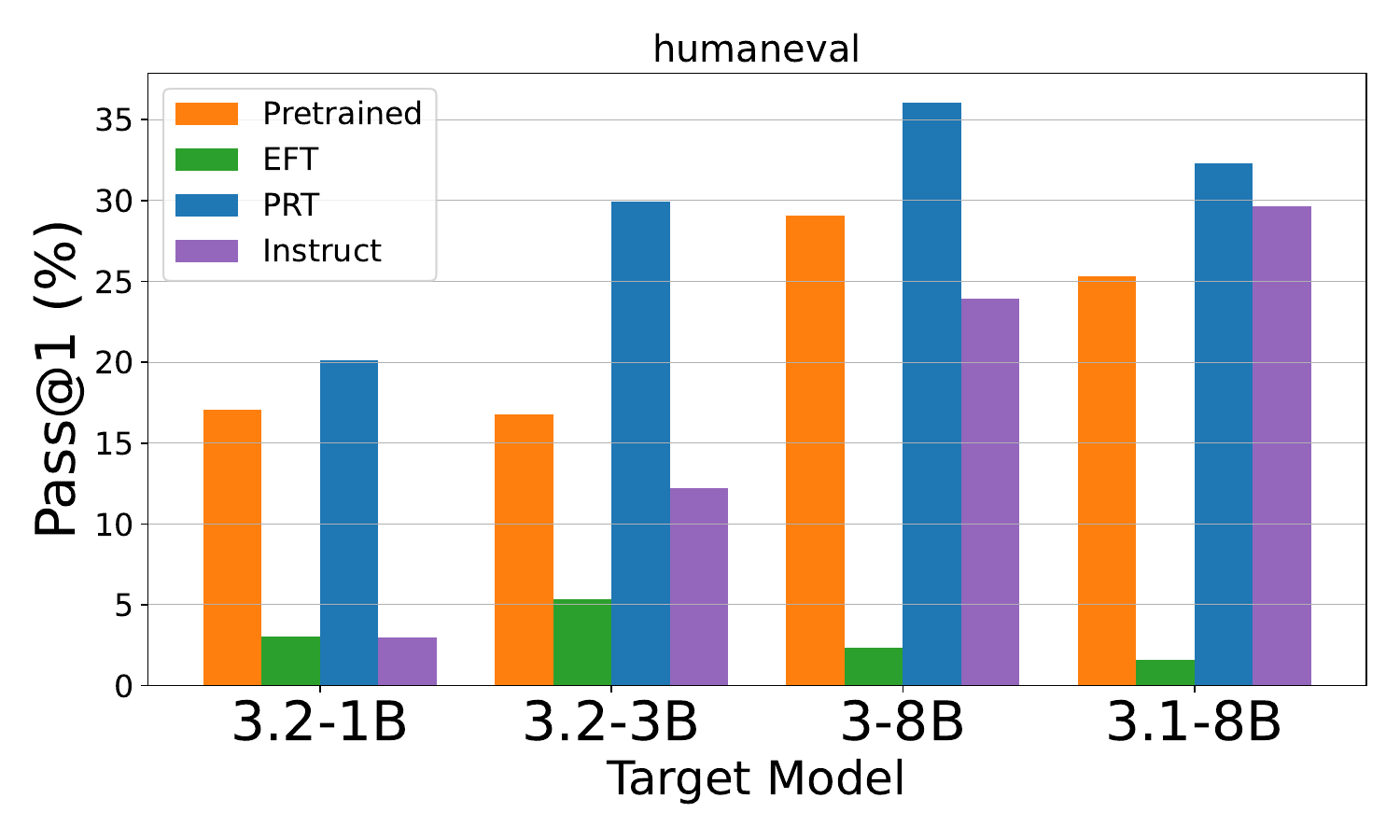}
        \subcaption{Llama3}
        \label{fig:llama3_humaneval}
    \end{minipage} \\
    \begin{minipage}{0.35\textwidth}
        \centering
        \includegraphics[width=\linewidth]{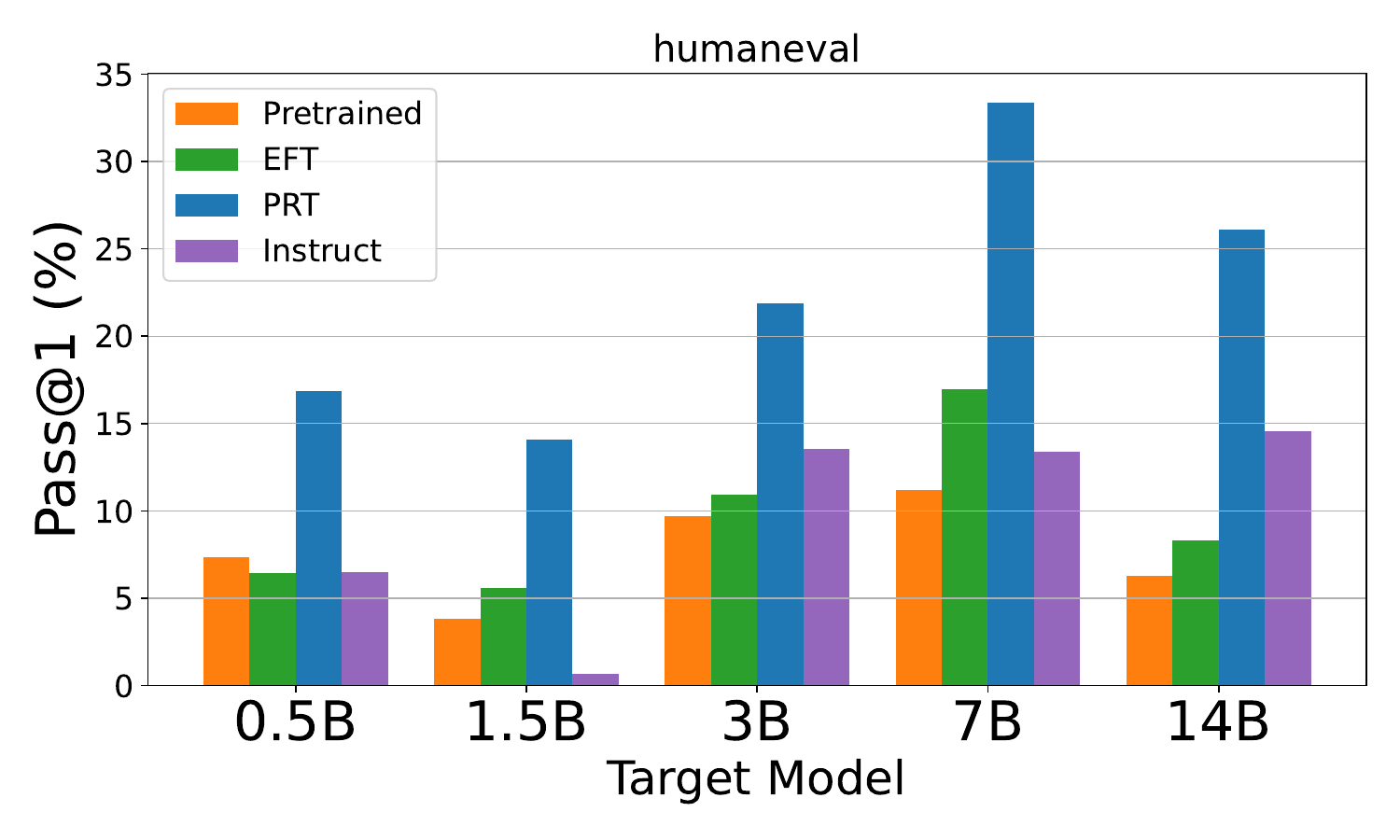}
        \subcaption{Qwen2.5}
        \label{fig:qwen2.5_humaneval}
    \end{minipage}
    \begin{minipage}{0.35\textwidth}
        \centering
        \includegraphics[width=\linewidth]{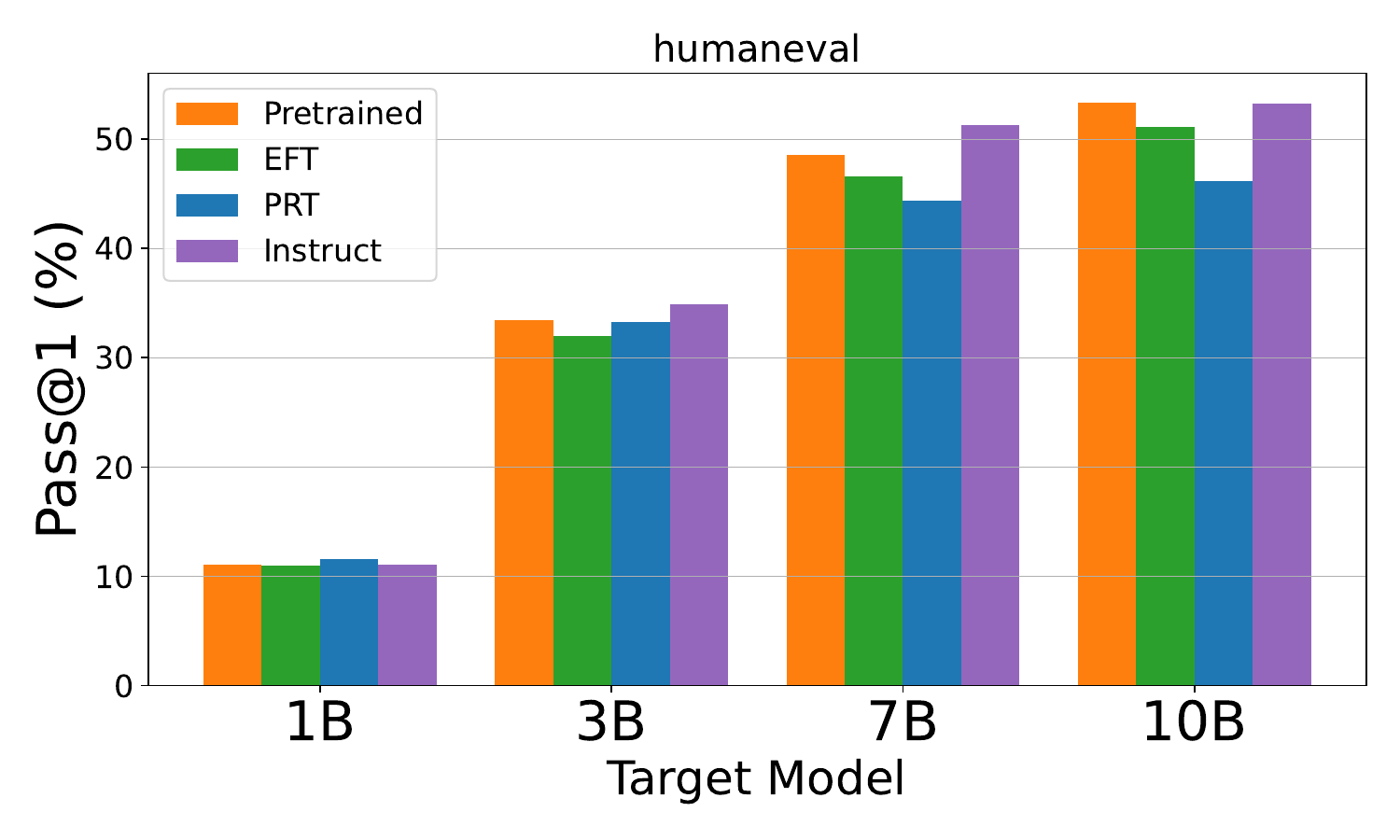}
        \subcaption{Falcon3}
        \label{fig:falcon3_humaneval}
    \end{minipage}
    \caption{Comparison of PRT and EFT on HumanEval.}
    \label{fig:llm_results_humaneval}
\end{figure*}

\section{Evaluation on Various NLP Tasks}
To investigate the effectiveness of PRT, we conducted comparisons using a diverse set of evaluation datasets.
In addition to GSM8k, IFEval, and HumanEval, we used the following datasets for evaluation.

\begin{itemize}
    \item \textbf{Academic / General Knowledge MCQ}
    \begin{itemize}
        \item \textbf{ARC}~\cite{clark2018think}: The AI2 Reasoning Challenge (ARC) dataset is designed to test a model's ability to answer grade-school level science questions.
        \item \textbf{MMLU}~\cite{hendrycks2020measuring}: The Massive Multitask Language Understanding (MMLU) benchmark tests a model's ability to perform a wide range of language understanding tasks.
        \item \textbf{MMLU Pro}~\cite{wang2024mmlu}: An extension of the MMLU benchmark with more challenging tasks.
    \end{itemize}

    \item \textbf{Code Generation / Programming}
    \begin{itemize}
        \item \textbf{MBPP}~\cite{austin2021program}: The Mostly Basic Programming Problems (MBPP) dataset is designed to test a model's ability to solve basic programming problems.
    \end{itemize}

    \item \textbf{Commonsense Reasoning}
    \begin{itemize}
        \item \textbf{Hellaswag}~\cite{zellers2019hellaswag}: A dataset for evaluating commonsense reasoning and natural language inference.
        \item \textbf{GPQA}~\cite{rein2023gpqa}: A dataset for evaluating general-purpose question answering capabilities of models.
    \end{itemize}

    \item \textbf{Truthfulness}
    \begin{itemize}
        \item \textbf{TruthfulQA}~\cite{lin2021truthfulqa}: A dataset for evaluating the truthfulness of answers generated by models.
    \end{itemize}
\end{itemize}

In this experiment, we compared PRT with the baseline using the Llama2 series.
Each method employs the same models used in the NLP experiments in Section~\ref{sec:experiments}, which were trained with Tulu v2.

\begin{figure*}[t!]
    \centering
    \hfill
    \begin{minipage}{0.32\textwidth}
        \centering
        \includegraphics[width=\linewidth]{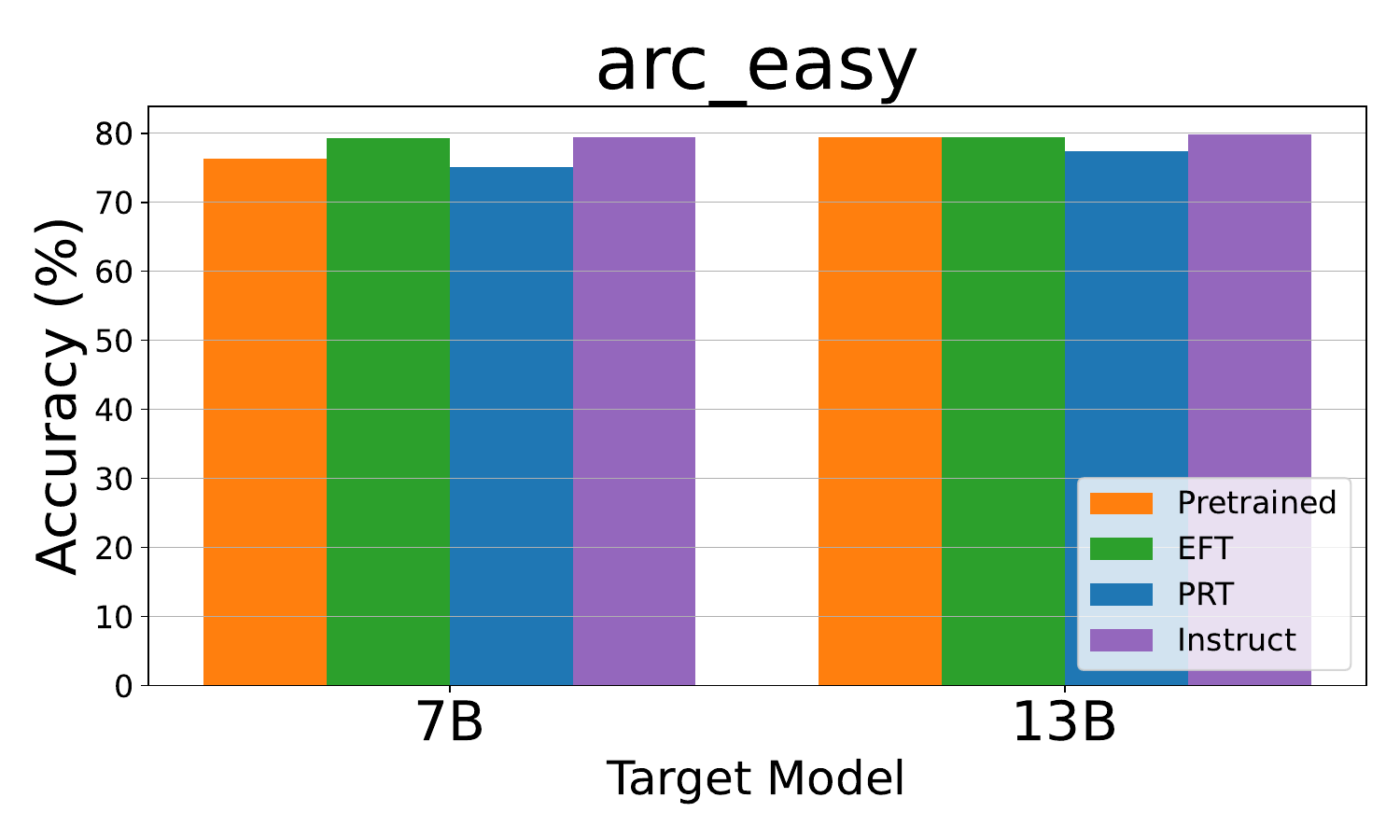}
        \subcaption{ARC Easy}
        \label{fig:llama2_arc_easy}
    \end{minipage}
    \begin{minipage}{0.32\textwidth}
        \centering
        \includegraphics[width=\linewidth]{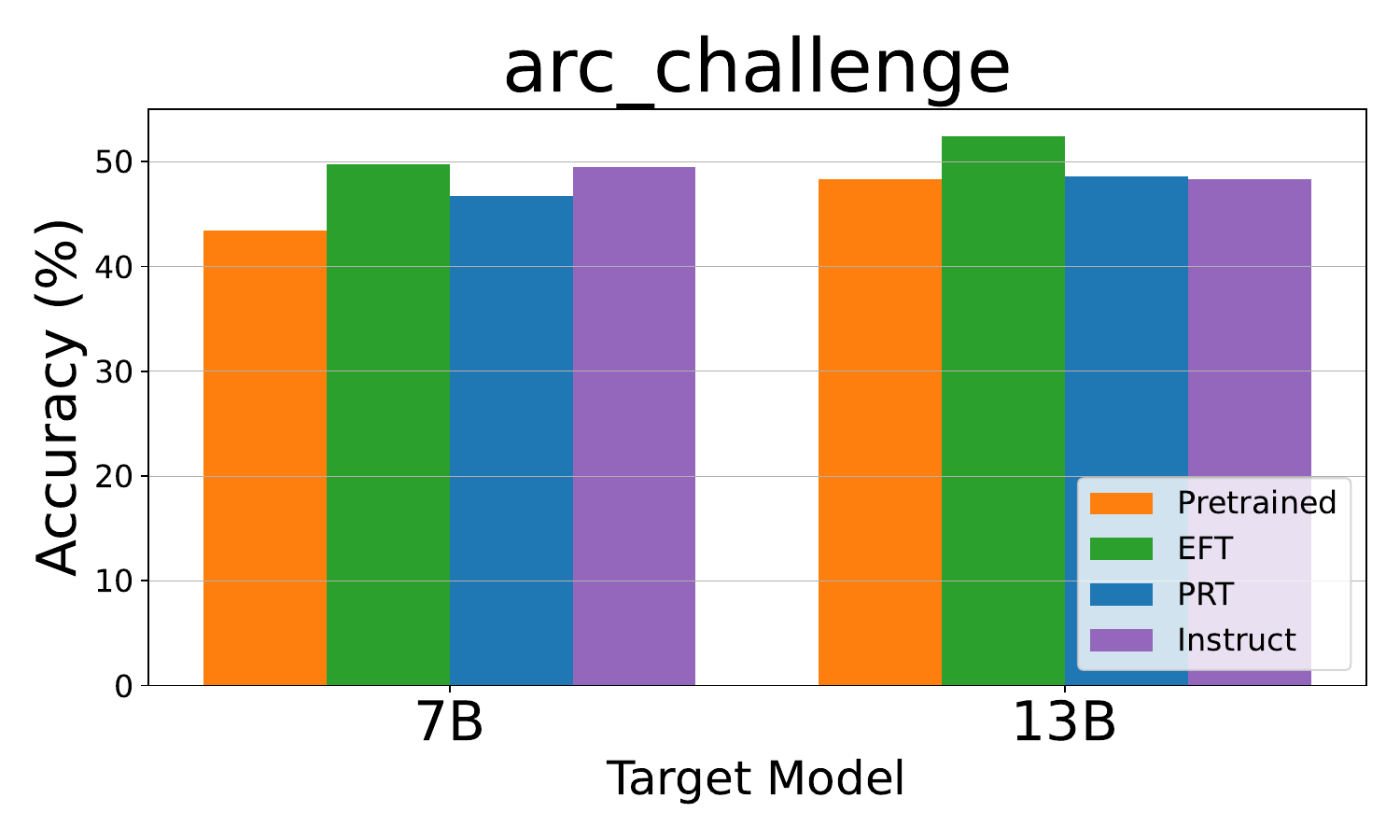}
        \subcaption{ARC Challenge}
        \label{fig:llama2_arc_challenge}
    \end{minipage}
    \hfill
    \begin{minipage}{0.32\textwidth}
        \centering
        \includegraphics[width=\linewidth]{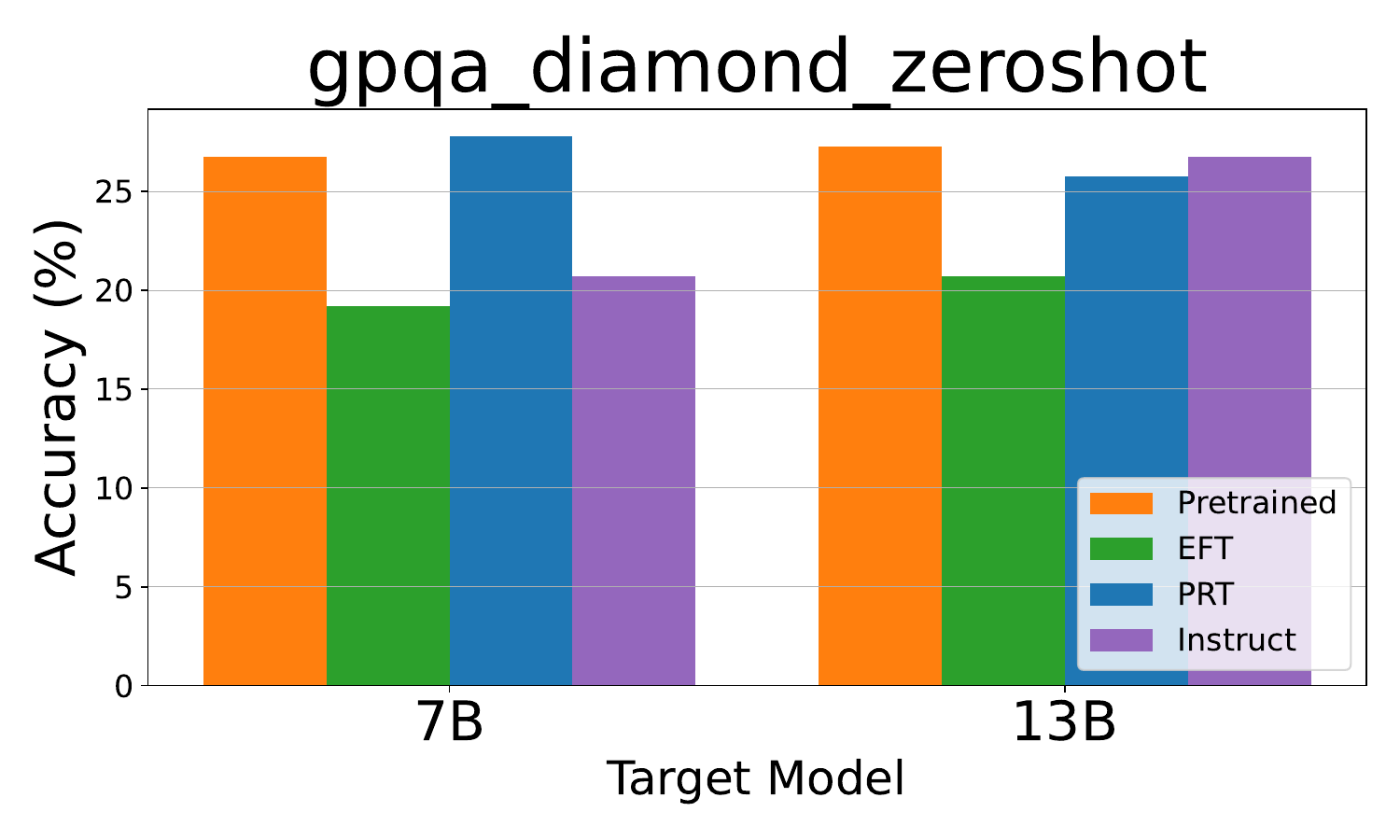}
        \subcaption{GPQA Diamond}
        \label{fig:llama2_gpqa}
    \end{minipage}
    \hfill
    \begin{minipage}{0.32\textwidth}
        \centering
        \includegraphics[width=\linewidth]{results_final/gsm8k_exact_match,flexible-extract_Llama-2-7b-hf.pdf}
        \subcaption{GSM8k}
        \label{fig:llama2_gsm8k_appendix}
    \end{minipage}
    \hfill
    \begin{minipage}{0.32\textwidth}
        \centering
        \includegraphics[width=\linewidth]{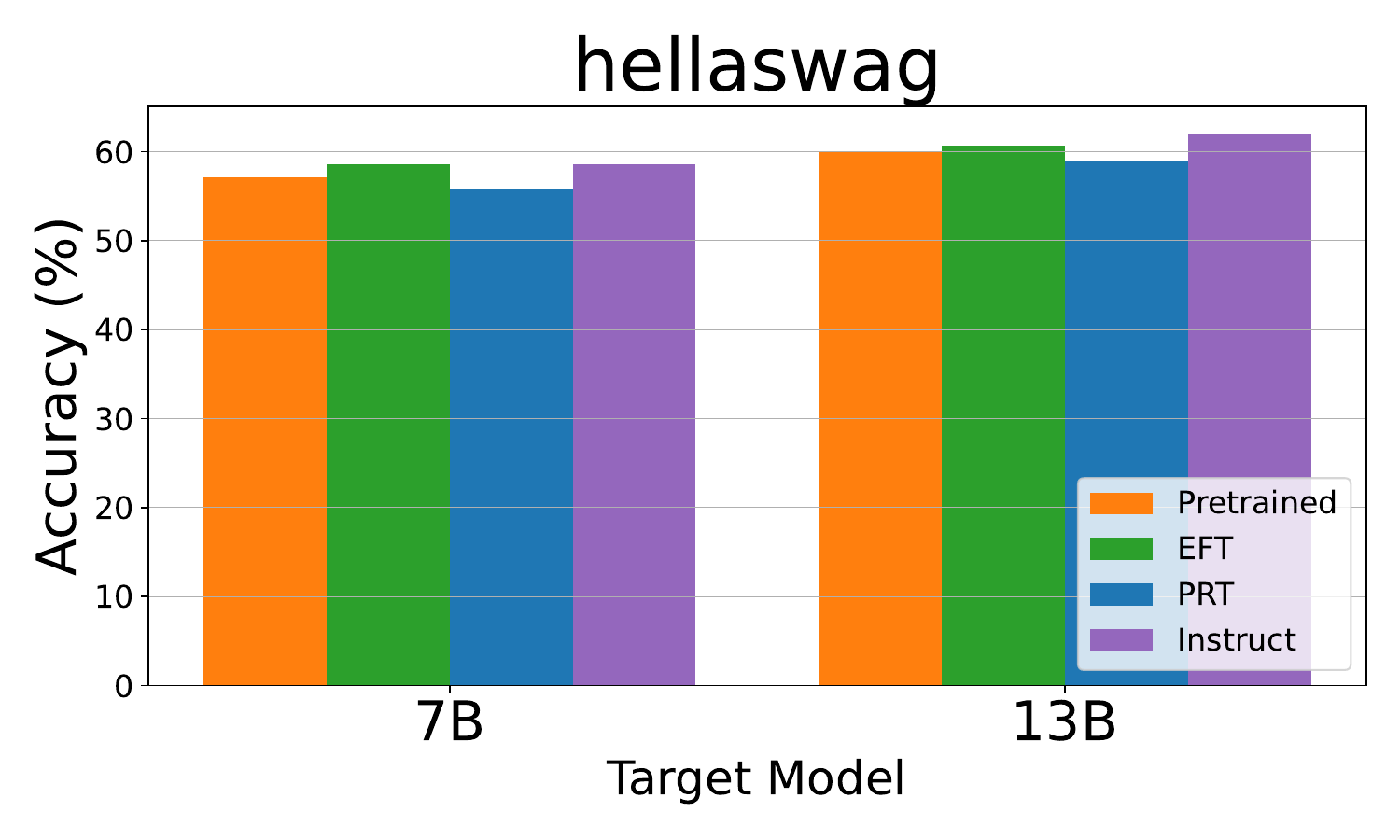}
        \subcaption{Hellaswag}
        \label{fig:llama2_hellaswag_appendix}
    \end{minipage}
    \hfill
    \begin{minipage}{0.32\textwidth}
        \centering
        \includegraphics[width=\linewidth]{results_final/humaneval_pass_at_1_Llama-2-7b-hf.pdf}
        \subcaption{HumanEval}
        \label{fig:llama2_humaneval_full}
    \end{minipage}
    \hfill
    \begin{minipage}{0.32\textwidth}
        \centering
        \includegraphics[width=\linewidth]{results_final/ifeval_average_Llama-2-7b-hf.pdf}
        \subcaption{IFEval}
        \label{fig:llama2_ifeval_appendix}
    \end{minipage}
    \hfill
    \begin{minipage}{0.32\textwidth}
        \centering
        \includegraphics[width=\linewidth]{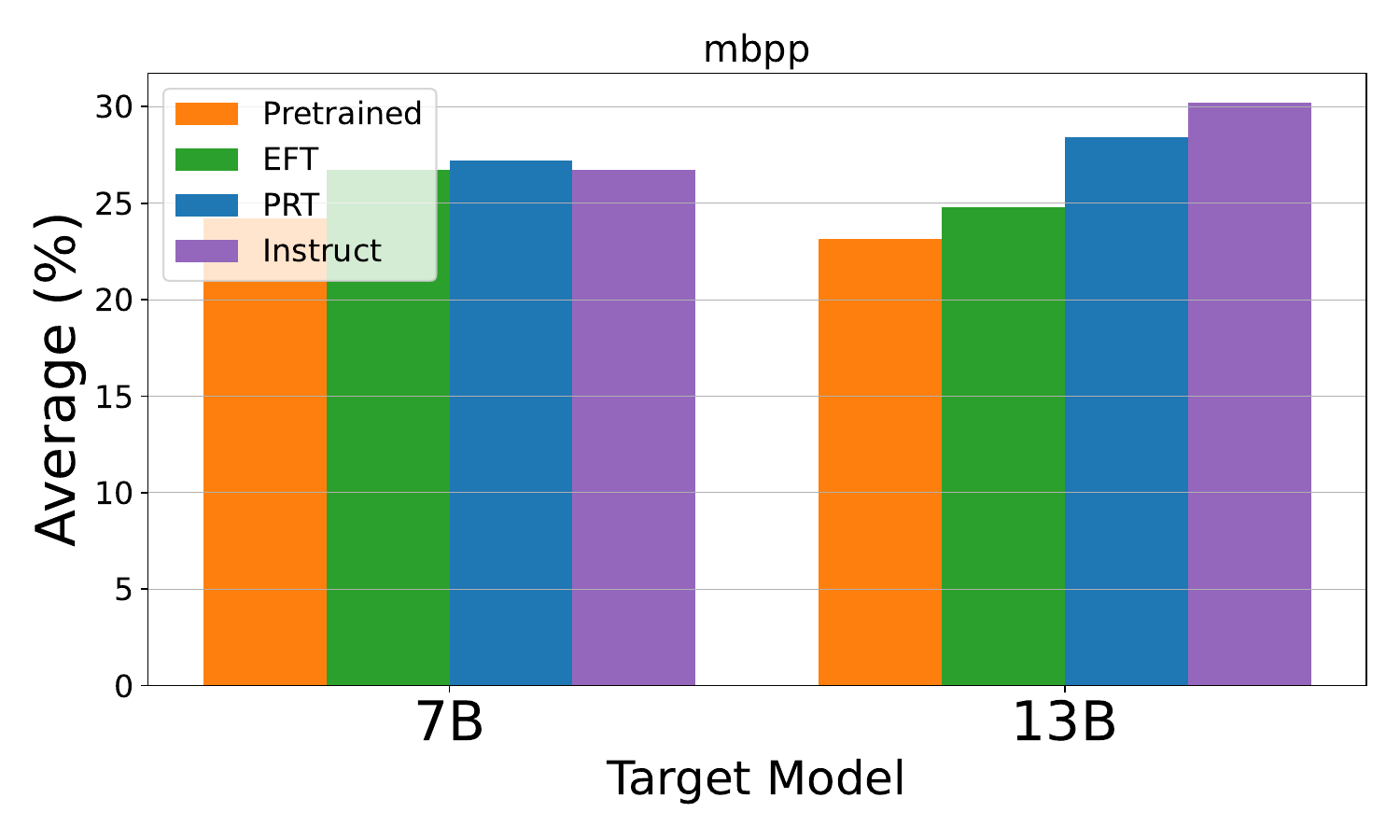}
        \subcaption{MBPP}
        \label{fig:llama2_mbpp}
    \end{minipage}
    \hfill
    \begin{minipage}{0.32\textwidth}
        \centering
        \includegraphics[width=\linewidth]{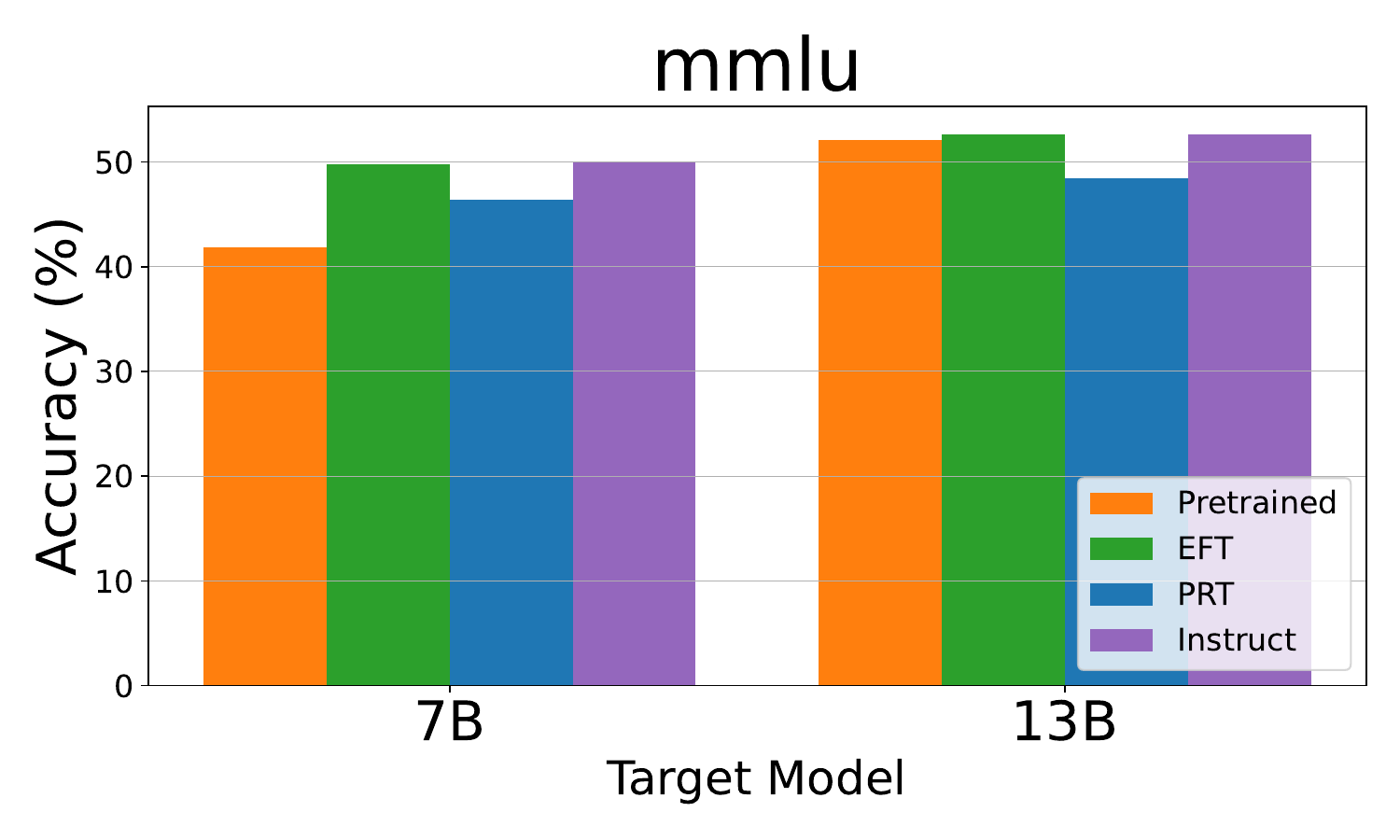}
        \subcaption{MMLU}
        \label{fig:llama2_mmlu}
    \end{minipage}
    \hfill
    \begin{minipage}{0.32\textwidth}
        \centering
        \includegraphics[width=\linewidth]{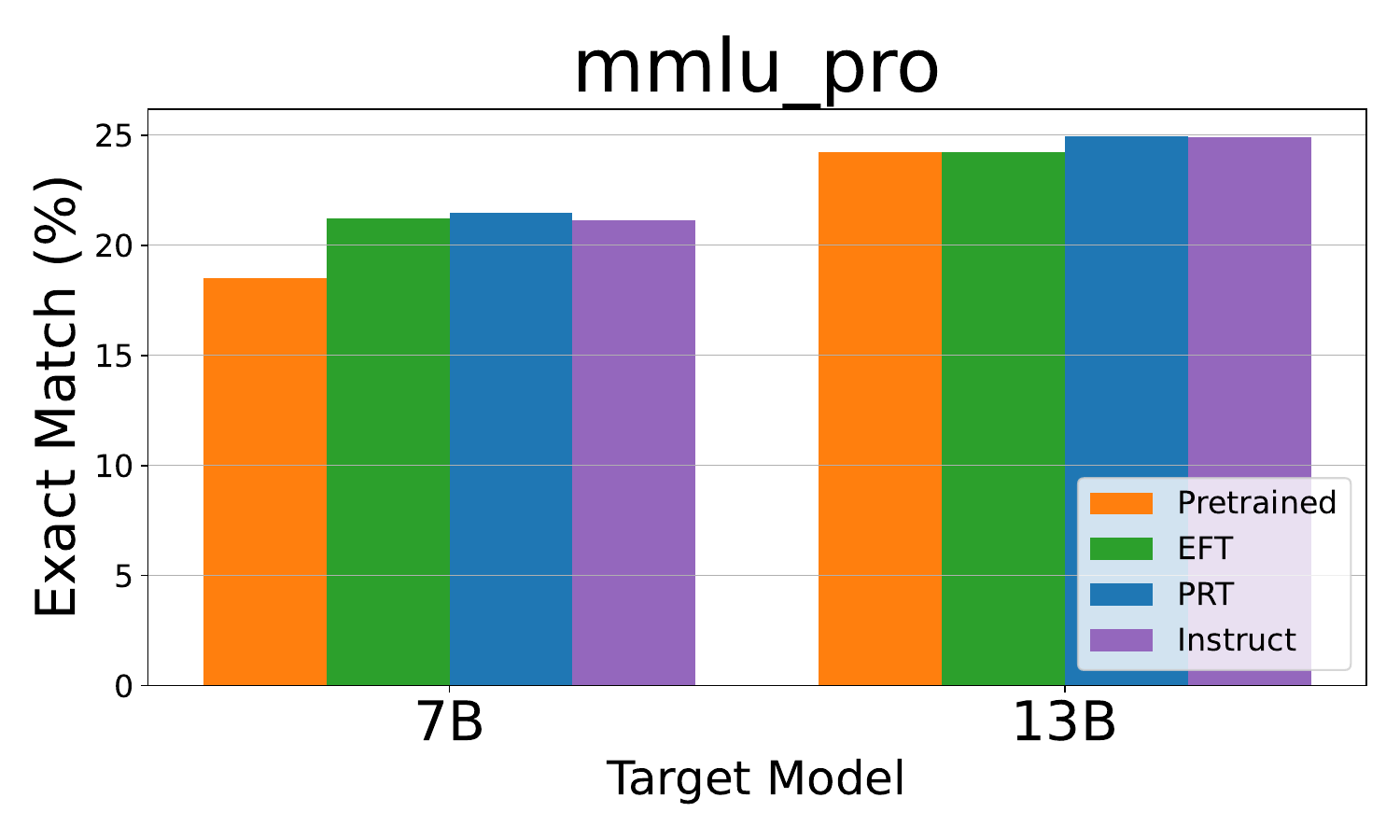}
        \subcaption{MMLU PRO}
        \label{fig:llama2_mmlu_ro}
    \end{minipage}
    \begin{minipage}{0.32\textwidth}
        \centering
        \includegraphics[width=\linewidth]{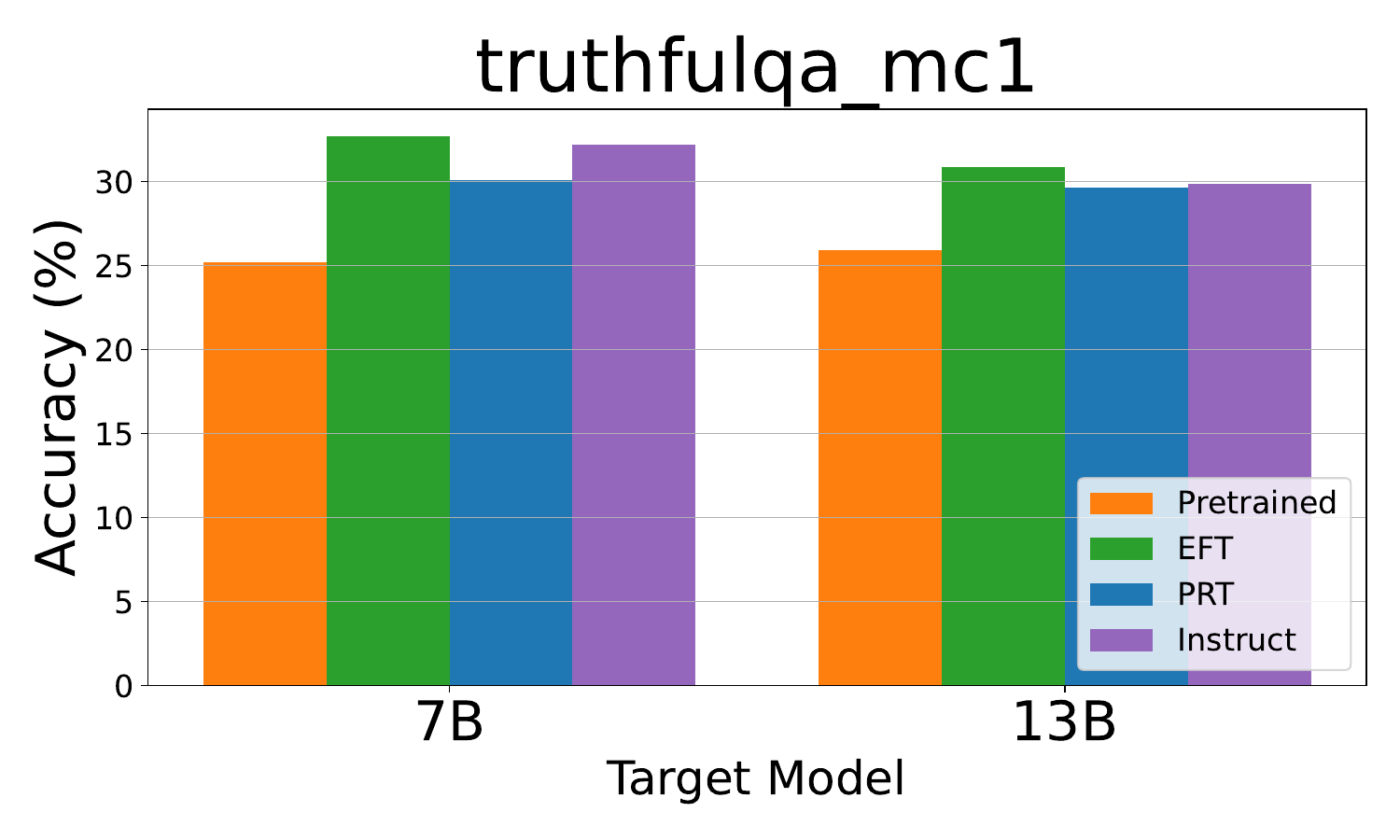}
        \subcaption{TruthfulQA}
        \label{fig:llama2_truthfulqa}
    \end{minipage}
    \caption{Evaluations of inference-time instruction-tuned models on Llama 2 Series. Each subcaption refers to the task name, and the labels in x-axis are target pretrained models. {\bf Pretrained} means the zero-shot inference by each target model as a baseline, and {\bf Instruct} means the instruct-tuned target model as an oracle result.}
    \label{fig:llm_results_llama2_appendix}
\end{figure*}

\section{Impact of Source Model Choice on PRT}
In this experiment, we compared PRT with the baseline using the Llama2 series.
In this section, we investigate the impact of source model differences on PRT by comparing GSM8K and IFEval scores when using Llama 3.2-1B and 3B, Qwen 2.5-0.5B and 1.5B, and Falcon 3-1B and 3B as source models.
All models were trained under the same conditions using the training data and hyperparameters specified in Appendix \ref{app:section:exp_setup}.
The results for each case are shown in Figures~\ref{fig:llm_results_appendix_source_model1} and Figures~\ref{fig:llm_results_appendix_source_model2}.

\begin{figure*}[!t]
    \centering
    \hfill
    \begin{minipage}{0.32\textwidth}
        \centering
        \includegraphics[width=\linewidth]{results_final/gsm8k_exact_match,flexible-extract_Llama-3.2-1B.pdf}
        \subcaption{Llama-3.2-1B}
        \label{fig:llama3.2-1B_appendix1}
    \end{minipage}
    \hfill
    \begin{minipage}{0.32\textwidth}
        \centering
        \includegraphics[width=\linewidth]{results_final/gsm8k_exact_match,flexible-extract_Qwen2.5-0.5B.pdf}
        \subcaption{Qwen2.5-0.5B}
        \label{fig:qwen2.5-0.5B_appendix1}
    \end{minipage}
    \hfill
    \begin{minipage}{0.32\textwidth}
        \centering
        \includegraphics[width=\linewidth]{results_final/gsm8k_exact_match,flexible-extract_Falcon3-1B-Base.pdf}
        \subcaption{Falcon3-1B}
        \label{fig:falcon3-1B_appendix1}
    \end{minipage}
        \hfill
    \begin{minipage}{0.32\textwidth}
        \centering
        \includegraphics[width=\linewidth]{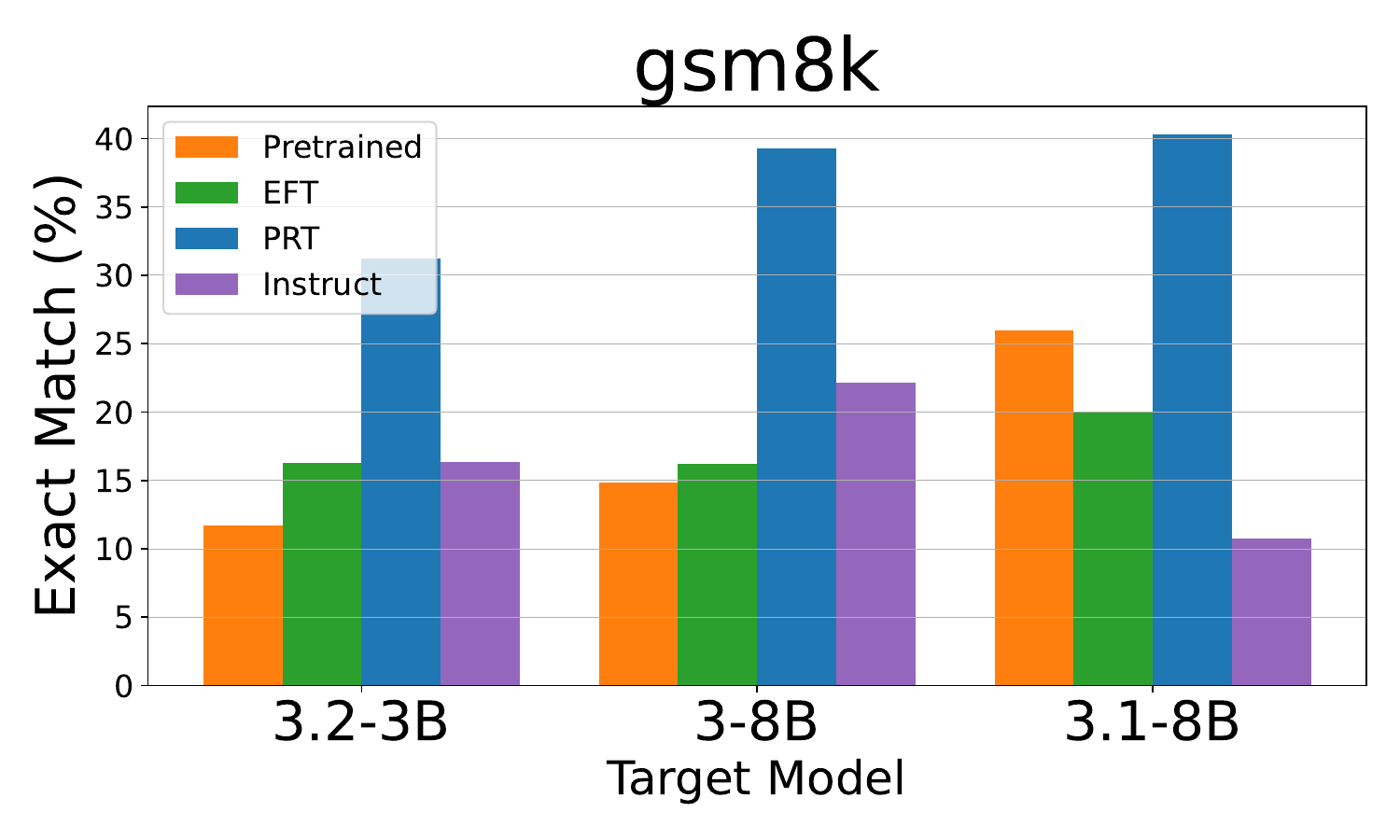}
        \subcaption{Llama-3.2-3B}
        \label{fig:llama3.2-3B_appendix1}
    \end{minipage}
    \hfill
    \begin{minipage}{0.32\textwidth}
        \centering
        \includegraphics[width=\linewidth]{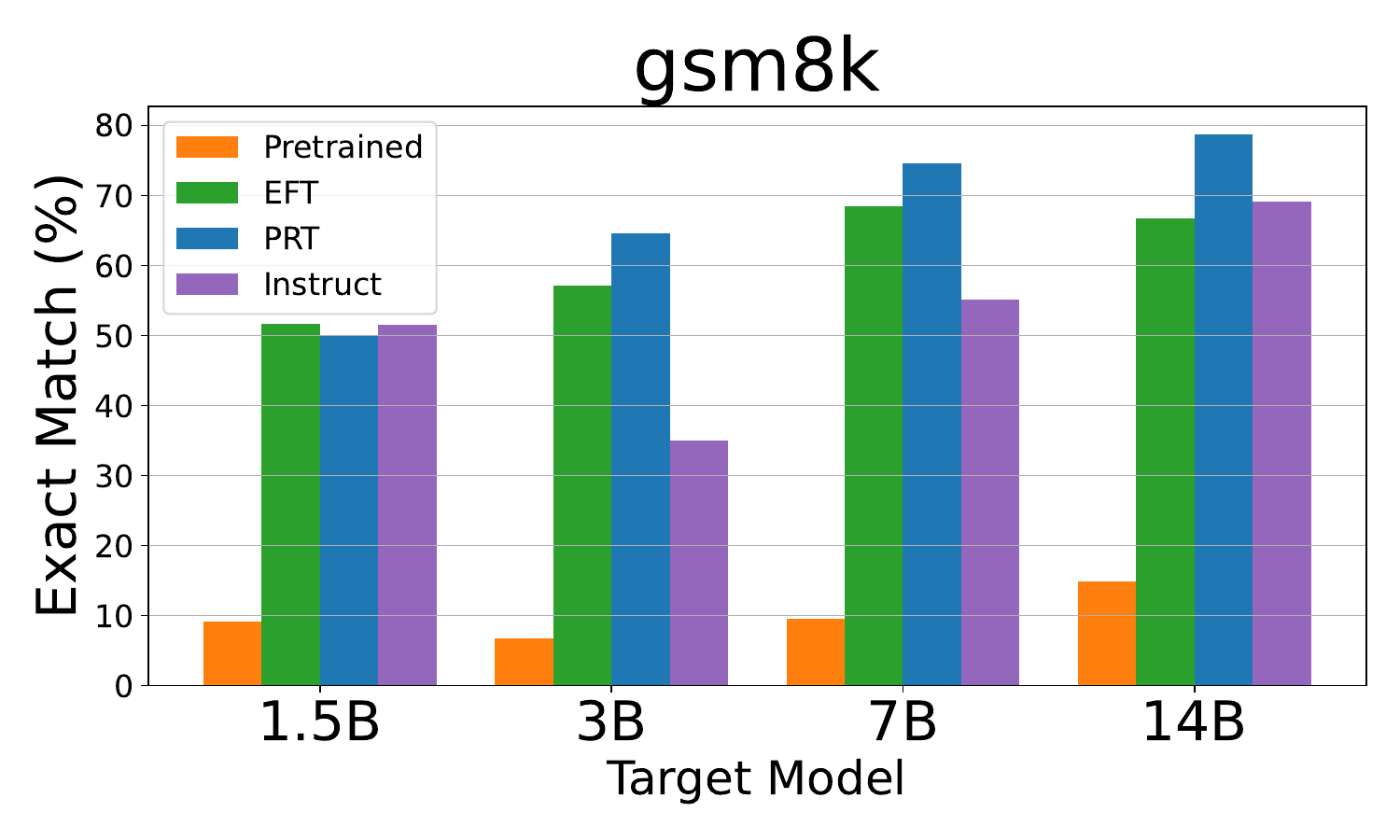}
        \subcaption{Qwen2.5-1.5B}
        \label{fig:qwen2.5-1.5B_appendix1}
    \end{minipage}
    \hfill
    \begin{minipage}{0.32\textwidth}
        \centering
        \includegraphics[width=\linewidth]{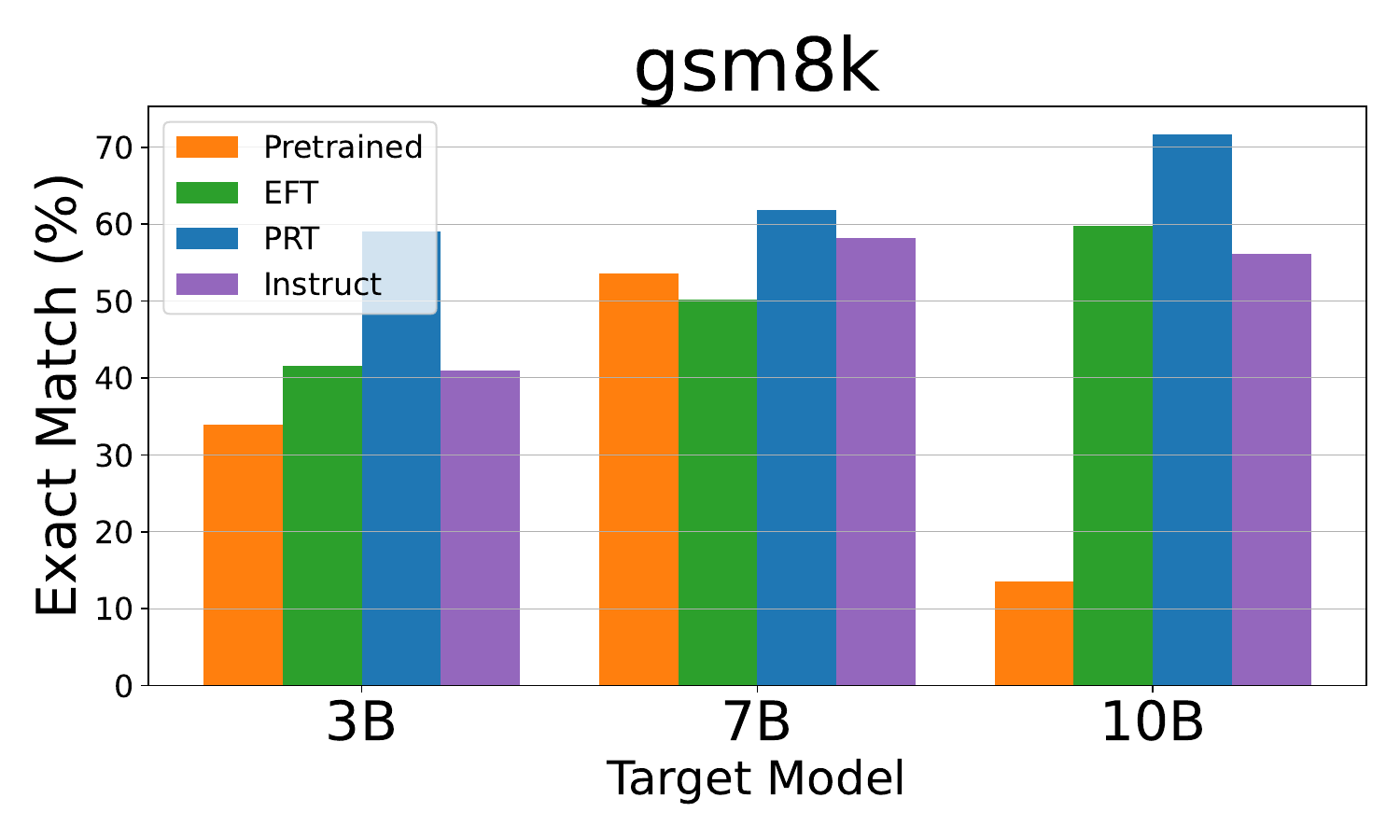}
        \subcaption{Falcon3-3B}
        \label{fig:falcon3-3B_appendix1}
    \end{minipage}
    \caption{Evaluations of inference-time instruction-tuned models on the GSM8k benchmark. Each subcaption refers to the source pretrained model, and the labels in x-axis are target pretrained models. {\bf Pretrained} means the zero-shot inference by each target model as a baseline, and {\bf Instruct} means the instruct-tuned target model as an oracle result.}
    \label{fig:llm_results_appendix_source_model1}
\end{figure*}

\begin{figure*}[!t]
    \centering
    \hfill
    \begin{minipage}{0.32\textwidth}
        \centering
        \includegraphics[width=\linewidth]{results_final/ifeval_average_Llama-3.2-1B.pdf}
        \subcaption{Llama-3.2-1B}
        \label{fig:llama3.2-1B_appendix2}
    \end{minipage}
    \hfill
    \begin{minipage}{0.32\textwidth}
        \centering
        \includegraphics[width=\linewidth]{results_final/ifeval_average_Qwen2.5-0.5B.pdf}
        \subcaption{Qwen2.5-0.5B}
        \label{fig:qwen2.5-0.5B_appendix2}
    \end{minipage}
    \hfill
    \begin{minipage}{0.32\textwidth}
        \centering
        \includegraphics[width=\linewidth]{results_final/ifeval_average_Falcon3-1B-Base.pdf}
        \subcaption{Falcon3-1B}
        \label{fig:falcon3-1B_appendix2}
    \end{minipage}
        \hfill
    \begin{minipage}{0.32\textwidth}
        \centering
        \includegraphics[width=\linewidth]{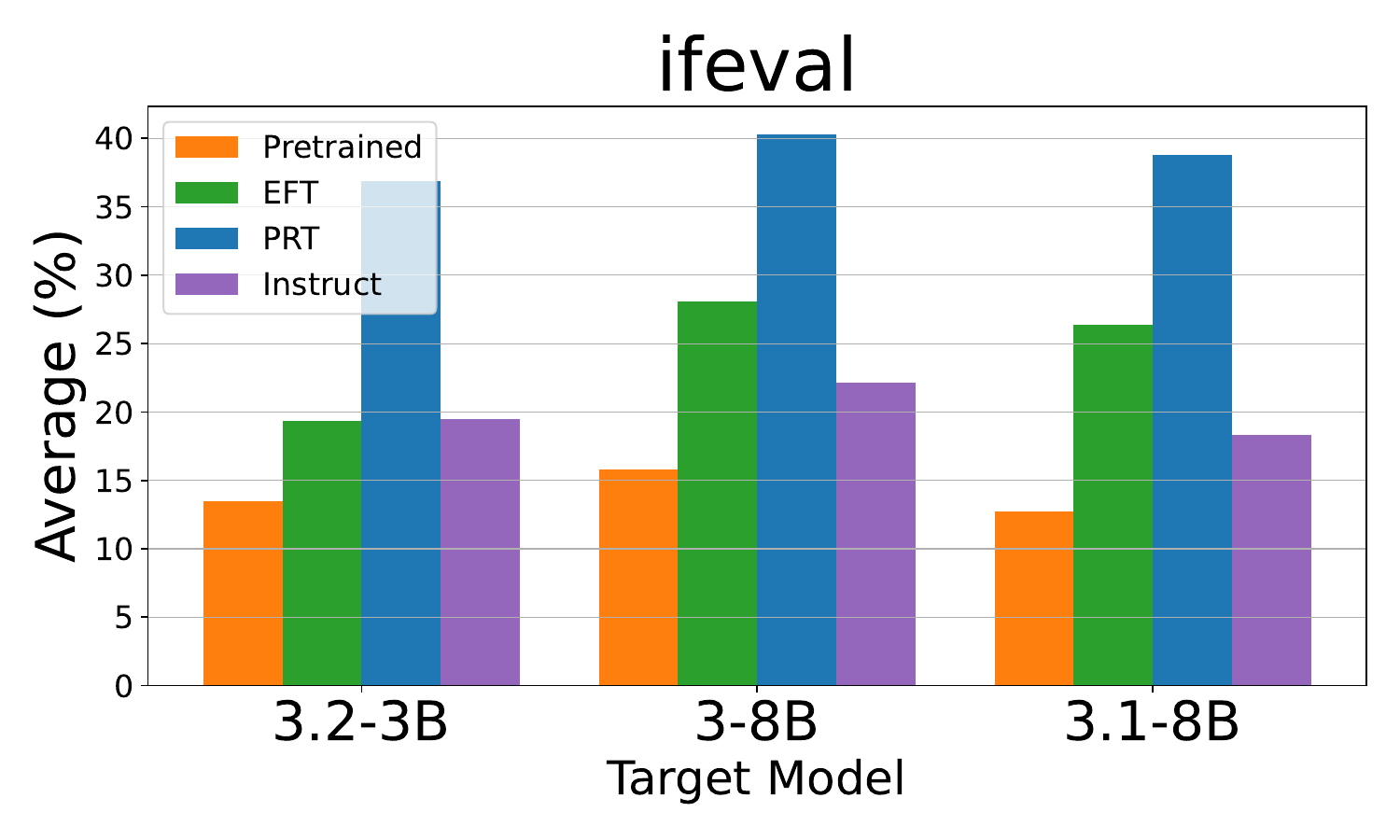}
        \subcaption{Llama-3.2-3B}
        \label{fig:llama3.2-3B_appendix2}
    \end{minipage}
    \hfill
    \begin{minipage}{0.32\textwidth}
        \centering
        \includegraphics[width=\linewidth]{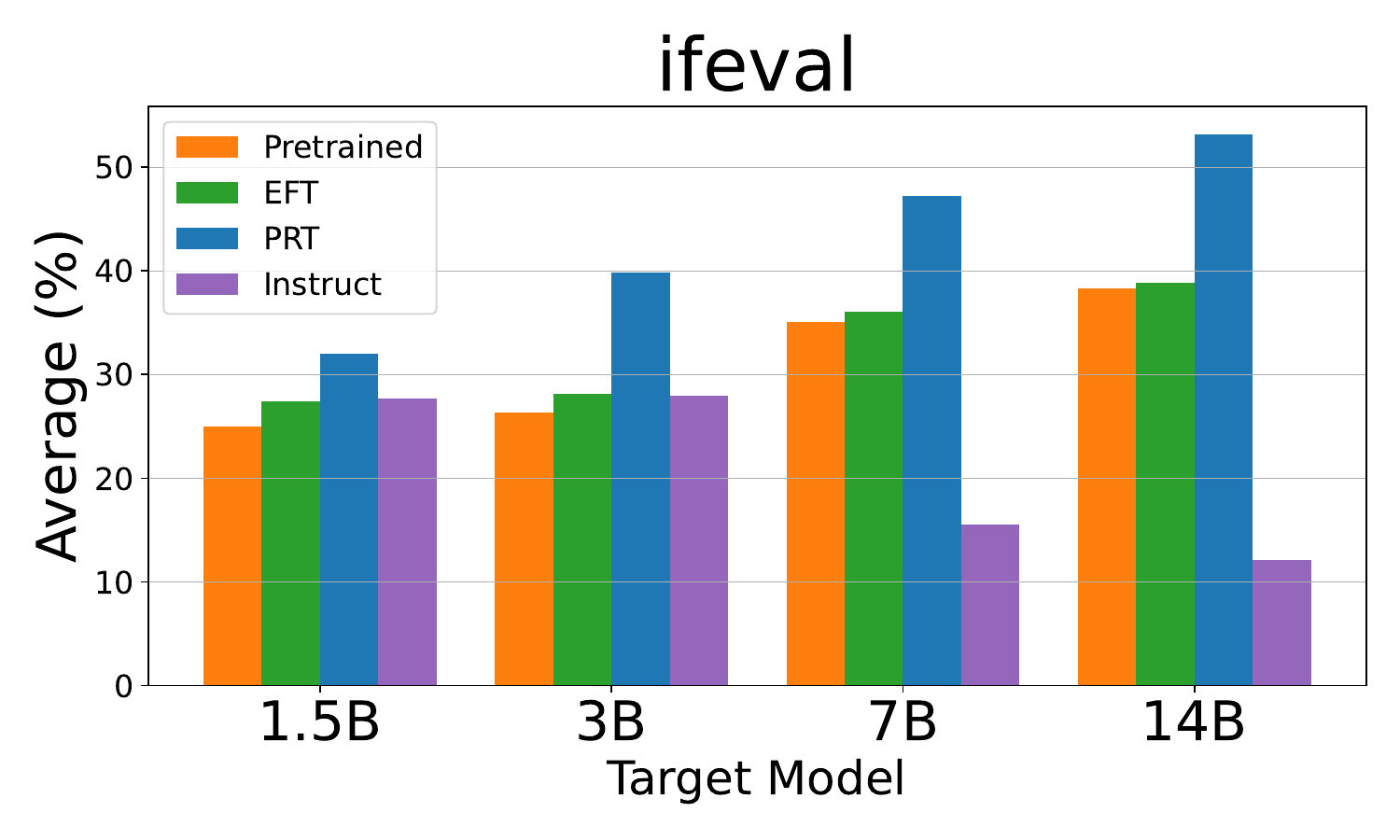}
        \subcaption{Qwen2.5-1.5B}
        \label{fig:qwen2.5-1.5B_appendix2}
    \end{minipage}
    \hfill
    \begin{minipage}{0.32\textwidth}
        \centering
        \includegraphics[width=\linewidth]{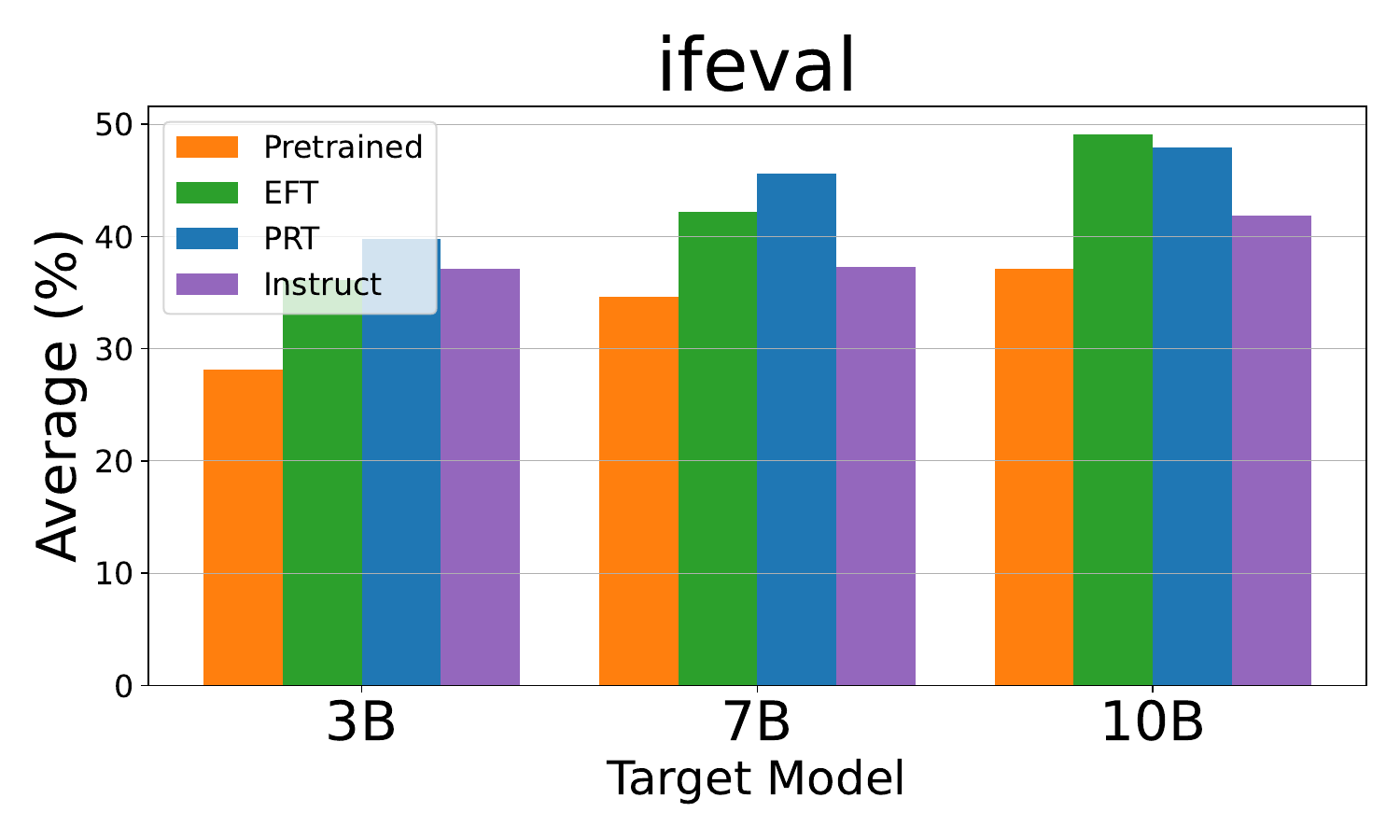}
        \subcaption{Falcon3-3B}
        \label{fig:falcon3-3B_appendix2}
    \end{minipage}

    \caption{Evaluations of inference-time instruction-tuned models on the IFEval benchmark. Each subcaption refers to the source pretrained model, and the labels in x-axis are target pretrained models. {\bf Pretrained} means the zero-shot inference by each target model as a baseline, and {\bf Instruct} means the instruct-tuned target model as an oracle result.}
    \label{fig:llm_results_appendix_source_model2}
\end{figure*}

\end{document}